\newtheorem{condition}{{ Condition}}
\newtheorem{lemma}{\sc Lemma}
\newtheorem{theorem}{{\sc Theorem}}
\newtheorem{assumption}{{\sc Assumption}}
\newtheorem{proposition}{{\sc Proposition}}
\newtheorem{definition}{{\sc Definition}}
\newcommand{\ba}{\begin{eqnarray}}
	\newcommand{\ea}{\end{eqnarray}}
\newcommand{\bas}{\begin{eqnarray*}}
	\newcommand{\eas}{\end{eqnarray*}}
\newcommand{\ben}{\begin{enumerate}}
	\newcommand{\een}{\end{enumerate}}
\newcommand{\bit}{\begin{itemize}}
	\newcommand{\eit}{\end{itemize}}
\newcommand{\e}{ { \mathbb{E}}}
\newcommand{\bs}{ {\bf s}}
\title{Conformal Prediction Beyond the Horizon: Distribution-Free Inference for Policy Evaluation}
\author{
 Feichen Gan, Youcun Lu, Yingying Zhang\textsuperscript{*}
     and Yukun Liu\\
  KLATASDS - MOE, School of Statistics, East China Normal University\\
  %Shanghai 200062, China \\ 
}
\begin{document}

\maketitle
\footnotetext[1]{\textsuperscript{*}Corresponding author: yyzhang@fem.ecnu.edu.cn.}

% \footnotetext[1]{\textsuperscript{*}Yingying Zhang and Yukun Liu are co-corresponding authors. Emails: yyzhang@fem.ecnu.edu.cn, ykliu@sfs.ecnu.edu.cn.}
% Emails: yyzhang@fem.ecnu.edu.cn, ykliu@sfs.ecnu.edu.cn.

\begin{abstract}
Reliable uncertainty quantification is crucial for reinforcement learning (RL) in high-stakes settings. We propose a unified conformal prediction framework for infinite-horizon policy evaluation that constructs distribution-free prediction intervals {for returns} in both on-policy and off-policy settings. Our method integrates distributional RL with conformal calibration, addressing challenges such as unobserved returns, temporal dependencies, and distributional shifts. We propose a modular pseudo-return construction based on truncated rollouts and a time-aware calibration strategy using experience replay and weighted subsampling. These innovations mitigate model bias and restore approximate exchangeability, enabling uncertainty quantification even under policy shifts. Our theoretical analysis provides coverage guarantees that account for model misspecification and importance weight estimation. Empirical results, including experiments in synthetic and benchmark environments like Mountain Car, show that our method significantly improves coverage and reliability over standard distributional RL baselines.	
\end{abstract}

\section{Introduction}

\paragraph{Motivation.} As reinforcement learning (RL) are increasingly  deployed in high-stakes domains, such as healthcare, robotics, and autonomous systems, robust uncertainty quantification becomes essential. While traditional policy evaluation methods focus on estimating the expected return, this is insufficient when decisions must account for risk, reliability, and rare outcomes. For example, in clinical decision-making, a treatment policy may appear beneficial on average but hide adverse effects for specific patient subgroups. Even in less safety-sensitive applications such as recommendation systems or finance, overlooking uncertainty can lead to unstable behavior and degraded user experience. Prediction intervals (PIs) for returns offer a principled way to quantify uncertainty, enabling risk-aware planning and safer deployments.

This paper focuses on constructing valid PIs for \textbf{infinite-horizon} RL settings, where the return is defined as the sum of discounted rewards. In on-policy settings, PIs help assess the variability of returns under the current policy, enabling more robust policy improvement and risk-sensitive exploration. In off-policy scenarios, where evaluating a new policy offline based on
an observational dataset, PIs serve to gauge the reliability of point estimation from historical data. By constructing PIs for the return, our approach improves the transparency, reliability, and robustness of RL systems across a wide range of domains.

\paragraph{Challenges.} Constructing valid PIs for returns in RL is closely tied to estimating the full return distribution, as studied in Distributional RL (DRL). In principle, conditional quantiles from this distribution can be used to form PIs.  However, existing DRL-based approaches often suffer from model misspecification, leading to biased or inconsistent return distribution estimates and a lack of formal statistical guarantees.  %To address this, we build on the framework of conformal prediction, which provides a flexible, model-agnostic method for constructing PIs with finite-sample coverage guarantees. 
To address this, {building on the framework of conformal prediction, we propose  a flexible, model-agnostic methodology}  for constructing PIs with  asymptotic  coverage guarantees. Applying conformal prediction to the  infinite-horizon RL setting requires substantial methodological innovation, as it poses several fundamental challenges: 
\begin{itemize}[leftmargin=*]
    \item \textbf{Unobserved Returns.}  In infinite-horizon RL, the return cannot be directly observed, since in practice only finite-horizon trajectories (of length $T$) are available and future rewards beyond $T$ are unobserved. Although  mitigated by discounting,  the truncation error  remains non-negligible in offline settings when $T$ is moderate, making it challenging to evaluate prediction errors or calibrate uncertainty. 
    \item  \textbf{Temporal Dependence.} RL data are inherently sequential, violating the exchangeability assumption required by standard conformal prediction methods. 
    \item \textbf{Distribution Shifts.} In on-policy setting, discrepancies over time lead to  complex covariate shift in the state distribution. In off-policy evaluation, discrepancies between the behavior policy and the target policy also lead to covariate shift in the state-action distribution. 
    %\item \textbf{Model Misspecification.} Return distributions estimated by distributional RL methods can be biased or inconsistent, undermining the reliability of naive quantile-based intervals.{\red{[Delete this item]}}
\end{itemize}

\paragraph{Contributions.} We propose a novel, distribution-free method that integrates conformal prediction with distributional RL to construct prediction intervals for infinite-horizon returns under both on-policy and off-policy settings. Our contributions are as follows: \emph{(1) Pseudo-Return Construction.} We develop a modular approximation scheme for unobserved returns, combining truncated rollouts with tail sampling from learned return distributions. This design is inspired by temporal-difference learning and enables calibration despite partial observability. \emph{(2) Calibration via Experience Replay.} To mitigate temporal dependence and approximate exchangeability, we adopt experience replay and apply random subsampling to the calibration set. This design recovers approximate exchangeability, enabling valid conformal calibration. \emph{(3) Time-Aware Weighted Subsampling.} We address distribution shifts both over time and between policies, using a simple, weighted subsampling scheme. This enables valid calibration in off-policy settings and improves efficiency in on-policy scenarios. \emph{(4) Theoretical Guarantees.} We establish asymptotic lower bounds on coverage using Wasserstein metrics, characterizing how model bias and density ratio estimation affect conformal validity. \emph{(5) Empirical Validation.} We demonstrate the effectiveness of our method through empirical studies on synthetic and the Mountain Car environments.

Together, these contributions extend conformal prediction to the infinite-horizon RL setting and offer a scalable, practical framework for uncertainty-aware policy evaluation.

\subsection{Related Work}

\paragraph{Risk-aware RL.} RL is a framework in which an agent interacts with an unknown environment to maximize its expected total reward. Due to the intrinsic randomness of the environment, even policies with high expected returns may occasionally yield very low rewards, which can be problematic in risk-sensitive applications such as healthcare \citep{liu2020reinforcement} or competitive games \citep{mnih2013playing}. For instance, in clinical decision-making, patient responses to treatments are stochastic, making it desirable to select actions that achieve high effectiveness while minimizing the likelihood of adverse effects. To address these concerns, risk-aware RL aims to learn policies that reduce the probability of low total rewards \citep{howard1972risk}, using a variety of risk measures including entropic or exponential utility \citep{fei2021risk,moharrami2025policy}, conditional value-at-risk \citep{rockafellar2000optimization,chow2015risk}, and coherent risk measures \citep{lam2022risk}.  

In parallel, safe RL and constrained Markov Decision Processes (MDPs) offer an alternative approach to managing uncertainty; a comprehensive survey of safe RL is provided in \cite{gu2024review}. Unlike risk-aware MDPs, these methods do not modify the optimality criteria; instead, risk aversion is enforced through constraints on rewards or risks \citep{chow2018risk}. While both risk-aware and safe RL approaches incorporate risk considerations into policy learning, they primarily focus on modeling risk preferences and generally do not provide formal uncertainty quantification for PIs.

\textbf{Distributional RL.} Distributional RL focuses on modeling the full return distribution rather than just its expectation. Pioneering work by \cite{bellemare2017distributional} introduces this paradigm, followed by quantile-based approaches such as Quantile Temporal Difference (QTD) learning \citep{dabney2018distributional, rowland2024analysis}, which approximates return distributions via quantile regression. These methods have led to practical advances in robotics, control, and decision-making under uncertainty \citep{bellemare2020autonomous, bodnar2019quantile, fawzi2022discovering, yang2019fully}. However, most DRL methods provide pointwise quantile estimates and lack formal statistical coverage guarantees, especially under model misspecification.

%In contrast, our framework builds on DRL to estimate return distributions but enhances it with conformal prediction to construct PIs with  asymptotic coverage  guarantees, even in infinite-horizon settings.

By integrating conformal prediction with DRL-based distribution estimation, our framework ensures asymptotic coverage for predictive intervals, even in challenging infinite-horizon settings. 

\textbf{Conformal Prediction for RL.} Conformal prediction offers distribution-free confidence intervals under exchangeable data \citep{vovk2005}. Extending it to RL is challenging due to the inherent temporal dependencies and evolving state distributions. Recent efforts have attempted to bridge this gap.  Early work such as \cite{dietterich2022conformal} applies conformal prediction to construct trajectory-level prediction intervals in finite-horizon MDPs. Building on this idea, \cite{foffano2023conformal} develop a weighted conformal prediction method for off-policy evaluation, using importance sampling weights to correct for distributional shifts between behavior and target policies. However, this approach suffers from the curse of horizon, as the importance weights accumulate multiplicatively over time, resulting in high variance in long-horizon settings. In parallel, \cite{zhang2023conformal} introduce the COPP algorithm for contextual bandits, which approximates exchangeability via pseudo-policies and trajectory subsampling; yet, its applicability is largely limited to short-horizon problems with finite discrete action spaces. \cite{zheng2024conformal} further analyze how temporal correlations in Markovian data affect the coverage and width of split conformal intervals. Finally, we note a growing line of work that applies adaptive conformal prediction to online safe RL settings \citep{sheng2024safe,zhou2025computationally}, which differs fundamentally from our  setting. 

 Despite these advances, existing methods largely focus on finite-horizon scenarios or on settings with limited state or action spaces. Prior conformal RL approaches typically handle distribution shifts between behavior and target policies using trajectory-level importance weighting, which becomes computationally inefficient as the trajectory horizon grows. In contrast, our work is the first to tackle infinite-horizon off-policy prediction in general RL settings with arbitrary state and action spaces using conformal prediction. By constructing stepwise pseudo-returns and leveraging experience replay, our method scales conformal prediction to infinite-horizon settings with standard RL data and remains effective even when only partial trajectory fragments are available.

\section{Problem Formulation}

We consider the standard RL framework \citep{bellemare2017distributional,kallus2020double,shi2022statistical}, where the environment is modeled as a  time-homogeneous MDP, as specified in the assumptions provided in the supplementary material.  Our goal is to construct distribution-free PIs for the return of a given policy in infinite-horizon settings under both on-policy and off-policy scenarios.

\paragraph{Data and Setup.} Let $\mathcal{D}=\{\zeta_{i}\}_{i=1}^{N}$ be a dataset of $N$ trajectories, each consisting of $T$ time steps. For simplicity, we assume trajectories have uniform length, but our method naturally extends to variable-length settings. Each trajectory $\zeta_i=\{(S_{it},A_{it},R_{it})\}_{t=0}^{T-1}$ consists of the state $S_{it}$, the action $A_{it}$ and the immediate reward $R_{it}$. These transitions are generated by a \textbf{behavior policy} $\pi_{b}$, such that $A_{it}\sim \pi_{b}(\cdot \mid S_{it})$ and evolve under a transition kernel $\mathcal{P}$ with $(R_{it}, S_{i,t+1}) \sim \mathcal{P}(\cdot \mid S_{it}, A_{it})$. In healthcare applications, each trajectory corresponds to a patient, with \(S_{it}\) representing clinical features, \(A_{it}\) the administered treatment, and \(R_{it}\) the resulting clinical response.

\paragraph{Objective.} Let $\pi$ be a \textbf{target policy} of interest. The return starting from the state $s$ is defined as 
$
G^{\pi}(s) = \sum_{t=0}^{\infty} \gamma^t R_t,
$ 
where $R_t$ is the reward at time $t$ under policy $\pi$ and $\gamma\in(0,1)$ is the discount factor. This return captures the long-term outcome of following policy $\pi$ from state $s$. Given a new test state $S_{\rm test}$, we aim to construct a prediction interval for $G^{\pi}(S_{\rm test})$ that achieves a user-specified coverage level $1-\alpha$. That is, we seek a set $C(S_{\rm test})$ such that: 
\begin{equation*}
\Pr(G^{\pi}(S_{\rm test})\in C(S_{\rm test}))\ge 1-\alpha.
\end{equation*}
In healthcare applications, \(G^{\pi}(S_{\rm test})\) represents the long-term treatment effect for a new patient under policy \(\pi\). The prediction interval thus provides a principled range of plausible outcomes for the patient, enabling informed decision-making before the policy is actually deployed in practice. In this paper, we consider two settings:
\begin{enumerate}[leftmargin=*]
    \item \textbf{On-Policy Setting:} the target policy $\pi$ is the same as the behavior policy $\pi_{b}$. This setting enables evaluation using in-distribution transitions, but still faces the challenges of infinite horizon and unobserved returns.
    \item \textbf{Off-Policy Setting:} the target policy $\pi$ differs from $\pi_b$. In this case, the data distribution differs from that under the  target policy, and appropriate corrections for distribution shift are necessary.
\end{enumerate}

\paragraph{Preliminaries of DRL.} The goal of DRL is to learn the distribution of returns $G^{\pi}(s)$ for each state $s$. Let $\eta^{\pi}(s)$ denote the the probability distribution of the random return. Numerous DRL methods exist for both on-policy and off-policy settings \citep{bellemare2023distributional}. In this paper, we adopt quantile temporal difference (QTD) learning for experiments, a prominent approach within DRL. QTD seeks to approximate the return distribution by
\(
\eta^{\pi}(s) \approx \frac1m  \sum_{i=1}^m  \delta_{\theta(s,i)}, 
%\label{QTD_aprox_dist}
\)
which is an equally-weighted mixture of Dirac deltas at locations $\theta(s,i)$. The aim is to have these particles approximate the $\tau_i=(2i-1)/(2m)$-th quantiles of $\eta^{\pi}(s)$ for $i=1,\ldots,m$. Like other temporal-difference methods, QTD updates its parameters $\{(\theta(s,i))_{i=1}^{m}\}$ using observed transitions $(S_{it},R_{it},S_{i,t+1})$. In continuous and high-dimensional state spaces, function approximation offers a powerful approach for modeling $\{(\theta(s,i))_{i=1}^{m}\}$ and generalizing across states.

\textbf{Limitations of DRL.} A naive approach to constructing PIs would be to take the empirical quantiles of $\eta^{\pi}(s)$, i.e. using $[\theta(s, L), \theta(s, U)]$, where $L =  \lfloor (m\alpha+1)/2\rfloor$ and $U = m+1-L$. However, such DRL-based quantile intervals, referred to as \textbf{DRL-QR}, can be unreliable in finite-sample settings and do not come with formal guarantees of asymptotic validity. For instance, \cite{bellemare2023distributional} show that the QTD algorithm converges to a limiting distribution in finite state and action spaces; yet this limiting distribution is not guaranteed to match the true return distribution, and thus the convergence provides no assurance that QTD-based prediction intervals are asymptotically valid. In continuous state and action spaces, distributional RL methods must rely on function approximation to estimate return distributions. The theoretical guarantees of these approaches consequently depend critically on the accuracy of the modeling assumptions, rendering them susceptible to potential model misspecification.  To address these limitations, we develop a conformal prediction framework that \textbf{wraps} around any return distribution estimator (such as QTD), correcting for model bias and enabling finite-sample statistical guarantees.

\section{Conformal Policy Prediction Beyond the Horizon}

We propose a novel conformal prediction (CP) framework that addresses the unique challenges of uncertainty quantification in infinite-horizon RL. Our approach combines three key innovations: (1) pseudo-returns that blend finite rollouts with learned distributional tails, (2) time-aware calibration addressing both temporal dependence and distribution shifts, and (3) replay-based weighted subsampling to restore exchangeability. 

\subsection{Overview of the Conformal Framework}

Our method follows the split conformal prediction paradigm, adapted to the RL setting. Given a dataset of transition tuples $\{(S_{it}, A_{it}, R_{it}, S_{i,t+1})\}$, we partition it into a \textbf{training set} $\mathcal{D}_{\text{tr}}$ , used to fit a predictive model for the return distribution, and a \textbf{calibration set} $\mathcal{D}_{\text{cal}}$, used to quantify predictive uncertainty. The overall pipeline consists of four key steps illustrated in Figure \ref{on_policy_flow_chart}:
\begin{enumerate}[leftmargin=*]
    \item Train a DRL model, such as QTD learning, on $\mathcal{D}_{\text{tr}}$ to construct a  return distribution  estimate $\hat{\eta}^\pi(s)$ and a value function  estimate $\hat{v}^\pi(s)$ under the target policy $\pi$.
    \item For each calibration state, construct \emph{pseudo-returns} by combining observed rewards with samples drawn from the estimated return distribution. The procedure for generating pseudo-returns is detailed in Section~\ref{sec:pseudo-return}.
    \item Compute nonconformity scores using the pseudo-returns in the calibration set, typically using the absolute deviation from the estimated value function: $V(s)=|\widetilde{G}^{\pi}(s)-\widehat{v}^{\pi}(s)|$, where $\widetilde{G}^\pi(s)$ denotes the pseudo-return.
    \item Apply conformal prediction to construct a prediction interval for a new test state $S_{\rm test}$, using weighted subsampling to adjust for distribution shifts and experience replay to approximate exchangeability by decorrelating transitions, detailed in Section~\ref{sec:calibration}. 
\end{enumerate}
The nonconformity score plays a central role in quantifying uncertainty and correcting for potential estimation bias. While our framework is compatible with more sophisticated nonconformity measures, such as those used in conformalized quantile regression  \citep{romano2019conformalized}, the double-quantile score \citep{foffano2023conformal}, and various others, we use the simple absolute-error score here for clarity and illustration.

\begin{figure}[h]
	\centering
	\includegraphics[width=0.8\columnwidth]{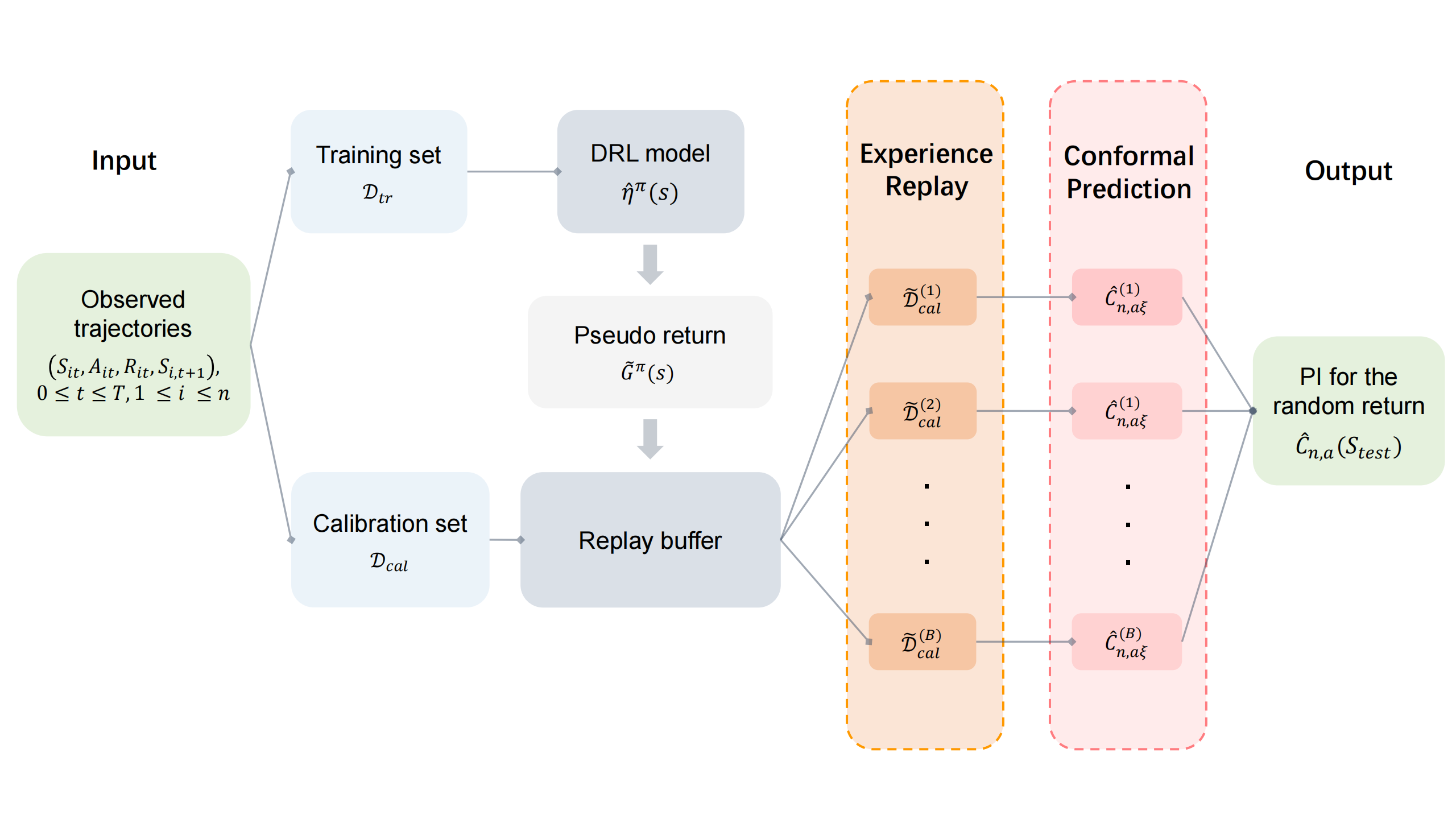}
	\caption{Pipeline of the proposed conformal policy prediction framework.}
	\label{on_policy_flow_chart}
\end{figure}

\subsection{Pseudo-Return Construction via Truncated Rollouts}\label{sec:pseudo-return}

A key challenge in infinite-horizon RL is that the true return $G^\pi(s)$ is unobservable in a finite-step trajectory, making it difficult to directly evaluate nonconformity scores for conformal prediction. To address this, we introduce a novel {pseudo-return construction} that inspired by $k$-step temporal difference (TD) learning. We reinterprete $k$-step TD learning through the lens of distributional inference. Specifically, for each calibration point $(S_{it}, A_{it}, R_{it}, S_{i,t+1})$, we define the $k$-step \emph{pseudo-return} as:
\begin{equation}\label{pseudo_return_on_policy}
    \tilde{G}^{(k)}(S_{it}) = \sum_{h=0}^{k-1} \gamma^h R_{i,t+h} + \gamma^k \tilde{G}^\pi(S_{i,t+k}),
\end{equation}
where the first term accumulates observed rewards under the behavior policy $\pi_b$, and the second term approximates the unobserved tail using a sample from the estimated return distribution $\hat{\eta}^\pi(S_{i,t+k})$.

\paragraph{Advantages.} {Pseudo-return construction} approximates the infinite-horizon return using a finite rollouts combined with a bootstrapped tail. \textbf{First}, this decomposition bridges model-based and model-free RL within the conformal inference framework. \textbf{Second}, the tail value is sampled from a learned return distribution, allowing seamless integration with DRL methods such as QTD or C51 \citep{bellemare2017distributional}. \textbf{Finally}, the rollout horizon $k$ offers a natural bias-variance trade-off: increasing $k$ incorporates more observed data, potentially reducing model bias but requiring longer rollouts; decreasing $k$ increases reliance on model predictions, offering faster calibration at the cost of higher bias.

\paragraph{On-policy setting.}  We detail the QTD learning procedure for DRL used in this paper, although any DRL estimation method can be integrated into our framework. In the on-policy case, QTD estimates the return distribution conditioned on the initial state, $\hat{\eta}^{\pi}(s)$, via the iterative update
\[
\theta(s,i) \leftarrow \theta(s,i) + \rho\cdot \frac{1}{m} \sum_{j=1}^{m} \left[ \tau_i - I(r + \gamma \theta(s',j) - \theta(s,i) < 0) \right],
\]
where $\theta(s,i)$ denotes the $\tau_i$-th quantile of $\hat{\eta}^{\pi}(s)$,   $(s,a,r,s')$ is sampled under the behavior policy $\pi$, which coincides with the target policy in the on-policy setting, and $\rho$ is a learning rate.  

\paragraph{Off-policy setting.}  Extending QTD to the off-policy setting requires careful modifications to account for distributional shifts between $\pi_b$ and $\pi$. We first define the return starting from a state-action pair as $G^{\pi}(s,a)=\sum_{t=0}^{\infty}\gamma^t R_t$,  where the agent takes action $a$ in state $s$ and follows policy $\pi$ thereafter. The distribution of this return is denoted by $\eta^{\pi}(s,a)$. The goal of QTD is to estimate the quantile functions of $\eta^{\pi}(s,a)$. The iterative update for the $\tau_i$-th quantile $\theta(s,a,i)$ is given by
\[
\theta(s,a,i) \leftarrow \theta(s,a,i) + \rho\cdot \frac{1}{m} \sum_{j=1}^{m} \left[ \tau_i - I(r + \gamma \theta(s',a',j) - \theta(s,a,i) < 0) \right],
\]
where $\theta(s,a,i)$ is the $\tau_i$-th quantile of $\hat{\eta}^{\pi}(s,a)$, $(s,a,r,s')$ is sampled from the behavior policy $\pi_b$, and $a'$ is drawn from the target policy $\pi$. The result is marginalized over the action space according to $\pi$: $ \widehat{\eta}^{\pi}(s)=\sum_{a}\pi(a|s)\widehat{\eta}^{\pi}(s,a)$. This modification is necessary to correct for the action distribution mismatch between behavior and target policies. For further details on distributional RL in off-policy evaluation, see \cite{qi2025distributional,hong2024distributional}.

\subsection{Time-Aware Calibration via Experience Replay and Weighted Subsampling}\label{sec:calibration}

A core challenge in applying CP to RL lies in the violation of its key assumption: \emph{exchangeability} between the calibration and test data. In RL, this is broken due to (i) temporal dependencies across transitions and (ii) distribution shifts in the state space both over time and across policies. To address these challenges, we introduce a two-pronged calibration strategy through \emph{experience replay-based sampling} to decorrelate temporally linked transitions and \emph{time-aware importance weighting} to correct for dynamic policy-dependent distributional shifts.

\paragraph{Experience Replay.}
Temporal dependence between transitions in RL makes the direct application of conformal prediction invalid. To mitigate this, we draw inspiration from deep RL techniques and treat the calibration set as a \emph{replay buffer}, storing transition tuples $(S_{it}, A_{it}, R_{it}, S_{i,t+1})$. We then apply random subsampling from this buffer to construct approximately i.i.d. calibration samples \citep{fan2020theoretical}. This technique mirrors the prioritized or uniform experience replay used in deep Q-learning, effectively decorrelating transitions \citep{schaul2015prioritized}. For the construction of $k$-step pseudo-returns, we store extended tuples of the form $\{(S_{it}, A_{it}, R_{it}, \ldots, S_{i,t+k})\}$.

\paragraph{Weighted Subsampling (WS).} Instead of adopting weighted conformal prediction (WCP) \citep{tibshirani2019conformal}, which is commonly used to correct for covariate shifts, we employ a \emph{sampling-based} strategy. Specifically, we perform weighted  subsampling from the calibration buffer based on estimated importance weights, producing a recalibrated set of approximately exchangeable samples tailored to the target distribution. The importance weights differ depending on whether the setting is on-policy or off-policy:

\begin{enumerate}[leftmargin=*]
\item \textbf{On-Policy Setting.} Here, the distribution shift stems from time-indexed variation in state visitation. We define the importance weight as
\begin{equation}\label{weight_function_on_policy}
    w_{\text{on}}(s) = \frac{d\mathcal{P}_0(s)}{d\mathcal{P}_{\text{cal}}(s)}=\frac{P(\delta=1\mid s)}{P(\delta=0\mid s)}\frac{P(\delta=0)}{P(\delta=1)}  \propto \frac{P(\delta=1\mid s)}{P(\delta=0\mid s)},
\end{equation}
where $\mathcal{P}_0$ is the  probability  distribution of test states,  $\mathcal{P}_{\text{cal}}$ is the marginal  probability  distribution over calibration states, and $\delta$ is an indicator variable, where $\delta=0$ denotes that $s$ belongs to the calibration set, and $\delta=1$  indicates that $s$ is in the test set. The second equality in Eq.~\eqref{weight_function_on_policy} follows from Bayes’ rule, expressing the likelihood ratio as a ratio of classifier probabilities \citep{friedman2004multivariate,qiu2023prediction}. In practice, $w_{\text{on}}(s)$ can be estimated using standard propensity scoring or density ratio estimation methods. In simulations, we employ logistic regression for this purpose. 

\item \textbf{Off-Policy Setting.} In this case, both temporal drift and policy mismatch must be corrected. We define the importance weight over a $k$-step trajectory segment as
\begin{equation}
    w_{\text{off}}(s_0, a_0, \ldots, s_k) \propto \frac{d\mathcal{P}_0(s_0)}{d\mathcal{P}_{\text{cal}}(s_0)} \prod_{h=0}^{k-1} \frac{\pi(a_h \mid s_h)}{\pi_b(a_h \mid s_h)}.
\end{equation}
This formulation adjusts for discrepancies in both state visitation and action selection between the behavior and target policies. This ratio can also be estimated using propensity scoring techniques.
\end{enumerate}

To reduce the variance in PIs caused by subsampling randomness, we repeat the process $B$ times and aggregate the intervals. This technique draws from recent work in conformal prediction under distribution shift \citep{zhang2023conformal} and improves both coverage stability and efficiency. {The complete algorithm for the on-policy setting is in Algorithm \ref{alg:CP_on_policy}, while the off-policy version is deferred to the supplementary material to save space.}

\paragraph{Why WS Works.}  In the off-policy setting, let \( S_{\text{test}} := S_{\text{test},0} \) denote a test state drawn from the marginal distribution \( \mathcal{P}_0(s) \), and consider the joint distribution:
\begin{equation*}
(S_{\text{test},0}, A_{\text{test},0}, R_{\text{test},0}, \ldots, S_{\text{test},k}, G^{\pi}(S_{\text{test},0})) \sim \mathcal{P}_{0}^{\text{off}}(s_0, a_0, r_0, \ldots, s_k, G).
\end{equation*}
Similarly, let \( \mathcal{P}_{\text{cal}}^{\text{off}} \) denote the joint distribution of rollout segments in the calibration set:
\begin{equation*}
(S_{it}, A_{it}, R_{it}, \ldots, S_{i,t+k}, G^{\pi}(S_{it})) \sim \mathcal{P}_{\text{cal}}^{\text{off}}(s_0, a_0, r_0, \ldots, s_k, G).
\end{equation*}
The two distributions are related through the importance weight \( w_{\text{off}} \), such that:
\begin{align}\label{weight_function_off_policy}
d\mathcal{P}_{0}^{\text{off}}(s_0, a_0, r_0, \ldots, s_k, G) = w_{\text{off}}(s_0, a_0,  \ldots, s_k) \, d\mathcal{P}_{\text{cal}}^{\text{off}}(s_0, a_0, r_0, \ldots, s_k, G).
\end{align}
This identity shows that sampling from the calibration distribution according to the importance weights \( w_{\text{off}} \) produces samples that approximate the test-time distribution \( \mathcal{P}_{0}^{\text{off}} \). By reweighting the calibration set in this way, we recover approximate exchangeability between the calibration and test samples, thereby restoring the validity of conformal prediction in the presence of both temporal and policy-induced distribution shifts.

\paragraph{Why Not Use WCP.}
 Weighted conformal prediction (WCP) typically assumes access to the full set of test-time covariates. In contrast, our setting only observes the initial state $S_{\text{test},0}$ at test time, while subsequent states $S_{\text{test},1}, S_{\text{test},2}, \ldots, S_{\text{test},k}$ remain unobserved. The WCP weight defined in Eq.~12 of \cite{foffano2023conformal} involves marginalizing over entire trajectories, which are unobserved. Although \cite{foffano2023conformal} further propose an optimization-based approximation (Eq.~14), this approach introduces additional model assumptions and tends to exhibit high variance, especially in long-horizon settings, limiting their practical applicability in our context. On the other hand, while one could adopt   more elaborate designs such as that of \cite{zhang2023conformal} tailored for sequential decision-making, our weighted subsampling scheme offers a significantly simpler and more practical alternative, especially when only the initial states of test trajectories are observed.

\begin{center}
\begin{algorithm}[ht] \label{alg:CP_on_policy}
	\caption{ {\it  CP for Infinite Horizon On-policy Evaluation  } }
	\KwData{  
		$\mathcal{D} = \{ (S_{it},A_{it},R_{it},S_{i,t+1}) : 1 \le i \le N, 1 \le t \le T \}$ and a test state $S_{\rm test}$.
	}
	
	\KwIn{ $1-\alpha$, target coverage level; 
		$\mathcal{A}$, an on-policy distributional RL algorithm; 
		$\mathcal{W}$, a density ratio estimation algorithm; 
		$k$, step width;
		$B$, resampling number;
		$l$, subsample size;  
		$\xi$, multiple subsampling parameter
	}

	\KwOut{Prediction interval for $G^\pi(S_{\rm test})$}

	Split the data: 
    $\mathcal{D} 
    = \mathcal{D}_{\rm tr} \bigcup \mathcal{D}_{\rm cal}$
    where 
    $ \mathcal{D}_{\rm tr} = \left\{(S_{it},A_{it},R_{it},S_{i,t+1}) : (i,t) \in \mathcal{I}_{\rm tr} \right\}$
    and
    $\mathcal{D}_{\rm cal} = \left\{ (S_{it},A_{it},R_{it},\ldots,S_{i,t+k}) : (i,t) \in \mathcal{I}_{\rm cal} \right\}$.
    Here, $\mathcal{I}_{{\rm tr}}$ and $\mathcal{I}_{{\rm cal}}$ denote the indices of transitions in the training and calibration datasets, respectively. 
    
	Train a conditional return model $\hat\eta^\pi(s)$ using $\mathcal{A}$ based on $\mathcal{D}_{\rm tr}$.
	
	Obtain the value function estimator $\hat v^\pi(s)$, the expectation of $\hat\eta^\pi(s)$.

	Obtain $\hat w_{\rm on}(\bs)$ as an estimator of the density ratio (\ref{weight_function_on_policy}) based on 
	$\left\{ S_{i0}: (i,0) \in \mathcal{I}_{\rm tr} \right\}$ and
	$\left\{ S_{it}: (i,t) \in \mathcal{I}_{\rm tr} \right\}$ using $\mathcal{W}$.

	\For{$b=1:B$} {
		\bit
		\item
		Sample $l$ data tuples
        $\{(S_{it}, {A}_{i,t}, {R}_{i,t}, \ldots,S_{i,t+k}) : (i,t) \in \mathcal{I}_{\rm cal}^{(b)}\}$ from $\mathcal{D}_{\rm cal}$ according to the importance weight $\hat w_{\rm on}(S_{it})$.

        \item
        Calculate pseudo return  (\ref{pseudo_return_on_policy}) and obtain  $\widetilde{\mathcal{D}}_{\rm cal}^{(b)}
        := \{(S_{it}, \widetilde{G}_{it}^{(k)}) : (i,t) \in \mathcal{I}_{\rm cal}^{(b)}\}$.

		\item
		Calculate the nonconformity scores: 
		$\{
		V_{it}
		:=  | \widetilde{G}_{it}^{(k)} - \hat v^\pi(S_{it}) |  : (i,t) \in \mathcal{I}_{\rm cal}^{(b)}\}
		\}$.
		
		\item 
		Obtain $\hat{q}_{1-\alpha \xi}^{(b)} $, the $\lceil l (1-\alpha \xi)\rceil$-th smallest value of $\{ V_{it} : (i,t) \in \mathcal{I}_{\rm cal}^{(b)} \}$.
		
		\item Obtain 
		$
		\widehat C_{N, \alpha \xi}^{(b)}(S_{\rm test}) 
		= \hat v^\pi(S_{\rm test}) \pm \hat q_{1- \alpha \xi}^{(b)}. 
		$
		
		\eit
	}

	\KwResult{
		A conformal predictive region for $G^\pi(S_{\rm test})$ with a coverage rate of $1-\alpha$  is
		\ba
		\widehat{C}_{N,\alpha}^{\rm on} (\rm S_{\rm test})
		&=&
		\left\{ G: \frac1B \sum_{b=1}^B 
		I\left\{ G \in \widehat C_{N,\alpha \xi}^{(b)}(S_{\rm test})  \right\} \ge 1-\xi 
		\right\}.
		\label{PI_on_policy}
		\ea
	}
\end{algorithm}
\end{center}

\section{Theoretical Results}
In this section, we provide statistical guarantees for the PIs constructed by our method. Standard CP yields marginal coverage at level $1-\alpha$  under the assumption of exchangeability. However, in practice, distribution shifts violate this assumption, leading to a gap between the nominal level $1-\alpha$ and the actual coverage. Previous studies have bounded this gap using total variation distance, which fails to capture how different choices of $k$ in $k$-step rollouts affect the coverage gap. To address this, we propose a tighter upper bound on the coverage gap based on the Wasserstein distance, leveraging a recent theoretical result from \cite{xu2025wasserstein}. { Let  $\mu$ and $\nu$ be two probability measures on the real space $\mathbb{R}$. 
For any $p>0$, the $p$-Wasserstein distance between $\mu$ and $\nu$ is defined as
\(
W_p(\mu, \nu) :=  \inf_{\kappa \in \Gamma(\mu,\nu)}\{\int_{\mathbb{R}\times \mathbb{R}} |x - y|^p \kappa(dx, dy)\}^{1/p},   
\)
where $\Gamma(\mu,\nu)$ denotes the set of all couplings with marginals $\mu$ and $\nu$. 
}

 Let $n $ be the cardinality of the calibration set $\mathcal{D}_{\rm cal}$, 
and $\hat\eta^{\pi}(s)$ denote an estimate of the return distribution $\eta^{\pi}(s)$  under the target policy \(\pi\).
We take $\mathcal{S}$ to be the state space and define 
$\bar{W}_1(\eta^\pi, \hat{\eta}^\pi) := \sup_{s \in \mathcal{S}} W_1(\eta^\pi(s), \hat{\eta}^\pi(s))$. 
Let $\widehat{w}_{\rm on}(s)$ be an estimate of the on-policy importance weight defined in \eqref{weight_function_on_policy}, and let $\widehat C_{N,\alpha}^{\rm on}(\cdot)$ be the prediction interval produced by Algorithm~\ref{alg:CP_on_policy}.  
The following theorem establishes an asymptotic lower bound on the coverage in the on-policy setting.

\begin{condition}\label{condition_on} \rm
(i) The return distribution $\eta^\pi(s)$ has a Lebesgue density bounded by $L$ for all $s \in \mathcal{S}$.
(ii) $ \mathbb{E}[\hat{w}_{\mathrm{on}}(S_{it})| \mathcal{D}_{\mathrm{tr}}] < \infty$ {and}  $\mathbb{E}[w_{\mathrm{on}}(S_{it})] < \infty$ {for all } $0 \le t \le T-k.$ 
    
\end{condition}
\begin{theorem}[\bf On-Policy Coverage Guarantee]\label{thm:on-policy}
Assume   Condition~\ref{condition_on}, and redefine $\hat w_{\rm on}(s)$ as 
	$\hat w_{\rm on}(s)/\frac{1}{T-k+1}\sum_{t=0}^{T-k}\e[\hat w_{\rm on}(S_{it}) | \mathcal{D}_{\rm tr}]$ so that $\frac{1}{T-k+1} \sum_{t=0}^{T-k}\e[ \hat w_{\rm on}(S_{it})| \mathcal{D}_{\rm tr}] = 1$.
Then %the prediction interval $\widehat{C}_{N,\alpha}^{\mathrm{on}}(S_{\rm test})$ satisfies
\begin{align*}
&\lim_{n \to \infty} \Pr\left( G^{\pi}(S_{\rm test}) \in \widehat{C}_{N,\alpha}^{\mathrm{on}}(S_{\rm test}) \right) \ge  1 - \alpha - \Lambda(\widehat{w}_{\rm on},\widehat{\eta}^\pi),~\text{where}\\
&\Lambda(\widehat{w}_{\rm on},\widehat{\eta}^\pi) = \frac{1}{2(T - k + 1)} \sum_{t=0}^{T-k} \mathbb{E} \left[ \left| \hat{w}_{\mathrm{on}}(S_{it}) - w_{\mathrm{on}}(S_{it}) \right| \right]  + \sqrt{2 L \gamma^k \, \mathbb{E} \left[ \bar{W}_1(\eta^\pi, \hat{\eta}^\pi) \right]}. 
\end{align*}
 
\end{theorem}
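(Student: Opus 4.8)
The plan is to decompose the coverage gap into two sources of error—importance-weight misspecification and pseudo-return bias—and bound each with the Wasserstein-based machinery. I would start by setting up the idealized comparison: if we had exact weights $w_{\rm on}$ and exact returns $G^{\pi}(S_{it})$ in the calibration set, then weighted subsampling would make the calibration sample (asymptotically, as the subsample is drawn i.i.d. from the reweighted buffer) exchangeable with the test point, so standard split-conformal arguments give coverage $\ge 1-\alpha\xi$ for each replicate $b$, and the $B$-fold aggregation at threshold $1-\xi$ in \eqref{PI_on_policy} recovers an overall $1-\alpha$ guarantee via a Markov/union-type argument (this is the COPP-style aggregation of \cite{zhang2023conformal}). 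The nontrivial work is to show that replacing $w_{\rm on}$ by $\widehat w_{\rm on}$ and $G^{\pi}$ by the $k$-step pseudo-return $\widetilde G^{(k)}$ degrades this guarantee by at most $\Lambda(\widehat w_{\rm on},\widehat\eta^\pi)$.

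For the weight-estimation term, I would invoke the standard coverage-stability bound for weighted conformal prediction: if the subsampling weights induce a distribution $\widehat{\mathcal P}_{\rm cal}$ on the calibration points that differs from the target-matched distribution $\mathcal P_0$, the coverage loss is controlled by the total variation distance between them, which in turn is $\tfrac12\,\mathbb E\big[\,|\widehat w_{\rm on}(S)-w_{\rm on}(S)|\,\big]$ after the normalization making $\mathbb E[\widehat w_{\rm on}]=1$ (averaged over $t=0,\dots,T-k$, reflecting that the calibration buffer pools all time indices). This is where Condition~\ref{condition_on}(ii) is used to ensure the weights are integrable so the TV expression is finite. I would state this as a lemma, citing the relevant weighted-CP literature, and take $n\to\infty$ so that the empirical quantile of the subsampled scores converges to the population quantile under $\widehat{\mathcal P}_{\rm cal}$.

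For the pseudo-return term, the key observation is that the nonconformity score built from $\widetilde G^{(k)}(S_{it})=\sum_{h=0}^{k-1}\gamma^h R_{i,t+h}+\gamma^k\widetilde G^\pi(S_{i,t+k})$ differs from the ideal score built from $G^\pi(S_{it})=\sum_{h=0}^{k-1}\gamma^h R_{i,t+h}+\gamma^k G^\pi(S_{i,t+k})$ only in the bootstrapped tail, and the tail sample is drawn from $\widehat\eta^\pi(S_{i,t+k})$ rather than $\eta^\pi(S_{i,t+k})$. So the law of $\widetilde G^{(k)}(s)$ and that of $G^\pi(s)$, conditioned on the common reward prefix, are related by pushing forward $\widehat\eta^\pi$ versus $\eta^\pi$ through the affine map $x\mapsto(\text{prefix})+\gamma^k x$; hence their $1$-Wasserstein distance is exactly $\gamma^k$ times $W_1(\eta^\pi(S_{i,t+k}),\widehat\eta^\pi(S_{i,t+k}))\le \gamma^k\bar W_1(\eta^\pi,\widehat\eta^\pi)$. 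Then I would apply the result of \cite{xu2025wasserstein}: when the score distribution has a density bounded by $L$, a $W_1$ perturbation of size $\epsilon$ changes the quantile-based coverage by at most $\sqrt{2L\epsilon}$ (the square-root arises from the Wasserstein-to-Kolmogorov conversion under a density bound). Substituting $\epsilon=\gamma^k\,\mathbb E[\bar W_1(\eta^\pi,\widehat\eta^\pi)]$ gives the second term; Jensen lets me pull the expectation inside the square root as stated. Note Condition~\ref{condition_on}(i) supplies the density bound $L$ needed here—one should check the bound on $\eta^\pi(s)$ transfers to the score $V=|\widetilde G^\pi-\widehat v^\pi|$, which it does since the absolute value only folds the density and the affine prefix shift preserves the bound.

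The main obstacle I anticipate is rigorously handling the interaction between the two approximations together with the randomness of the replay/subsampling and the $n\to\infty$ limit: one must argue that the subsampled, reweighted, pseudo-return-based empirical quantile converges to the right population object, and that the errors from weights and from tails combine additively rather than multiplicatively. I would manage this by conditioning on $\mathcal D_{\rm tr}$ throughout (so $\widehat\eta^\pi$, $\widehat v^\pi$, $\widehat w_{\rm on}$ are fixed), establishing the two bounds on the conditional coverage separately via a triangle-inequality/coupling argument on the relevant distribution functions, summing them, and only then taking expectation over $\mathcal D_{\rm tr}$ and the limit $n\to\infty$; the aggregation over $b=1,\dots,B$ is the last, essentially bookkeeping, step.
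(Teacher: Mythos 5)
Your proposal is correct and follows essentially the same route as the paper's proof: the same decomposition into a weight-estimation term (bounded by total variation, i.e.\ $\tfrac12\,\mathbb{E}|\widehat w_{\rm on}-w_{\rm on}|$ after normalization) and a pseudo-return term (bounded via the Kolmogorov--Wasserstein conversion $\sqrt{2L\,W_1}$ under the bounded-density condition, with Jensen pulling the expectation inside the root), the same conditioning on $\mathcal{D}_{\rm tr}$ with a Glivenko--Cantelli limit as $n\to\infty$, and the same deferral of the $B$-fold aggregation to the COPP-style result. The only cosmetic difference is that you obtain the $\gamma^k$ factor by a direct coupling of the bootstrapped tails through the shared reward prefix, whereas the paper invokes the $\gamma$-contraction of the distributional Bellman operator together with the fact that the $k$-step pseudo-return is an instantiation of $(\mathcal{T}^\pi)^k\widehat\eta^\pi$; these are the same argument.
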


Theorem \ref{thm:on-policy} shows that the deviation from nominal coverage depends on two main factors: (i) the estimation error in the importance weights, which arises due to the distribution shift, and (ii) the approximation error in the return distribution $\widehat{\eta}^\pi(s)$, measured by the Wasserstein distance. Notably, the second term decays with the truncation step $k$ at a rate proportional to $\gamma^k$. When the approximation error in the return distribution $\widehat{\eta}^\pi(s)$ is large, choosing a larger $k$ can help reduce the deviation from nominal coverage by relying more on observed rewards.  However, this introduces a trade-off: if $k$ is too large, it becomes difficult to accurately estimate the off-policy weights, especially under substantial distributional shifts. In this case, the method effectively reduces to a Monte Carlo estimator that relies on full trajectories, resulting in the high variance we aim to avoid.

Next, we establish an asymptotic lower bound on the coverage of the PI in the off-policy setting. Let   \(\hat{w}_{\mathrm{off}}(\cdot)\) be an estimate of the importance weight \(w_{\mathrm{off}}(\cdot)\) as defined in~\eqref{weight_function_off_policy}. Let \(\widehat{C}_{N,\alpha}^{\mathrm{off}}(\cdot)\) denote the conformal interval produced by Algorithm 1 in the supplementary material.

\begin{condition}\label{condition_off} \rm
    (i) The return distribution \(\eta^\pi(s)\) has a 
Lebesgue density bounded by \(L\) for all $s \in \mathcal{S}$.
(ii) $\mathbb{E} [\hat{w}_{\rm off}(\mathcal{H}_{t:t+k})|\mathcal{D}_{\rm tr}] < \infty$, $\mathbb{E} [w_{\rm off}(\mathcal{H}_{t:t+k})]< \infty$
for all $0 \le t \le T - k$, 
where \(\mathcal{H}_{t:t+k} := (S_{t}, A_{t}, \ldots, S_{t+k})\) denotes 
the local trajectory segment following policy $\pi_b$, independent of $\mathcal{D}_{\rm tr}$.
(iii) (overlapping) $\pi_{b}(a|s)$  is uniformly bounded away from 0 for any $a,s$. 
\end{condition}
\begin{theorem}[\bf Off-Policy Coverage Guarantee]\label{thm:off-policy} 
Assume   Condition~\ref{condition_off}, and redefine 
$\hat w_{\rm off}(s_0,a_0.\ldots,s_{k+1})$ 
as $\hat w_{\rm off}(s_0,a_0.\ldots,s_{k+1})/\frac{1}{T-k+1}\sum_{t=0}^{T-k}\e[\hat w_{\rm off}(\mathcal{H}_{t:t+k})| \mathcal{D}_{\rm tr}]$ 
so that $\frac{1}{T-k+1} \sum_{t=0}^{T-k}\e[ \hat w_{\rm off}(\mathcal{H}_{t:t+k})| \mathcal{D}_{\rm tr}] = 1$.
Then we have
\begin{align*}
&\lim_{n \to \infty} \Pr\left( G^{\pi}(S_{\rm test}) \in 
\widehat{C}_{N,\alpha}^{\mathrm{off}}(S_{\rm test}) \right)\ge 1 - \alpha-\Lambda(\hat{w}_{\rm off},\hat{\eta}^\pi),~\text{where} \\
&\Lambda(\widehat{w}_{\rm off},\hat{\eta}^\pi)= \frac{1}{2(T - k + 1)} \sum_{t=0}^{T - k} 
\mathbb{E}  \left[ 
\left| \hat{w}_{\mathrm{off}}(\mathcal{H}_{t:t+k}) 
- w_{\mathrm{off}}(\mathcal{H}_{t:t+k}) \right| \right] + \sqrt{2 L \gamma^k \, 
\mathbb{E}  \left[ 
\bar{W}_1(\eta^\pi, \hat{\eta}^\pi) \right]}.
\end{align*}
\end{theorem}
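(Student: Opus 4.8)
The plan is to replay the argument behind Theorem~\ref{thm:on-policy}, tracking how three departures from the idealized split-conformal recipe erode coverage: the temporal dependence inside the calibration buffer, the use of the estimated weight $\widehat w_{\rm off}$ in place of $w_{\rm off}$, and the use of the pseudo-return $\widetilde G^{(k)}$ in place of the unobservable $G^\pi$. Throughout I would condition on $\mathcal{D}_{\rm tr}$, so that $\widehat\eta^\pi$, $\widehat v^\pi$ and $\widehat w_{\rm off}$ are deterministic and, by Condition~\ref{condition_off}(ii), the calibration segments and the test trajectory are fresh; the outer expectation over $\mathcal{D}_{\rm tr}$ is handled only at the end.

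First I would dispatch the experience-replay step. Since $n\to\infty$ means the number of trajectories grows with $T$ fixed, a block law of large numbers --- the decorrelation phenomenon exploited by \citep{fan2020theoretical} --- gives that the empirical measure of the calibration buffer $\{(\mathcal{H}_{it},G^\pi(S_{it}))\}$ converges to the uniform-over-time mixture $\mathcal{P}_{\rm cal}^{\rm off}=\tfrac{1}{T-k+1}\sum_{t=0}^{T-k}\mathrm{Law}(\mathcal{H}_{t:t+k},G^\pi(S_t))$. Consequently the size-$l$ weighted subsample behaves, as $l$ grows with $n$, like an i.i.d. sample from $\widehat{\mathbb{Q}}$ defined by $d\widehat{\mathbb{Q}}=\widehat w_{\rm off}\,d\mathcal{P}_{\rm cal}^{\rm off}$ (a probability measure by the rescaling imposed in the statement), so each $\widehat q_{1-\alpha\xi}^{(b)}$ converges to the $(1-\alpha\xi)$-quantile of the pseudo-return score $V(s)=|\widetilde G^{(k)}(s)-\widehat v^\pi(s)|$ under $\widehat{\mathbb{Q}}$, all the sub-intervals collapse onto the common interval $\widehat v^\pi(S_{\rm test})\pm\widehat q_{1-\alpha\xi}$, and the aggregated region does too. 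Hence $\lim_n\Pr(G^\pi(S_{\rm test})\in\widehat C_{N,\alpha}^{\rm off}(S_{\rm test})\mid\mathcal{D}_{\rm tr})$ equals $F_{\rm test}(\widehat F^{-1}(1-\alpha\xi))$, where $\widehat F$ is the c.d.f. of $V$ under $\widehat{\mathbb{Q}}$ and $F_{\rm test}$ is that of the true-return score $V^\star(S_{\rm test})=|G^\pi(S_{\rm test})-\widehat v^\pi(S_{\rm test})|$.

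Next comes the oracle identity and two perturbations. Let $d\mathbb{Q}=w_{\rm off}\,d\mathcal{P}_{\rm cal}^{\rm off}$. The weighted-exchangeability identity \eqref{weight_function_off_policy} says $\mathbb{Q}=\mathcal{P}_0^{\rm off}$; in particular, under $\mathbb{Q}$ the segment is distributed as under $\pi$, so $\sum_{h<k}\gamma^h R_{t+h}+\gamma^k G^\pi(S_{t+k})\overset{d}{=}G^\pi(S_t)$ and $V^\star$ has c.d.f. exactly $F_{\rm test}$ under $\mathbb{Q}$. Writing $F$ and $F^\star$ for the c.d.f.s of $V$ and $V^\star$ under $\mathbb{Q}$, I would chain $F_{\rm test}(\widehat F^{-1}(1-\alpha\xi))\ge\widehat F(\widehat F^{-1}(1-\alpha\xi))-\|F-\widehat F\|_\infty-\|F^\star-F\|_\infty\ge(1-\alpha\xi)-d_{\rm TV}(\mathbb{Q},\widehat{\mathbb{Q}})-d_{\rm TV}(F^\star,F)$, since sup-norm gaps of c.d.f.s are dominated by total variation. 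For the first term, $\mathbb{Q}$ and $\widehat{\mathbb{Q}}$ are normalized reweightings of the common base $\mathcal{P}_{\rm cal}^{\rm off}$, so $d_{\rm TV}(\mathbb{Q},\widehat{\mathbb{Q}})=\tfrac12\int|\widehat w_{\rm off}-w_{\rm off}|\,d\mathcal{P}_{\rm cal}^{\rm off}=\tfrac{1}{2(T-k+1)}\sum_{t=0}^{T-k}\mathbb{E}[\,|\widehat w_{\rm off}(\mathcal{H}_{t:t+k})-w_{\rm off}(\mathcal{H}_{t:t+k})|\mid\mathcal{D}_{\rm tr}\,]$, using the uniform-over-time structure of $\mathcal{P}_{\rm cal}^{\rm off}$. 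For the second, replacing $G^\pi$ by $\widetilde G^{(k)}$ alters only the tail, $\gamma^k G^\pi(S_{t+k})$ becoming $\gamma^k\widetilde G^\pi(S_{t+k})$ with $\widetilde G^\pi(S_{t+k})\sim\widehat\eta^\pi(S_{t+k})$; coupling the whole segment and then coupling the tails optimally given $S_{t+k}$ gives $W_1(F^\star,F)\le\gamma^k\mathbb{E}[\bar W_1(\eta^\pi,\widehat\eta^\pi)\mid\mathcal{D}_{\rm tr}]$, and since Condition~\ref{condition_off}(i) makes $F^\star$ absolutely continuous with bounded density (after the fold by $|\cdot|$ and marginalization over states), the Wasserstein-to-coverage bound of \citep{xu2025wasserstein} upgrades this to $d_{\rm TV}(F^\star,F)\le\sqrt{2L\gamma^k\mathbb{E}[\bar W_1(\eta^\pi,\widehat\eta^\pi)\mid\mathcal{D}_{\rm tr}]}$.

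Assembling, the conditional limiting coverage is at least $1-\alpha\xi$ minus these two penalties, hence at least $1-\alpha$ minus the $\mathcal{D}_{\rm tr}$-conditional analogue of $\Lambda(\widehat w_{\rm off},\widehat\eta^\pi)$; taking $\mathbb{E}_{\mathcal{D}_{\rm tr}}$ (the tower property removes the conditioning in the first penalty, Jensen applied to the concave $\sqrt{\cdot}$ handles the second, and bounded convergence moves $\lim_n$ through the outer expectation) gives exactly the stated bound, with Condition~\ref{condition_off}(iii) keeping $w_{\rm off}$ finite through boundedness of the policy-ratio product and Condition~\ref{condition_off}(ii) ensuring convergence of the weighted empirical measure. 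The step I expect to be the main obstacle is making the experience-replay decorrelation rigorous --- i.e. arguing that the size-$l$ weighted subsample from the temporally dependent buffer is asymptotically i.i.d. from $\widehat{\mathbb{Q}}$ uniformly enough that the empirical quantiles converge and the $B$-fold aggregation collapses --- together with the density check inside the Wasserstein-to-coverage step; the remaining pieces (the oracle identity, the two $d_{\rm TV}$/$W_1$ estimates, the Jensen and bounded-convergence bookkeeping) are essentially routine once these are settled.
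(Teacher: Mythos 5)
Your proposal is correct and follows essentially the same route as the paper's proof: the same three-way decomposition into a base coverage term (exchangeability of the weighted subsample plus a weighted Glivenko--Cantelli/LLN over trajectories), a total-variation penalty $\tfrac{1}{2(T-k+1)}\sum_t \mathbb{E}\left[|\hat w_{\rm off}-w_{\rm off}|\right]$ for weight estimation, and a Wasserstein-to-Kolmogorov penalty $\sqrt{2L\gamma^k\,\mathbb{E}[\bar W_1(\eta^\pi,\hat\eta^\pi)]}$ obtained via the $\gamma$-contraction of the distributional Bellman operator and Jensen's inequality. The only differences are presentational (you pass to limiting population quantiles directly and collapse the $B$ aggregated intervals asymptotically, whereas the paper uses a finite-$n$ exchangeability argument with an artificial test point drawn from the weighted empirical measure and cites the multi-split conformal aggregation result), so no substantive gap.
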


Theorem~\ref{thm:off-policy} shows that the coverage deviation has the same form as in the on-policy case (Theorem~\ref{thm:on-policy}). The main difference is the additional estimation error in the importance weights $\widehat{w}_{\rm off}$, which arises from evaluating a different target policy. 

\paragraph{Remark.} For continuous return distributions, the bounded Lebesgue density assumption is mild and typically satisfied in practice. It holds for many commonly-used distributions, including the Gaussian, exponential, and Gamma distributions with shape parameter no less than 1. For example, in Examples 1 and 2 of our experiments, the return distributions can be readily verified to satisfy this condition. In contrast, this assumption does not apply to discrete return distributions, as discrete random variables are not absolutely continuous with respect to the Lebesgue measure. Hence, the bounded density condition is neither required nor meaningful for discrete returns, as in Example 3 of our experiments. 

\section{Experiments}

In this section, we conduct simulation studies to investigate the empirical performance of our proposed methods. In particular, we focus on the following two examples:

\paragraph{Example 1: two-state MDP (Example 3 of  \cite{rowland2024analysis})}
The state space of the environment is discrete with two possible values: $x_1$ and $x_2$. The agent transfers from a current state to a different state with a certain probability determined by the policy and the discount factor is $\gamma = 0.8$.
The reward obtained when transitioning from state $x_1$ is distributed as $N(2, 1)$, and the reward obtained when transitioning from state $x_2$ is distributed as $N (1, 1)$.

\paragraph{Example 2: continuous state (Scenario B of \cite{shi2022statistical})}

The action is binary and   $S_{t+1}=(S_{t+1,1}, S_{t+1,2})$, where $S_{t+1,1}=3(2A_{t}-1)S_{t,1}/4+z_{t,1}$, $S_{t+1,2}=3(1-2A_t)S_{t,2}+z_{t,2}$, $z_{t}=(z_{t,1},z_{t,2})$, for $t \geq 0$, $\{z_t\}_{t\geq0}{\operatorname*{\sim}}N(0_2,I_2/4)$ are i.i.d. and $S_{0}\sim N(0_2,I_2)$. The immediate reward $R_{t}=2S_{t+1,1}+S_{t+1,2}-(2A_t-1)/4$. The discount factor is $\gamma = 0.8$.

For each example, we consider both an on-policy setting and an off-policy setting:

\begin{itemize}[leftmargin=*]
\item
In Example 1, when there is no policy shift, the probabilities of transferring from $x_1$ to $x_2$ and $x_2$ to $x_1$ are 0.4 and 0.8, respectively; when there exists a policy shift, the training data has the same transition dynamics as in the on-policy setting, while the test agent transitions from $x_1$ to $x_2$ with probability 0.5 and from $x_2$ to $x_1$ with probability 0.7. 

\item 
In Example 2, when there is no policy shift, both the observed data and the test agents satisfy $\Pr(A_{t}=1|S_{t})=0.5 {\rm sigmoid} (S_{t,1})+0.5  {\rm sigmoid} (S_{t,2})$; when there exists a policy shift, the observed data follows the same policy as in the on-policy setting while the test data satisfies $\Pr(A_{t}=1|S_{t})= 0.6 {\rm sigmoid} (S_{t,1})+0.4 {\rm sigmoid} (S_{t,2})$. 
\end{itemize}

\paragraph{Implementation details.}
The sample size is fixed to $N = 400$ for Example 1 and $N=200$ for Example 2, with each trajectory consisting of $T=30$ stages. For Example 1, we approximate the return distribution using 20 conditional quantiles estimated by QTD. In Example 2, where the state space is continuous, we use 30 conditional quantiles estimated by QTD and model the conditional quantile functions with a neural network. The detailed architecture of the neural network is provided in the supplementary material. We evaluate the performance of the proposed method with step sizes $k=1,\ldots,5$, and set the number of intervals $B=50$. For each simulation, we generate 310 test points from the target policy to evaluate the converge probability.  In the supplementary material, we include simulation results for Example 1 to examine the impact of $\xi$ and $k$, a comparison with \cite{foffano2023conformal} based on the same example, and an extension of Example 1 to a high-dimensional setting.

\paragraph{Benchmark and Results.}
We compare our method with the quantile region given by the learned QTD model (DRL-QR). Since the DRL algorithm directly learns the return distribution $\eta^\pi(S) := \mathcal{P}(G^\pi|S)$ by $\widehat\eta^\pi(S)$, a quantile  region for the test instance $S_{\rm test}$ can be constructed as 
$[\widehat Q_{a/2}(S_{\rm test}),\widehat Q_{1-a/2}(S_{\rm test})]$, where $\widehat Q_{\tilde a}(S_{\rm test})$ is the $\tilde a$-th quantile of $\widehat\eta^\pi(S_{\rm test})$. Figure \ref{fig:ex1-2} presents boxplots based on 50 independent repetitions. It shows that our method consistently achieves near-nominal $90\%$  coverage across various $k$-step pseudo-returns in both on-policy and off-policy settings. In contrast, the DRL-QR baseline suffers from undercoverage due to model bias in the estimated return distribution. This highlights the effectiveness of our conformal framework in correcting such bias and ensuring valid uncertainty quantification. We also observe that the average interval length increases with larger $k$, reflecting the higher variance introduced by longer truncation horizons.

\begin{figure}[ht]
	\centering
	\subfigure[Coverage probability]{
		\begin{minipage}[]{\columnwidth}
			\centering
			\includegraphics[width=.245\columnwidth]{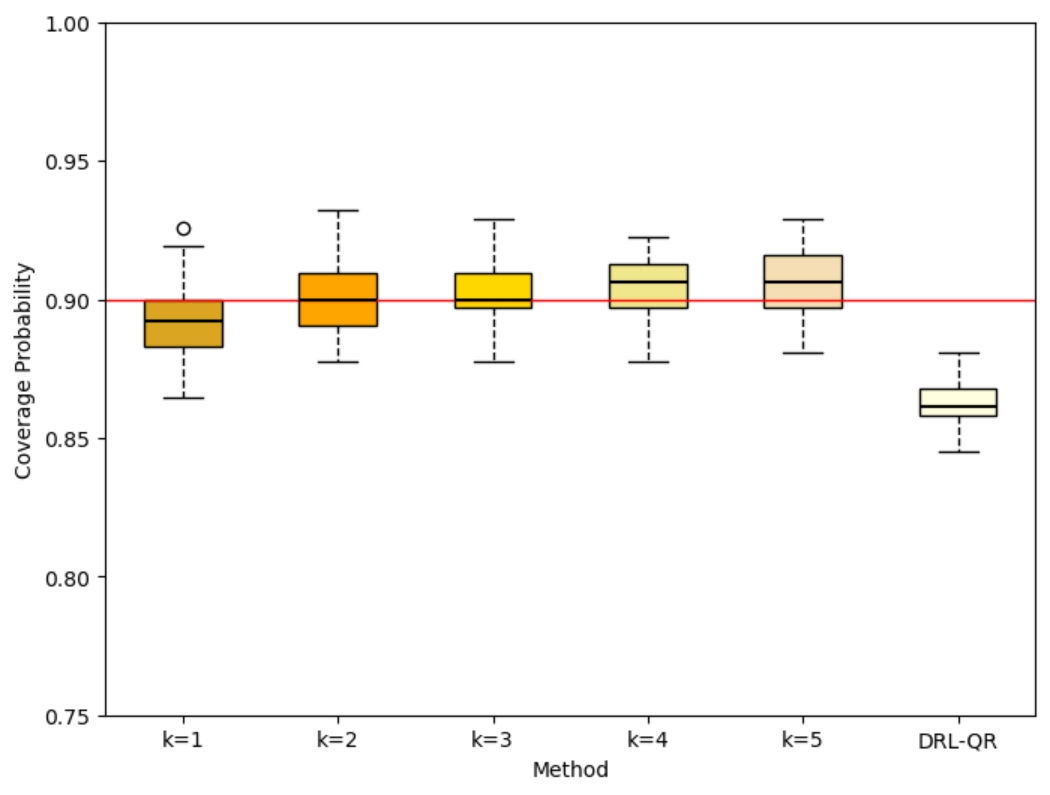}
			\includegraphics[width=.245\columnwidth]{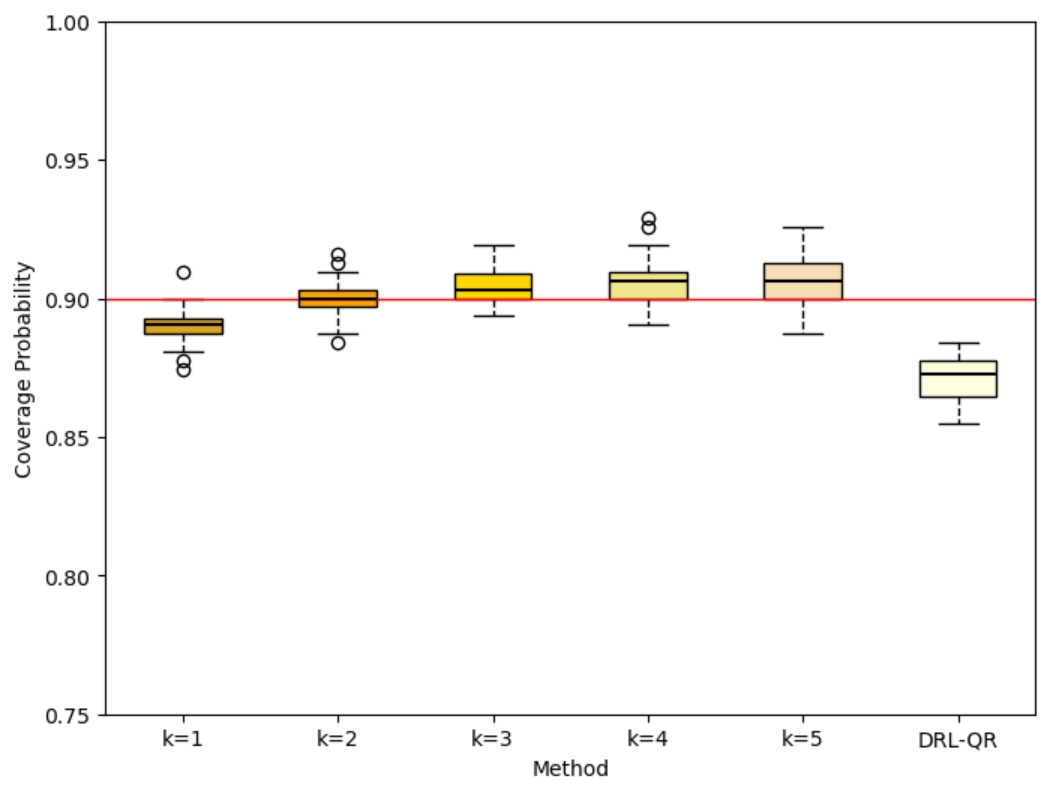}
            \includegraphics[width=.245\columnwidth]{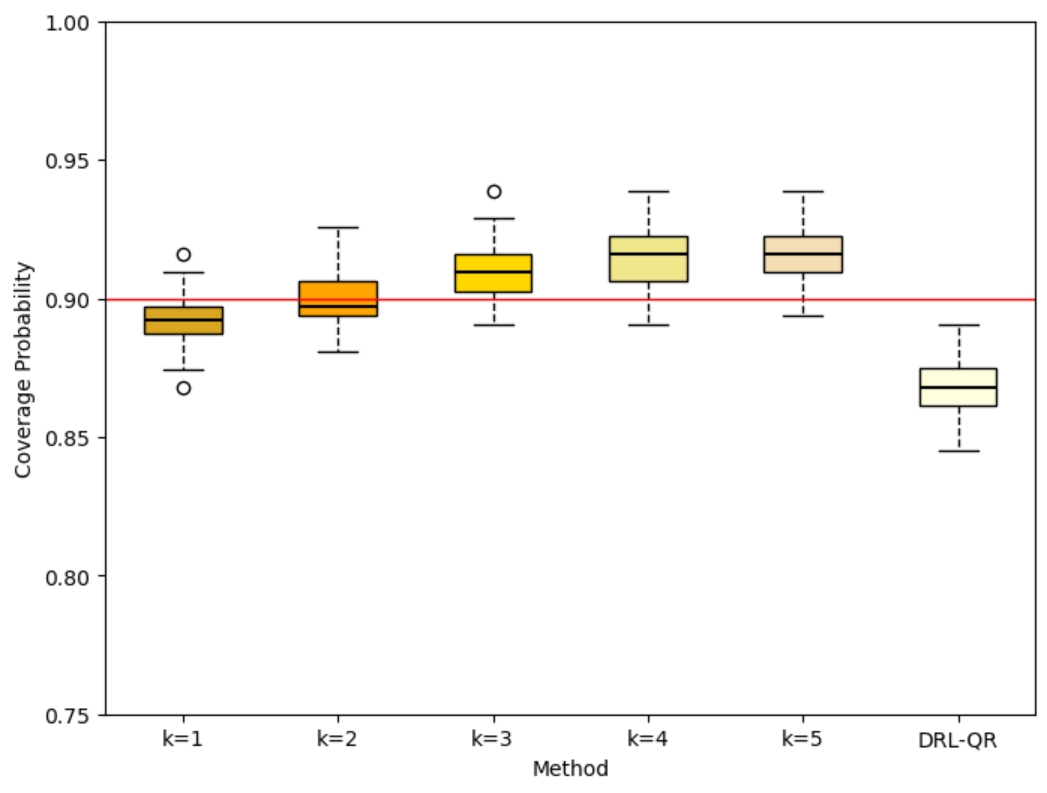}
			\includegraphics[width=.245\columnwidth]{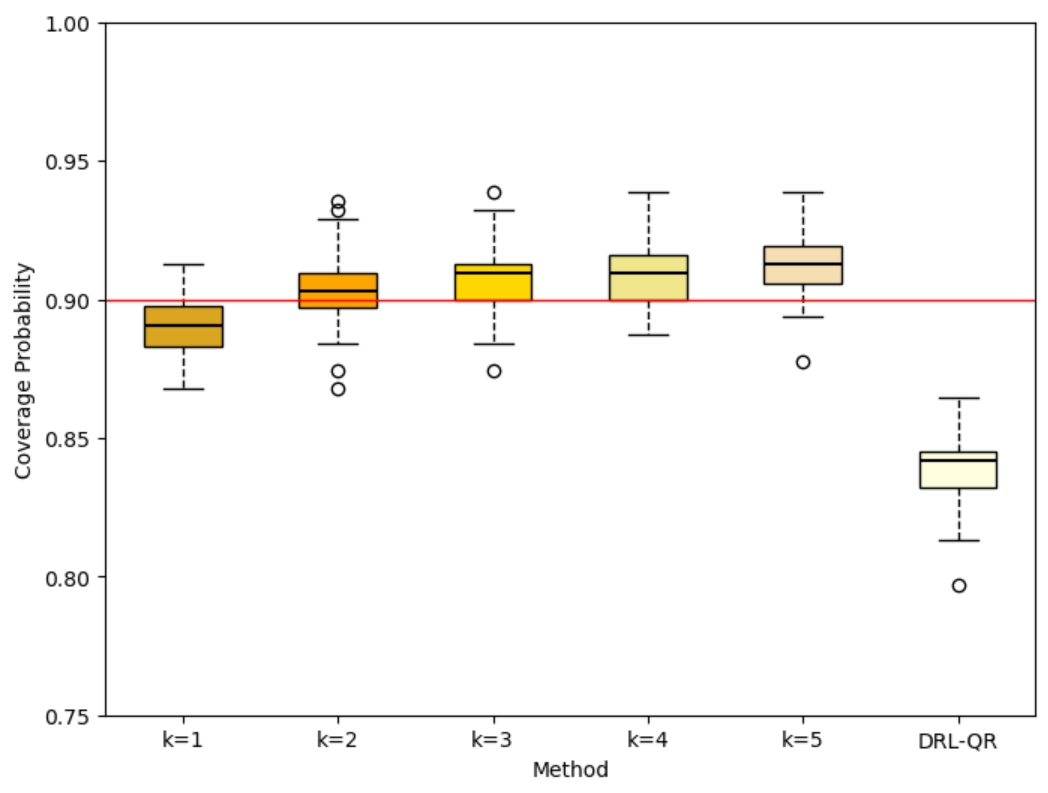}
		\end{minipage}
	}
	\subfigure[Average length]{
		\begin{minipage}[]{\columnwidth}
			\centering
			\includegraphics[width=.245\columnwidth]{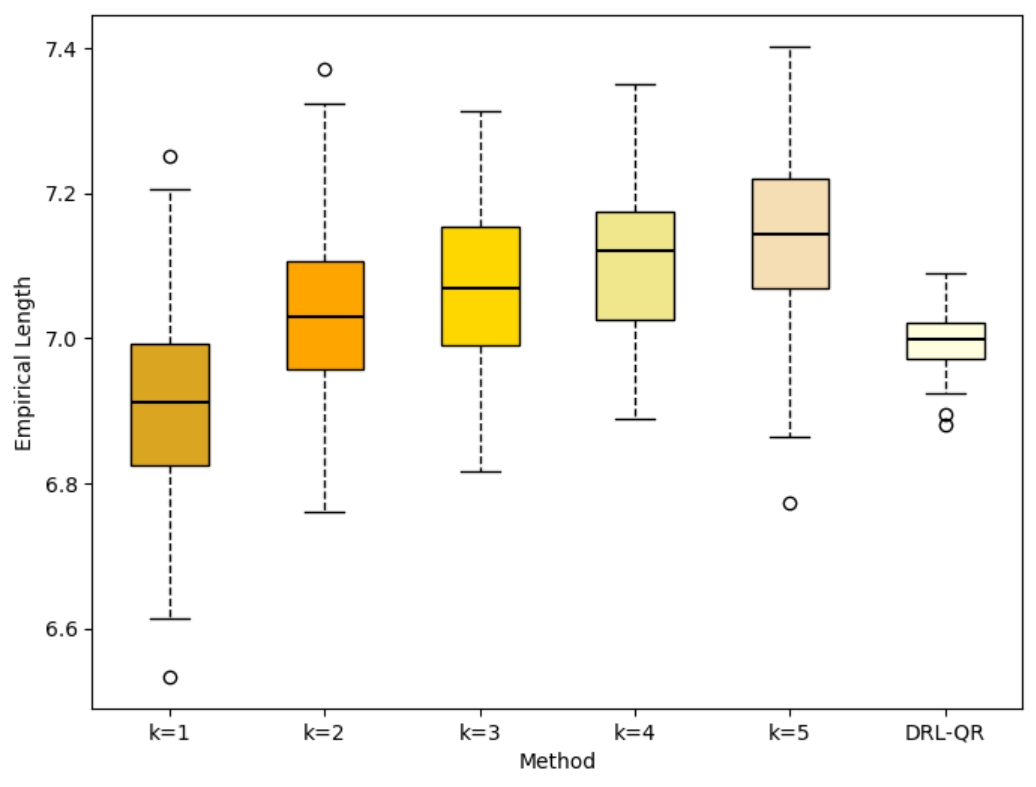}
			\includegraphics[width=.245\columnwidth]{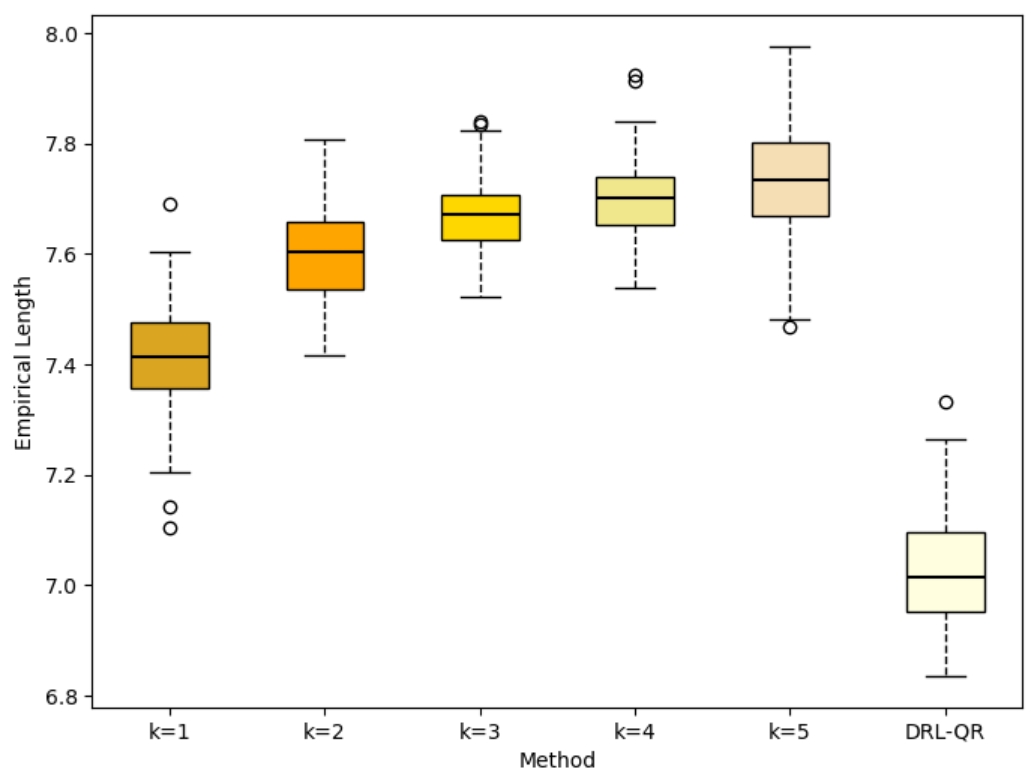}
            \includegraphics[width=.245\columnwidth]{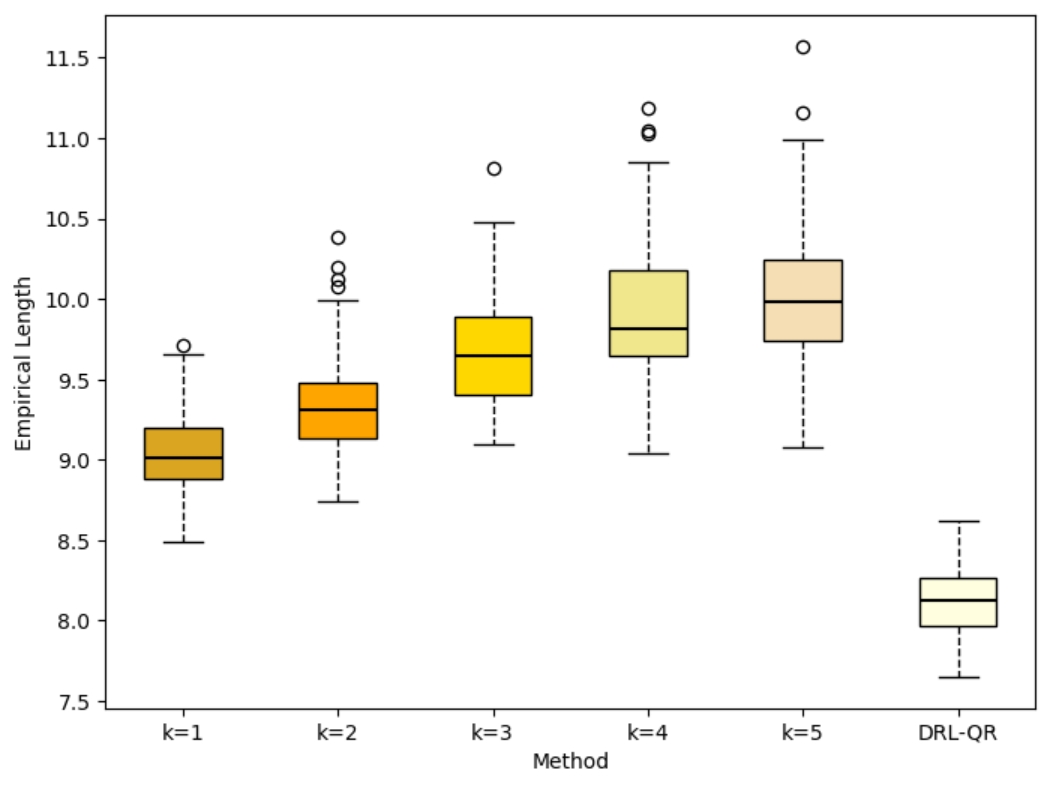}
			\includegraphics[width=.245\columnwidth]{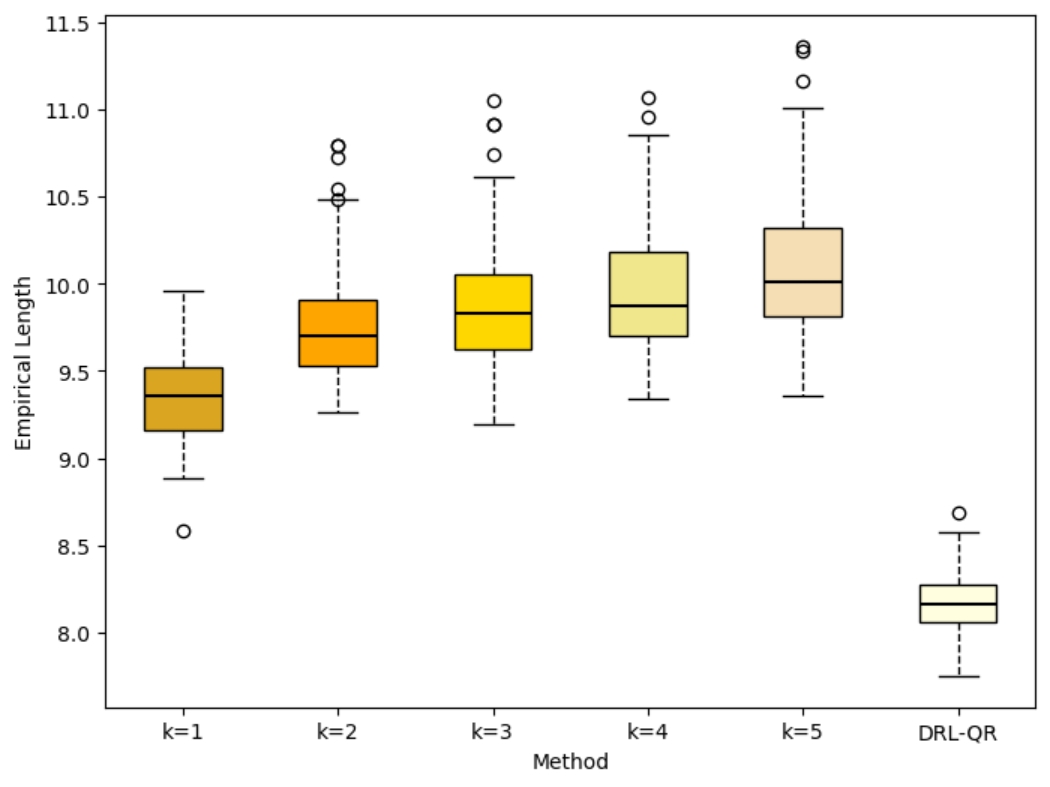}
		\end{minipage}
	}
    \caption{{Coverage probability and average interval length at the 90\% level for the proposed method with $k$-step pseudo-returns ($k = 1,\ldots,5$, from left to right) and DRL-QR (rightmost), under on-policy and off-policy settings in Example 1 (columns 1-2) and Example 2 (columns 3-4).}}\label{fig:ex1-2}
\end{figure}

\paragraph{Example 3: Mountain Car (adapted from \cite{kallus2020double})}  
We generate the dataset using a behavior policy defined as \(\pi_b = a\pi_{Q} + (1 - a)\pi_{U}\), where \(\pi_Q\) is a policy trained via Q-learning, \(\pi_U\) is a uniformly random policy, and \(a = 0.3\). The target policy is constructed similarly with \(a = 0.2\), reflecting an off-policy setting. To conserve space, implementation details and results are provided in the supplementary material. As a benchmark, we apply kernel density estimation (KDE) to approximate the return distribution from Monte Carlo rollouts and construct baseline prediction intervals using quantiles (KDE-QR). As shown in Figure 1 of the supplementary material, our method effectively corrects the model bias in KDE and achieves near-nominal 90\% coverage, highlighting the robustness of the proposed CP framework in a complex, continuous control task.

\section{Conclusion}
In this paper, we propose a novel CP framework for infinite-horizon policy evaluation with  asymptotic  coverage guarantees. By constructing $k$-step pseudo-returns, our method balances predictive accuracy and statistical efficiency, addressing key challenges in long-horizon evaluation. This formulation enables the construction of valid PIs without relying on full trajectory rollouts. Although the choice of $k$ remains underexplored, we suggest practical remedies such as evaluating stability across multiple 
$k$ values (e.g., $k=1,\ldots,5$) or aggregating PIs  across different $k$.  Since these intervals are correlated, aggregation is nontrivial. A promising direction is to construct a unified prediction region by combining the corresponding p-values, leveraging the connection between prediction intervals and hypothesis testing. Methods such as the Cauchy Combination Test \citep{liu2020cauchy}, which are robust to arbitrary dependencies, offer a viable approach. Moreover, extending our framework to policy optimization represents an exciting avenue for future work and could further broaden the applicability of conformal prediction in RL.

\section*{Acknowledgement} Zhang's research was supported by the National Natural Science Foundation of China (Grant No. 12471280) and the Shanghai Municipal Education Commission (Grant No. 2024AI01002). Liu's research was supported by the National Natural Science Foundation of China (Grant No. 12571283).

\bibliographystyle{plain}
\bibliography{main}

\newpage
\section*{NeurIPS Paper Checklist}

\begin{enumerate}

\item {\bf Claims}
    \item[] Question: Do the main claims made in the abstract and introduction accurately reflect the paper's contributions and scope?
    \item[] Answer: \answerYes{}.
    \item[] Justification: The abstract and introduction clearly state the contributions: we develop a novel conformal prediction (CP) framework to construct prediction intervals (PIs) for reinforcement learning (RL) settings, addressing key challenges such as unobserved returns, temporal dependencies, and distribution shifts. We further establish asymptotic lower bounds on coverage based on Wasserstein metrics and demonstrate the effectiveness of our method through empirical studies on both synthetic data and the Mountain Car environment.

\item {\bf Limitations}
    \item[] Question: Does the paper discuss the limitations of the work performed by the authors?
    \item[] Answer: \answerYes{}.
    \item[] Justification: The conclusion section outlines the limitations of the proposed method and proposes potential directions for future investigation.

\item {\bf Theory assumptions and proofs}
    \item[] Question: For each theoretical result, does the paper provide the full set of assumptions and a complete (and correct) proof?
    \item[] Answer: \answerYes{}.
    \item[] Justification: All the theorems, formulas, and proofs in the paper should be numbered and cross-referenced. All assumptions should be clearly stated or referenced in the statement of any theorems. Theorems and Lemmas that the proof relies upon should be properly referenced.

    \item {\bf Experimental result reproducibility}
    \item[] Question: Does the paper fully disclose all the information needed to reproduce the main experimental results of the paper to the extent that it affects the main claims and/or conclusions of the paper (regardless of whether the code and data are provided or not)?
    \item[] Answer: \answerYes{}.
    \item[] Justification: Section 4.1 specifies datasets, model sizes, hyper-parameters, and links (in Appendix) to an open GitHub repository, enabling faithful replication of the core experiments.

\item {\bf Open access to data and code}
    \item[] Question: Does the paper provide open access to the data and code, with sufficient instructions to faithfully reproduce the main experimental results, as described in supplemental material?
    \item[] Answer: \answerYes{}.
    \item[] Justification: : Code is publicly released on GitHub, and all referenced datasets are publicly available, with citations provided for each.

\item {\bf Experimental setting/details}
    \item[] Question: Does the paper specify all the training and test details (e.g., data splits, hyperparameters, how they were chosen, type of optimizer, etc.) necessary to understand the results?
    \item[] Answer: \answerYes{}.
    \item[] Justification: All the experimental settings are specified at the beginning and section 4, and details such as training and test sample sizes, DRL training details appear in implementation details of Section 4, offering sufficient context.

\item {\bf Experiment statistical significance}
    \item[] Question: Does the paper report error bars suitably and correctly defined or other appropriate information about the statistical significance of the experiments?
    \item[] Answer: \answerYes{}.
    \item[] Justification: : Results are reported as boxplots and medians.

\item {\bf Experiments compute resources}
    \item[] Question: For each experiment, does the paper provide sufficient information on the computer resources (type of compute workers, memory, time of execution) needed to reproduce the experiments?
    \item[] Answer: \answerYes{}.
    \item[] Justification: The paper provide the amount of compute required for each of the individual experimental runs as well as estimate the total compute. 
    
\item {\bf Code of ethics}
    \item[] Question: Does the research conducted in the paper conform, in every respect, with the NeurIPS Code of Ethics \url{https://neurips.cc/public/EthicsGuidelines}?
    \item[] Answer: \answerYes{}.
    \item[] Justification: The study uses only publicly available data, releases code responsibly, involves no human subjects, and follows standard ethical practice.

\item {\bf Broader impacts}
    \item[] Question: Does the paper discuss both potential positive societal impacts and negative societal impacts of the work performed?
    \item[] Answer: \answerNo{}.
    \item[] Justification:  The manuscript does not contain a Broader-Impact discussion of how the proposed method might be beneficial or misused.

\item {\bf Safeguards}
    \item[] Question: Does the paper describe safeguards that have been put in place for responsible release of data or models that have a high risk for misuse (e.g., pretrained language models, image generators, or scraped datasets)?
    \item[] Answer:  \answerNo{}.
    \item[] Justification: : Although the model is publicly released, the paper does not outline usage restrictions, filters, or other safeguards against malicious exploitation.

\item {\bf Licenses for existing assets}
    \item[] Question: Are the creators or original owners of assets (e.g., code, data, models), used in the paper, properly credited and are the license and terms of use explicitly mentioned and properly respected?
    \item[] Answer: \answerYes{}.
    \item[] Justification: Prior models and datasets are cited.

\item {\bf New assets}
    \item[] Question: Are new assets introduced in the paper well documented and is the documentation provided alongside the assets?
    \item[] Answer: \answerNA{}.
    \item[] Justification: The work does not release a new dataset; the proposed policy evaluation framework is a not provided as a separate asset.

\item {\bf Crowdsourcing and research with human subjects}
    \item[] Question: For crowdsourcing experiments and research with human subjects, does the paper include the full text of instructions given to participants and screenshots, if applicable, as well as details about compensation (if any)? 
    \item[] Answer: \answerNA{}.
    \item[] Justification: No crowdsourcing or human-subject research is involved.

\item {\bf Institutional review board (IRB) approvals or equivalent for research with human subjects}
    \item[] Question: Does the paper describe potential risks incurred by study participants, whether such risks were disclosed to the subjects, and whether Institutional Review Board (IRB) approvals (or an equivalent approval/review based on the requirements of your country or institution) were obtained?
    \item[] Answer: \answerNA{}.
    \item[] Justification: The study does not involve human subjects and therefore requires no IRB review.

\item {\bf Declaration of LLM usage}
    \item[] Question: Does the paper describe the usage of LLMs if it is an important, original, or non-standard component of the core methods in this research? Note that if the LLM is used only for writing, editing, or formatting purposes and does not impact the core methodology, scientific rigorousness, or originality of the research, declaration is not required.
    %this research? 
    \item[] Answer: \answerNA{}
    \item[] Justification: This research does not involve LLMs as any important, original, or non-standard components.

\end{enumerate}

%--------------------------------------------------------
\newpage

\appendix

\section{Preliminaries}

We impose the following standard assumptions in RL. In our notation, $\mathcal{P}$ denotes a probability distribution.

\begin{assumption}[Markov Property]
\label{mdp_assumption}
The decision process satisfies the Markov property: the next state and reward depend only on the current state and action. Formally, for all $t$,
\[
\mathcal{P}(S_{t+1}, R_t \mid A_t, S_t, R_{t-1}, A_{t-1}, S_{t-1}, \ldots, S_0) = \mathcal{P}(S_{t+1}, R_t \mid S_t, A_t).
\]
\end{assumption}

\begin{assumption}[Time-homogeneity]
\label{time_homogeneity}
The distribution of the transition and reward remains stationary over time. Specifically, for all $t$, the joint distribution of the next state and reward given the current state and action satisfies
\[
\mathcal{P}(S_{t+1}, R_t \mid S_t, A_t) = \mathcal{P}(S_{t}, R_{t-1} \mid S_{t-1}, A_{t-1}).
\]
\end{assumption}

\begin{assumption}[Stationary Policy]
The policy is stationary and Markovian: the action at each time step depends only on the current state and not on the full history. Formally, for all $t$,
\[
\pi_t(A_t \mid S_t, R_{t-1}, A_{t-1}, S_{t-1}, \ldots,S_0) = \pi(A_t \mid S_t).
\]
\end{assumption}

Before proceeding with theoretical analysis, we introduce the  distributional Bellman operator  and several related results. We use $\eta^\pi(s)$ to denote the distribution of the return starting from the initial state $s$ following policy $\pi$, that is,
\bas
\eta^\pi(s) 
:= \mathcal{P}^\pi ( G \,|\, S_0=s)
:=  \mathcal{P}^\pi (\sum_{t=0}^\infty \gamma^t R_t \,|\, S_0=s ).
\eas
We define the \textbf{distributional Bellman operator} $\mathcal{T}^\pi$ as the following transformation:
\bas
(\mathcal{T}^\pi\eta^\pi)(s) = \mathcal{P}^\pi( R+\gamma G^{\pi}(S') \,|\,  s )
\eas
where the transition $(s, R,S')$ is generated by sampling an action from $\pi$, observing the reward $R$,  and transitioning to the next state $S^\prime$, and $G^{\pi}(S^\prime)\sim \eta^{\pi}(S^\prime)$. 

Under the time-homogeneity assumption, $\eta^\pi$ satisfies the fixed-point condition:
\bas
\eta^\pi(s) = (\mathcal{T}^\pi\eta^\pi)(s), \quad \forall s\in\mathcal{S}.
\eas

A key property of the distributional Bellman operator $\mathcal{T}^\pi$ is that it is a $\gamma$-contraction w.r.t. the Wasserstein distance, stated in Proposition \ref{prop:contraction-prop}.  
The $p$-Wasserstein distance between two measures $\mu$ and $\nu$ on the real space $\mathbb{R}$ is defined as
\bas
W_p(\mu, \nu) 
:=  \inf_{\kappa \in \Gamma(\mu,\nu)} \left( \int_{\mathbb{R}\times \mathbb{R}} |x - y|^p \kappa(dx, dy)\right)^{1/p}
\eas
where $\Gamma(\mu, \nu)$ is the set of all couplings with marginals $\mu$ and $\nu$. We begin by presenting some fundamental results on the Wasserstein distance.

\begin{proposition}[Duality Formula for 1-Wasserstein Distance \citep{villani2008optimal}]
	\label{duality_wasserstein}
	For any measures $\mu$ and $\nu$,
	\bas
	W_1(\mu,\nu) = 
	\sup_{\psi: \Vert\psi\Vert_{\rm Lip}\le 1}
	\left\{ \int \psi\,d\mu - \int \psi\,d\nu \right\},
	\eas
	where  ``$ \Vert\psi\Vert_{\rm Lip}\le 1$''  means that  $\psi$ 
    is a 1-Lipschitz   function.
\end{proposition}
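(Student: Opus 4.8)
The plan is to prove the equality by establishing the two inequalities separately, exploiting that the ground space here is $\mathbb{R}$, where optimal transport for a convex cost is realized by the monotone (quantile) coupling; write $F_\mu,F_\nu$ for the distribution functions of $\mu,\nu$ and $F_\mu^{-1},F_\nu^{-1}$ for their quantile functions. The inequality $\sup_{\Vert\psi\Vert_{\rm Lip}\le 1}\{\int\psi\,d\mu-\int\psi\,d\nu\}\le W_1(\mu,\nu)$ I would obtain by a one-line coupling argument: for any $1$-Lipschitz $\psi$ and any $\kappa\in\Gamma(\mu,\nu)$,
\[
\int\psi\,d\mu-\int\psi\,d\nu=\int\bigl(\psi(x)-\psi(y)\bigr)\,\kappa(dx,dy)\le\int|x-y|\,\kappa(dx,dy),
\]
so taking the infimum over $\kappa$ and then the supremum over $\psi$ finishes this direction (restricting first to bounded $1$-Lipschitz $\psi$ if an integral is not a priori well defined).

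For the reverse inequality I would argue constructively. First I would record the two classical one-dimensional facts that (i) the comonotone coupling $\kappa^\star=(F_\mu^{-1},F_\nu^{-1})_{\#}\,\mathrm{Leb}|_{[0,1]}$ minimizes $\int|x-y|\,d\kappa$ over $\Gamma(\mu,\nu)$ --- a rearrangement result valid for any cost $h(x-y)$ with $h$ convex --- and (ii) the layer-cake identity $\int_0^1|F_\mu^{-1}(u)-F_\nu^{-1}(u)|\,du=\int_{\mathbb R}|F_\mu(x)-F_\nu(x)|\,dx$, which together give $W_1(\mu,\nu)=\int_{\mathbb R}|F_\mu-F_\nu|\,dx$. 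Then I would exhibit an explicit maximizer by setting
\[
\psi^\star(x):=\int_0^x\mathrm{sgn}\bigl(F_\nu(t)-F_\mu(t)\bigr)\,dt,
\]
which is absolutely continuous with $|\psi^{\star\prime}|\le 1$ a.e., hence $1$-Lipschitz, and for which a Fubini/integration-by-parts computation (using $\mu(\mathbb R)-\nu(\mathbb R)=0$, and a truncation when first moments are infinite) gives
\[
\int\psi^\star\,d\mu-\int\psi^\star\,d\nu=-\int_{\mathbb R}\bigl(F_\mu(x)-F_\nu(x)\bigr)\psi^{\star\prime}(x)\,dx=\int_{\mathbb R}|F_\nu(x)-F_\mu(x)|\,dx=W_1(\mu,\nu).
\]
If $W_1(\mu,\nu)=\infty$, the same identity applied to $\psi^\star$ truncated to $[-n,n]$ produces bounded $1$-Lipschitz functions whose gap diverges, so the supremum is infinite too, and combining the two directions yields the claimed equality.

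The main obstacle is fact (i), the optimality of the monotone coupling for the cost $|x-y|$ on $\mathbb R$; I would establish it either from the Hoeffding--Fr\'echet rearrangement inequality for costs $h(x-y)$ with $h$ convex, or from cyclical monotonicity of the support of an optimal plan, which on $\mathbb R$ forces the support onto a nondecreasing set. The integrability and integration-by-parts technicalities near $\pm\infty$ are routine and disappear once $\mu,\nu$ have finite first moments, which is automatic whenever $W_1(\mu,\nu)<\infty$. A shorter alternative, if one is content to cite \citep{villani2008optimal}, is to invoke the general Kantorovich duality together with the observation that the $c$-transform for $c(x,y)=|x-y|$ maps into and is idempotent on the class of $1$-Lipschitz functions, which delivers the identity on an arbitrary Polish space with no reduction to $\mathbb R$.
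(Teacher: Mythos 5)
Your argument is correct, but note that the paper does not prove this proposition at all: it is stated as a known result and cited directly from Villani's monograph, where it follows from general Kantorovich duality on Polish spaces. Your route is a self-contained, purely one-dimensional proof: the easy inequality $\sup_\psi\{\int\psi\,d\mu-\int\psi\,d\nu\}\le W_1$ by the coupling bound, and the reverse inequality by combining the optimality of the monotone (quantile) coupling for convex costs with the identity $\int_0^1|F_\mu^{-1}-F_\nu^{-1}|\,du=\int_{\mathbb{R}}|F_\mu-F_\nu|\,dx$, then exhibiting the explicit potential $\psi^\star(x)=\int_0^x\mathrm{sgn}(F_\nu-F_\mu)\,dt$ that attains $\int_{\mathbb{R}}|F_\mu-F_\nu|\,dx$. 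This is more elementary and constructive than the cited general duality, and it is fully adequate here because the paper only ever applies $W_1$ to return distributions on $\mathbb{R}$; the price is that it does not extend beyond the real line, whereas the $c$-transform argument you mention as an alternative does. Two small points to tighten: the claim that $W_1(\mu,\nu)<\infty$ forces finite first moments is false (take $\mu=\nu$ Cauchy), but your integration by parts only needs $\int_{\mathbb{R}}|F_\mu-F_\nu|\,dx<\infty$, which is exactly what $W_1(\mu,\nu)<\infty$ supplies, so the Fubini computation of $\int\psi^\star\,d(\mu-\nu)$ remains valid even when $\int\psi^\star\,d\mu$ and $\int\psi^\star\,d\nu$ are individually infinite; and the supremum in the statement should then be read over those $\psi$ for which the difference of integrals is well defined, which your truncation to $[-n,n]$ already handles.
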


\begin{proposition}\label{prop:contraction-1-W}
	Suppose $\Vert f\Vert_{\rm Lip}\le 1$ and $b_f$ is an operator on measure such that $b_f(\mu)(A) = \mu(f^{-1}(A))$ for any measure $\mu$ and Borel set A. Then $b_f$ is a contraction under 1-Wasserstein distance, i.e., $W_1(b_f(\mu), b_f(\nu)) \leq W_1(\mu, \nu)$ for all measures  $ \mu, \nu$.
\end{proposition}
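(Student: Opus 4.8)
The plan is to read the contraction property off directly from the Kantorovich--Rubinstein duality in Proposition~\ref{duality_wasserstein}, exploiting the elementary fact that precomposition with a $1$-Lipschitz map sends $1$-Lipschitz functions to $1$-Lipschitz functions.

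First I would record the change-of-variables identity for the push-forward operator $b_f$: for every integrable $\psi$, $\int \psi\, d\,b_f(\mu) = \int (\psi\circ f)\, d\mu$, which follows from the defining relation $b_f(\mu)(A)=\mu(f^{-1}(A))$ by verifying it on indicators and extending by linearity and monotone convergence. Substituting this into the duality formula applied to $b_f(\mu)$ and $b_f(\nu)$ gives
\[
W_1(b_f(\mu),b_f(\nu)) = \sup_{\Vert\psi\Vert_{\rm Lip}\le 1}\Big\{\int (\psi\circ f)\,d\mu - \int (\psi\circ f)\,d\nu\Big\}.
\]
Then I would observe that $\Vert\psi\Vert_{\rm Lip}\le 1$ together with $\Vert f\Vert_{\rm Lip}\le 1$ yields $|\psi(f(x))-\psi(f(y))|\le|f(x)-f(y)|\le|x-y|$, so each test function $\psi\circ f$ above is itself $1$-Lipschitz. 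Hence the supremum on the right is over a subclass of all $1$-Lipschitz functions and is therefore at most $\sup_{\Vert\varphi\Vert_{\rm Lip}\le 1}\{\int\varphi\,d\mu-\int\varphi\,d\nu\}$, which equals $W_1(\mu,\nu)$ by Proposition~\ref{duality_wasserstein}. This delivers $W_1(b_f(\mu),b_f(\nu))\le W_1(\mu,\nu)$.

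An alternative, duality-free route proceeds through couplings: for any $\kappa\in\Gamma(\mu,\nu)$, the push-forward $(f\times f)_\sharp\kappa$ lies in $\Gamma(b_f(\mu),b_f(\nu))$, and $\int|x-y|\,d(f\times f)_\sharp\kappa=\int|f(x)-f(y)|\,d\kappa\le\int|x-y|\,d\kappa$; taking the infimum over $\kappa$ gives the claim. I do not anticipate a genuine obstacle here: the only points needing a line of care are the push-forward change-of-variables identity and the (immediate) Lipschitz-composition estimate, and the duality-based argument sidesteps any discussion of the existence of an optimal coupling.
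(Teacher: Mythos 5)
Your argument is correct and coincides with the paper's own proof: both invoke the Kantorovich--Rubinstein duality of Proposition~\ref{duality_wasserstein}, apply the change-of-variables identity for the push-forward, and observe that $\psi\circ f$ remains $1$-Lipschitz so the supremum is taken over a subclass of test functions. The coupling-based alternative you sketch is also valid but is not needed; the paper stops at the duality argument exactly as you do.
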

\begin{proof}
   For any 1-Lipschitz function $\psi$, the composition $\psi \circ f$ is also 1-Lipschitz, since the composition of 1-Lipschitz functions preserves the Lipschitz constant. By Proposition \ref{duality_wasserstein}, we have, for any measures $\mu$ and $\nu$,
   \bas
   W_1\left(b_f(\mu),b_f(\nu)\right) 
   &=& 	\sup_{\psi:\Vert\psi\Vert_{\rm Lip}\le 1}
   \left\{ \int \psi\circ f\,d\mu - \int \psi \circ f \,d\nu \right\} 
   \\
   &\le& 	\sup_{\psi:\Vert\tilde\psi\Vert_{\rm Lip}\le 1}
   \left\{ \int \tilde\psi\,d\mu - \int \tilde \psi\,d\nu \right\}
   \\
   &=& 	W_1(\mu,\nu),
   \eas
where the first equation follows from the change-of-variables formula for measures.
\end{proof}
Applying Proposition \ref{prop:contraction-1-W}, for any real random variables $X$ and $Y$ with laws $\mathcal{P}_{X}$ and $\mathcal{P}_{Y}$, since $f(x) = |x-a|$ for any $a\in\mathbb{R}$ is 1-Lipschitz continuous, we have $W_1(\mathcal{P}_{|X-a|},\mathcal{P}_{|Y-a|}) \le W_1(\mathcal{P}_X,\mathcal{P}_Y)$.

We now state the contraction property of the distributional Bellman operator $\mathcal{T}^\pi$. Let $\mathscr{P}$ be the set of all probability distributions over $\mathbb{R}$. Please note that the conditional return distribution given a  state ($s \in \mathcal{\mathcal{S}}$) is a  distribution that is indexed by the state $s$. That is, $\eta^\pi(\cdot) \in \mathscr{P}^{\mathcal{S}}$, and $\mathscr{P}^{\mathcal{S}}$ contains all possible conditional return distributions. We define the Wasserstein distance of two conditional distributions $\mu(\cdot), \nu(\cdot) \in \mathscr{P}^{\mathcal{S}}$ as $\bar W_p(\mu(\cdot),\nu(\cdot)) := \sup_{s \in \mathcal{S}} W_p(\mu(s),\nu(s))$.

% For any measure $\mu$ and $\nu$, we denote $\bar W_p(\mu,\nu) := \sup_{s \in \mathcal{S}} W_p(\mu,\nu)$. Let $\mathscr{P}$ be the set of all probability distributions over $\mathbb{R}$. {\red [Given a state $s$, for any measures $\eta(s)$ and $\eta^\prime(s)$, we define $\bar W_p(\eta(s),\eta^\prime(s)) := \sup_{s \in \mathcal{S}} W_p(\eta(s),\eta^\prime(s))$. Let $\mathscr{P}$ be the set of all probability distributions over $\mathbb{R}$].}

\begin{proposition}\label{prop:contraction-prop}[\cite{bellemare2023distributional}, Proposition 4.15]
	The distributional Bellman operator is a $\gamma$-contraction on $\mathscr{P}^\mathcal{S}$ w.r.t. the supreme $p$-Wasserstein metric for $p \in [1, \infty)$. That is, for any $\eta, \eta' \in \mathscr{P}^\mathcal{S}$ , we have $\bar W_p (\mathcal{T}^\pi\eta, \mathcal{T}^\pi\eta') \le \gamma \bar W_p (\eta, \eta')$. 
\end{proposition}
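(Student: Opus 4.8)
The plan is to establish the pointwise bound $W_p\!\big((\mathcal{T}^\pi\eta)(s),(\mathcal{T}^\pi\eta')(s)\big)\le\gamma\,\bar W_p(\eta,\eta')$ for every fixed $s\in\mathcal{S}$ and then take $\sup_s$; if $\bar W_p(\eta,\eta')=\infty$ there is nothing to prove, so assume it is finite. Fix $s$. A sample from $(\mathcal{T}^\pi\eta)(s)$ is generated by drawing $A\sim\pi(\cdot\mid s)$, then $(R,S')\sim\mathcal{P}(\cdot\mid s,A)$, and finally $G\sim\eta(S')$ with $G\perp(A,R)\mid S'$, and similarly a sample from $(\mathcal{T}^\pi\eta')(s)$ uses $G'\sim\eta'(S')$. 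The idea is to couple the two generative processes by reusing the \emph{same} realization of $(A,R,S')$ in both and, conditionally on $\{S'=s'\}$, drawing $(G,G')$ from an optimal $W_p$-coupling $\kappa_{s'}$ of $\eta(s')$ and $\eta'(s')$ --- in the scalar-return setting this is the explicit comonotone coupling $G=F_{\eta(s')}^{-1}(U),\ G'=F_{\eta'(s')}^{-1}(U)$ driven by a common uniform $U$, for which $\mathbb{E}\big[|G-G'|^p\mid S'=s'\big]=W_p(\eta(s'),\eta'(s'))^p$.

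Under this coupling, $(R+\gamma G,\ R+\gamma G')$ has marginals $(\mathcal{T}^\pi\eta)(s)$ and $(\mathcal{T}^\pi\eta')(s)$, hence is admissible in the definition of $W_p$, and
\[
W_p\!\big((\mathcal{T}^\pi\eta)(s),(\mathcal{T}^\pi\eta')(s)\big)^p
\le \mathbb{E}\big[|(R+\gamma G)-(R+\gamma G')|^p\big]
= \gamma^p\,\mathbb{E}\Big[\mathbb{E}\big[|G-G'|^p\mid S'\big]\Big]
= \gamma^p\,\mathbb{E}\big[W_p(\eta(S'),\eta'(S'))^p\big]
\le \gamma^p\,\bar W_p(\eta,\eta')^p,
\]
where the shared reward $R$ cancels, the deterministic factor $\gamma$ pulls out, the tower property is applied over $S'$, the conditional optimality of $\kappa_{S'}$ is used, and finally $W_p(\eta(s'),\eta'(s'))\le\bar W_p(\eta,\eta')$ pointwise. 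Taking $p$-th roots and then $\sup_s$ yields $\bar W_p(\mathcal{T}^\pi\eta,\mathcal{T}^\pi\eta')\le\gamma\,\bar W_p(\eta,\eta')$.

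The only delicate point is that the family $s'\mapsto\kappa_{s'}$ must be measurable for its mixture over $S'$ to define a genuine joint law; in the one-dimensional return setting this is automatic, because the quantile functions $F_{\eta(\cdot)}^{-1}$ and $F_{\eta'(\cdot)}^{-1}$ are jointly measurable whenever $s\mapsto\eta(s),\eta'(s)$ are, so the shared-uniform construction above is itself a measurable selection. (For general Polish spaces one would instead appeal to a measurable-selection theorem for optimal transport plans; alternatively, when $p=1$ one can bypass couplings by combining the Lipschitz-pushforward bound of Proposition~\ref{prop:contraction-1-W} --- extended from Lipschitz constant $1$ to $\gamma$ via the map $x\mapsto\gamma x$ --- with convexity of $W_1$ under mixing over the transition $(A,R,S')$.) I expect this measurability/selection step (and, on the alternative route, the mixture-convexity inequality for $W_p$) to be the only parts needing care; the rest is the one-line coupling estimate above.
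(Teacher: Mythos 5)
The paper does not prove this proposition at all: it is imported verbatim from Bellemare et al.\ (2023), Proposition 4.15, and used as a black box. Your coupling argument is correct and is essentially the standard proof of that result (and the one given in the cited source): share the transition $(A,R,S')$, use the comonotone (quantile) coupling of $\eta(S')$ and $\eta'(S')$ conditionally on $S'$, let the common reward cancel so the factor $\gamma^p$ pulls out, apply the tower property, and bound $W_p(\eta(s'),\eta'(s'))$ by the supremum. Your remark that measurability of the optimal couplings is automatic for scalar returns via the shared-uniform quantile construction is exactly the right way to dispose of the only delicate point, so nothing is missing.
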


We denote the learned DRL model in the proposed prediction procedure by $\widehat\eta^\pi(s)$. It is clear that given $S_t$, the one-step pseudo-return $\widetilde{G}^{(1)}(S_t)=R_t+\gamma \widetilde{G}^{\pi}(S_{t+1})$ with $\widetilde{G}^{\pi}(S_{t+1}) \sim \widehat\eta^\pi(S_{t+1})$, follows the distribution $(\mathcal{T}^\pi\widehat\eta^\pi)(S_t)$. The following proposition shows that a similar conclusion also holds when the step width is $k$. That is, the $k$-step pseudo-return starting from $S_t$ follows $((\mathcal{T}^\pi)^k\widehat\eta^\pi)(S_t)$.

\begin{proposition}[\cite{bellemare2023distributional}, Lemma 4.33]
	\label{k_times_operator}
	Let $\eta \in \mathscr{P}^\mathcal{S}$, and let $G$ be an instantiation of $\eta$. For $s \in \mathcal{S}$, if $(S_t, A_t, R_t)_{t\ge0}$ is a random trajectory with initial state $S_0 = s$ and generated by following $\pi$, independent of $G$, then $\sum_{t=0}^{k-1} \gamma_tR_t + \gamma^kG(S_k)$ is an instantiation of  $((\mathcal{T}^\pi)^k\eta)(s)$. 
\end{proposition}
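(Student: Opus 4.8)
The plan is to prove the proposition by induction on the step width $k$, peeling off one transition at a time and combining the definition of the distributional Bellman operator $\mathcal{T}^\pi$ with the Markov structure of the MDP. Throughout I treat an \emph{instantiation} of a conditional distribution $\mu(\cdot) \in \mathscr{P}^{\mathcal{S}}$ as a random variable whose conditional law given the indexing state $s$ equals $\mu(s)$.

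For the base case $k=1$, I would argue directly from the definition $(\mathcal{T}^\pi\eta)(s) = \mathcal{P}^\pi(R + \gamma G^\pi(S') \mid s)$: sampling $A_0 \sim \pi(\cdot\mid s)$, observing $R_0, S_1$, and drawing $G(S_1) \sim \eta(S_1)$ independently of the transition, the quantity $R_0 + \gamma G(S_1)$ is by construction an instantiation of $(\mathcal{T}^\pi\eta)(s)$, which is exactly the $k=1$ case.

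For the inductive step, assuming the claim for some $k \ge 1$, I would fix $s$, take a $\pi$-trajectory $(S_t,A_t,R_t)_{t\ge0}$ with $S_0 = s$ independent of $G$, and use the decomposition
\[
\sum_{t=0}^{k}\gamma^t R_t + \gamma^{k+1}G(S_{k+1}) = R_0 + \gamma\Big(\sum_{j=0}^{k-1}\gamma^{j}R_{j+1} + \gamma^{k}G(S_{k+1})\Big).
\]
By the Markov property, time-homogeneity, and policy stationarity (Assumptions~\ref{mdp_assumption}--\ref{time_homogeneity} together with the stationary-policy assumption), conditional on $S_1$ the shifted process $(S_{j+1},A_{j+1},R_{j+1})_{j\ge0}$ is distributed as a fresh $\pi$-trajectory started at $S_1$ and remains independent of $G$; the inductive hypothesis then identifies the parenthesized term, conditionally on $S_1$, as an instantiation of $((\mathcal{T}^\pi)^k\eta)(S_1)$. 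Applying the definition of $\mathcal{T}^\pi$ once more, now with input measure $(\mathcal{T}^\pi)^k\eta$, gives that $R_0 + \gamma\,[\text{instantiation of }((\mathcal{T}^\pi)^k\eta)(S_1)]$ is an instantiation of $(\mathcal{T}^\pi(\mathcal{T}^\pi)^k\eta)(s) = ((\mathcal{T}^\pi)^{k+1}\eta)(s)$, closing the induction.

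The main obstacle I anticipate is the measure-theoretic bookkeeping in the inductive step: making precise that removing $R_0$ leaves a process that is, conditionally on $S_1$, a $\pi$-trajectory from $S_1$ which is still independent of $G$, so that the inductive hypothesis and the definition of $\mathcal{T}^\pi$ can be invoked in succession. This is precisely where the Markov property, time-homogeneity, and stationarity of $\pi$ are needed; once those are used cleanly, the remaining steps are routine algebraic rearrangements.
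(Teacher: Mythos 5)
Your induction argument is correct. Note, however, that the paper itself offers no proof of this proposition: it is imported verbatim as Lemma 4.33 of \cite{bellemare2023distributional}, so there is no in-paper argument to compare against. Your proof is essentially the standard one for that lemma: the base case is immediate from the definition of $\mathcal{T}^\pi$, and the inductive step peels off the first transition, uses the Markov property, time-homogeneity, and stationarity of $\pi$ to see that, conditionally on $S_1$, the shifted trajectory is a fresh $\pi$-trajectory from $S_1$ independent of both $G$ and $(A_0,R_0)$, applies the inductive hypothesis conditionally, and then applies $\mathcal{T}^\pi$ once more to the measure $(\mathcal{T}^\pi)^k\eta$. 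You correctly flag the one genuinely delicate point, namely that the conditional law of the bracketed tail given the entire first transition $(A_0,R_0,S_1)$ must depend on $S_1$ alone; making that precise is exactly where the Markov property enters, and with it the argument is complete. (The $\gamma_t R_t$ in the statement is a typo for $\gamma^t R_t$, which you have implicitly corrected.)
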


Proposition \ref{k_times_operator} allows us to investigate the $k$-step pseudo-return. As discussed in the main paper, we measure the coverage gap using the distributional distance between the estimated return distribution and the true return distribution. Unlike traditional approaches that rely on total variation distance, we adopt the Wasserstein distance, motivated by the insights in \cite{xu2025wasserstein}. A key intermediary that links the coverage error and the Wasserstein distance is the Kolmogorov distance, which is defined as follows.

%The next lemma is useful in proving the validity of the proposed PI. The Kolmogorov distance can serve as a mediator between the coverage error and the wasserstein distance.

\begin{definition}[Kolmogorov Distance]
	$F_\mu$ and $F_\nu$ are the CDFs of probability measures $\mu$ and $\nu$ on $\mathbb{R}$, respectively. Kolmogorov distance between $\mu$ and $\nu$ is given by 
    \bas
    K(\mu, \nu) = \sup_{x\in\mathbb{R}} |F_\mu(x) - F_\nu (x)|.
    \eas
\end{definition}
 
\begin{lemma}[\cite{ross2011fundamentals}]
	If a probability measure $\mu$ in space $\mathbb{R}$ has Lebesgue density bounded by $L$, then for any probability measure $\nu$, $K(\mu, \nu) \le \sqrt{2LW_1(\mu, \nu)}$.
\end{lemma}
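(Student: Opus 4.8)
The plan is to reduce the Kolmogorov bound to an elementary one-dimensional calculus estimate built on the $L^1$ representation of the $1$-Wasserstein distance. First I would record (or invoke) the identity $W_1(\mu,\nu) = \int_{\mathbb{R}} |F_\mu(t) - F_\nu(t)|\,dt$, the standard $L^1$ formula for $W_1$ on the line, which can be obtained from the quantile coupling or from the duality formula in Proposition~\ref{duality_wasserstein}. This identity is the workhorse: it is what lets the bounded-density hypothesis on $\mu$ enter the estimate quantitatively.

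Next, fix an arbitrary $x \in \mathbb{R}$ and, without loss of generality, suppose $\epsilon := F_\mu(x) - F_\nu(x) \ge 0$ (the opposite sign is handled by the mirror-image argument, applied to the right of $x$ rather than the left). The bounded-density hypothesis gives $F_\mu(x) - F_\mu(t) = \int_t^x f_\mu \le L(x-t)$ for $t < x$, i.e. $F_\mu(t) \ge F_\mu(x) - L(x-t)$, while monotonicity of $F_\nu$ gives $F_\nu(t) \le F_\nu(x)$. Subtracting, $F_\mu(t) - F_\nu(t) \ge \epsilon - L(x - t)$, which is strictly positive on the interval $(x - \epsilon/L,\, x)$. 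Hence $|F_\mu(t) - F_\nu(t)| \ge \epsilon - L(x-t)$ there, and integrating over just that interval,
\[
W_1(\mu,\nu) = \int_{\mathbb{R}} |F_\mu - F_\nu| \ \ge\ \int_{x - \epsilon/L}^{x} \bigl(\epsilon - L(x-t)\bigr)\,dt \ =\ \frac{\epsilon^2}{2L}.
\]
Rearranging yields $\epsilon \le \sqrt{2 L W_1(\mu,\nu)}$; the symmetric case bounds $F_\nu(x) - F_\mu(x)$ in the same way using $F_\mu(t) \le F_\mu(x) + L(t-x)$ for $t > x$. Taking the supremum over $x$ gives $K(\mu,\nu) \le \sqrt{2 L W_1(\mu,\nu)}$.

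I expect the only real subtleties to be twofold. First, one should justify the $L^1$ representation of $W_1$ cleanly; if one prefers to avoid citing it, an alternative is to build an explicit $1$-Lipschitz test function (a truncated ``ramp'' supported near $x$) and optimize over its width via Proposition~\ref{duality_wasserstein}, but that route tends to lose the sharp constant, so going through the $L^1$ identity is preferable. Second, since the density bound is assumed only for $\mu$, the two sign cases must each exploit that bound on the correct side of $x$ (left for $F_\mu(x) > F_\nu(x)$, right for $F_\nu(x) > F_\mu(x)$); everything else is the computation of a triangular integral.
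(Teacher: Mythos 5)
Your proof is correct. Note first that the paper does not prove this lemma at all: it is imported verbatim from the cited reference (Ross, 2011, where it appears as the standard Kolmogorov--Wasserstein comparison under a bounded density), so there is no in-paper argument to match. Your route --- the Vallender $L^1$ identity $W_1(\mu,\nu)=\int_{\mathbb{R}}|F_\mu-F_\nu|\,dt$ combined with the triangular lower bound $\int_{x-\epsilon/L}^{x}(\epsilon-L(x-t))\,dt=\epsilon^2/(2L)$ --- is sound, the two sign cases correctly use the density bound on the appropriate side of $x$, and the constant $\sqrt{2L}$ comes out right. The one inaccuracy is in your side remark: the alternative proof via a $1$-Lipschitz ramp test function and Proposition~\ref{duality_wasserstein} does \emph{not} lose the constant, provided one bounds the ramp's contribution by the area of the triangle, $\int_x^{x+\epsilon}(1-(t-x)/\epsilon)f_\mu(t)\,dt\le L\epsilon/2$ rather than $L\epsilon$; optimizing $W_1/\epsilon+L\epsilon/2$ over $\epsilon$ then recovers exactly $\sqrt{2LW_1}$, and this is essentially the argument in the cited source. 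Either way, your derivation is a complete and correct self-contained proof of the lemma as stated.
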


\section{Proof of Theorem 1}

We now present the proof of the main theorem for the proposed PIs in the on-policy evaluation setting.

\begin{proof}[Proof of Theorem 1]
		Since $\widehat{C}_{N,\alpha}^{\rm on}(S_{\rm test})$ combines $B$ intervals following \cite{zhang2023conformal,solari2022multi}, it suffices to prove the validity of each single CP interval. 
	With some abuse of notation, we denote the single CP interval at target coverage level $1-\alpha$ as $\widehat C_{N,\alpha}^{\rm on}(S_{\rm test})$.
	
	{We first consider the case where data splitting is performed in a trajectory-wise manner, and let $n$ denote the number of trajectories in the calibration set ${\mathcal{D}}_{\rm cal}$. We index the trajectories in the calibration dataset ${\mathcal{D}}_{\rm cal}$ as $\{1,2,\ldots,n\}$.} Please note that, with a slight abuse of notation, $n$ here denotes the number of trajectories, which differs from its definition in the main paper. In the main paper, $n$ refers to the cardinality of the calibration set $\mathcal{D}_{\mathrm{cal}}$, where data are stored as tuples rather than trajectories.
    
    Note that the step-width in constructing the pseudo-return is $k$. For a state variable $S_{it}$ in the data, the corresponding pseudo-return is constructed as 
	\bas
	\widetilde G^{(k)}(S_{it}) 
	:= \sum_{h=0}^{k-1} \gamma^h R_{i,t+h} 
	+ \gamma^k \widetilde{G}^{\pi}(S_{i,t+k}),
	 \quad \widetilde{G}^{\pi}(S_{i,t+k}) \sim \widehat\eta^\pi(S_{i,t+k}).
	\eas
Hereafter, for notational simplicity, we denote $\widetilde{G}_{it}^{(k)}:=\widetilde G^{(k)}(S_{it}) $.
By Proposition \ref{k_times_operator}, 
	$$\widetilde G_{it}^{(k)} 
	\sim  ((\mathcal{T}^\pi)^k \widehat\eta^\pi)(S_{it}).$$ 
    
    %We denote $\widetilde{\eta}^{(k)}(S_{it}):=((\mathcal{T}^\pi)^k \widehat\eta^\pi)(S_{it})$ for simplicity of notation.
	
	Given all the data $\mathcal{D}$, the calibration set $\widetilde{\mathcal{D}}_{\rm cal}$, using experience replay and weighted subsampling, is a set of samples drawn from the distribution:
	\bas
	\widehat F_n(s,g) := 
	\sum_{t=0}^{T-k} \sum_{i=1}^{n} 
	\frac{\widehat w_{\rm on}(S_{it})}{	\sum_{t=0}^{T-k} 
		\sum_{j=1}^{n} \widehat w_{\rm on}(S_{jt}) }
		 I\{S_{it} \le s, \widetilde G_{it}^{(k)} \le g \} .
	\eas
  \paragraph{Main idea.}
The proof proceeds by successively isolating the effects of the two estimation errors: the approximation of $\eta^{\pi}(s)$ and the estimation of the weighting function. For notational simplicity, we abbreviate the return $G^\pi(S_{\rm test})$ on the test data as $G_{\rm test}$.

We begin by noting that the true test point is drawn from
\[
  (S_{\rm test}, G_{\rm test}) \sim \mathcal{P}_{S_0} \times ((\mathcal{T}^\pi)^k \eta^\pi)(S_{0}),
\]
where $S_0$ is the random initial state with marginal distribution $\mathcal{P}_{S_0}$. To quantify the error induced by approximating $\eta^{\pi}(s)$, we introduce an intermediate test point
\[
  (S_{\rm test}, \widetilde{G}_{\rm test}) \sim \mathcal{P}_{S_0} \times ((\mathcal{T}^\pi)^k \widehat{\eta}^\pi)(S_{0}),
\]
which shares the same state distribution as the true test point but replaces $\eta^{\pi}$ with its estimator $\widehat{\eta}^{\pi}$ (see (2) of this proof for details).

Next, to analyze the additional error due to weight estimation, we define another artificial test point
\[
  (\widehat{S}_{\rm test}, \widehat{G}_{\rm test}) \sim \widehat{F}_n(s,g),
\]
which differs from $(S_{\rm test}, \widetilde{G}_{\rm test})$ only in the state distribution (see (3) of this proof for details). 

Finally, conditional on $\mathcal{D}$, $(\widehat{S}_{\rm test}, \widehat{G}_{\rm test})$ is exchangeable with $\widetilde{\mathcal{D}}_{\rm cal}$. Hence, the standard conformal prediction argument applies, establishing the conditional coverage property in Eq.~\eqref{eqn:conditional-coverage}.

Given these new test points, we can bound the coverage probability of $G_{\rm test} := G^{\pi}(S_{\rm test})$ as
\begin{align*}
     & \Pr \left( G_{\rm test} \in \widehat{C}_{N,\alpha}^{\rm on}(S_{\rm test}) \right)\ge \Pr \left( \widehat G_{\rm test} \in \widehat{C}_{N,\alpha}^{\rm on}(\widehat S_{\rm test}) \right)  
      \\
     &\quad
     - \Bigg| 
     \Pr \left( \widehat G_{\rm test} \in \widehat{C}_{N,\alpha}^{\rm on}(\widehat S_{\rm test}) \right) 
     - \Pr \left( \widetilde G_{\rm test} \in \widehat{C}_{N,\alpha}^{\rm on}(S_{\rm test}) \right) 
     \Bigg|
     \\
     & \quad - \Bigg| 
     \Pr \left( \widetilde G_{\rm test} \in \widehat{C}_{N,\alpha}^{\rm on}( S_{\rm test}) \right) 
     - \Pr \left( G_{\rm test} \in \widehat{C}_{N,\alpha}^{\rm on}(S_{\rm test}) \right) 
     \Bigg|
     \\
     &:= M_{1} - M_{2} - M_{3}.
\end{align*}
We now analyze $M_1$, $M_2$ and $M_3$ individually.
	% We consider two new test points: 
	% \bas
	% (\widehat S_{\rm test},\widehat{G}_{\rm test}) 
	% \sim  
	% \widehat F_n(s,g),
	% \eas
	% and
	% \bas
	% (S_{\rm test},\widetilde G_{\rm test}) \sim \mathcal{P}_{S_0} \times ((\mathcal{T}^\pi)^k \widehat\eta^\pi)(S_{0}),
	% \eas
 %     which are drawn independently and $S_{0}$ is the random variable of the initial state with marginal distribution $\mathcal{P}_{S_0}$. 
 %     For notational simplicity, we abbreviate the return $G^\pi(S_{\rm test})$ on the test data as $G_{\rm test}$. Then
 %     \bas
 %     & & \Pr \left( G_{\rm test} \in \widehat{C}_{N,\alpha}^{\rm on}(S_{\rm test}) \right)  
 %      \\
 %     &\ge&  \Pr \left( \widehat G_{\rm test} \in \widehat{C}_{N,\alpha}^{\rm on}(\widehat S_{\rm test}) \right) 
 %     - \left| 
 %     \Pr \left( \widehat G_{\rm test} \in \widehat{C}_{N,\alpha}^{\rm on}(\widehat S_{\rm test}) \right) 
 %     - \Pr \left( \widetilde G_{\rm test} \in \widehat{C}_{N,\alpha}^{\rm on}(S_{\rm test}) \right) 
 %     \right|
 %     \\
 %     & & - \left| 
 %     \Pr \left( \widetilde G_{\rm test} \in \widehat{C}_{N,\alpha}^{\rm on}( S_{\rm test}) \right) 
 %     - \Pr \left( G_{\rm test} \in \widehat{C}_{N,\alpha}^{\rm on}(S_{\rm test}) \right) 
 %     \right|
 %     \\
 %     &:=& M_{1} - M_{2} - M_{3}.
 %     \eas
 %      We now analyze $M_1,M_2,M_3$ individually.
      
     \noindent\textbf{(1)} Given $\mathcal{D}$, $(\widehat S_{\rm test},\widehat{G}_{\rm test})$ is exchangeable with $\widetilde{\mathcal{D}}_{cal}$.
     Then, existing conclusions about coverage rate in SCP \citep{lei2018distribution} gives
     \ba\label{eqn:conditional-coverage}
     \Pr \left( \widehat G_{\rm test} \in \widehat{C}_{N,\alpha}^{\rm on}(\widehat S_{\rm test}) \,|\, \mathcal{D}  \right)
     \ge 1-\alpha.
     \ea
     Taking expectation for the above inequality  gives
     \ba
     M_1 \ge 1-\alpha.
     \label{Thm1_M1}
     \ea 
     
      \noindent\textbf{(2)} Recall that $\widehat{C}_{N,\alpha}^{\rm on}(S_{\rm test}) = \widehat v^{\pi}(S_{\rm test})\pm \widehat q_{1-\alpha}$. By Propositions 2-4 and Lemma 1,
     \bas
     & &\left|\Pr\left(\widetilde G_{\rm test} \in \widehat{C}_{N,\alpha}(S_{\rm test}) \,|\, \mathcal{D}_{\rm tr}, \widetilde{\mathcal{D}}_{\rm cal},S_{\rm test} \right)  -\Pr\left(G_{\rm test} \in \widehat{C}_{N,\alpha}(S_{\rm test}) \,|\, \mathcal{D}_{\rm tr}, \widetilde{\mathcal{D}}_{\rm cal},S_{\rm test}\right) \right| \\
     &=& \left| F_{|\widetilde G_{\rm test}-\widehat v^\pi(S_{\rm test})|}(\widehat q_{1-\alpha})
     - F_{| G_{\rm test}-\widehat v^\pi(S_{\rm test})|}(\widehat q_{1-\alpha}) \right| 
     \\
     &\le & K\left( \mathcal{P}_{|\widetilde G_{\rm test}-\widehat v^\pi(S_{\rm test})|},
     \mathcal{P}_{| G_{\rm test}-\widehat v^\pi(S_{\rm test})|} \right)\quad\text{by Definition 1,}\\
     & \le & \sqrt{ 2L 
     	W_1\left( \mathcal{P}_{|\widetilde G_{\rm test}-\widehat v^\pi(S_{\rm test})|},
     	\mathcal{P}_{| G_{\rm test}-\widehat v^\pi(S_{\rm test})|} \right) }\quad\text{by Lemma 1,}
     \\
     &\le& \sqrt{ 2L  W_1\left( \mathcal{P}_{\widetilde G_{\rm test}}, \mathcal{P}_{G_{\rm test}} \right) }\quad\text{by Proposition 2,}
     \\
     &\le& \sqrt{ 2L \bar{W}_1\left( (\mathcal{T}^\pi)^k\widehat\eta^\pi,(\mathcal{T}^\pi)^k\eta^\pi \right) }\quad\text{by Proposition 4,}
     \\
     &\le& \sqrt{ 2L \gamma^k \bar W_1(\widehat\eta^\pi, \eta^\pi)}\quad\text{by Proposition 3.}
     \eas
Since \( f(x) = \sqrt{x} \) is a concave function, taking expectations on both sides of the inequality and applying Jensen's inequality yields:
\begin{align}
    M_3 
    &= \left| \mathbb{E} \left[ \Pr\left(\widetilde{G}_{\text{test}} \in \widehat{C}_{N,\alpha}(S_{\text{test}}) \,\middle|\, \mathcal{D}_{\text{tr}}, \widetilde{\mathcal{D}}_{\text{cal}}, S_{\text{test}} \right) 
    - \Pr\left(G_{\text{test}} \in \widehat{C}_{N,\alpha}(S_{\text{test}}) \,\middle|\, \mathcal{D}_{\text{tr}}, \widetilde{\mathcal{D}}_{\text{cal}}, S_{\text{test}} \right) \right] \right| \notag \\
    &\le \mathbb{E} \left| \Pr\left(\widetilde{G}_{\text{test}} \in \widehat{C}_{N,\alpha}(S_{\text{test}}) \,\middle|\, \mathcal{D}_{\text{tr}}, \widetilde{\mathcal{D}}_{\text{cal}}, S_{\text{test}} \right) 
    - \Pr\left(G_{\text{test}} \in \widehat{C}_{N,\alpha}(S_{\text{test}}) \,\middle|\, \mathcal{D}_{\text{tr}}, \widetilde{\mathcal{D}}_{\text{cal}}, S_{\text{test}} \right) \right| \notag \\
    &\le \mathbb{E} \left[ \sqrt{2L \gamma^k \, \bar{W}_1\left(\widehat{\eta}^\pi, \eta^\pi \right)} \right]\le \sqrt{2L \gamma^k \, \mathbb{E} \left[ \bar{W}_1\left(\widehat{\eta}^\pi, \eta^\pi \right) \right]}\quad\text{by Jensen's inequality.}
    \label{eq:Thm1_M3}
    % \notag \\
    % &\le \sqrt{2L \gamma^k \, \mathbb{E} \left[ \bar{W}_1\left(\widehat{\eta}^\pi, \eta^\pi \right) \right]}\quad\text{by Jensen's inequality.}
    % \label{eq:Thm1_M3}
\end{align}

%{\blue Since $f(x)=\sqrt{x}$ is a concave function, taking expectation for the above inequality and then applying Jensen's inequality gives
%    \ba
%     M_3 
%     &=& \left| \e \left[\Pr\left(\widetilde G_{\rm test} \in \widehat{C}_{N,\alpha}(S_{\rm test}) \,|\, \mathcal{D}_{tr}, \widetilde{\mathcal{D}}_{cal},S_{\rm test} \right)  -\Pr\left(G_{\rm test} \in \widehat{C}_{N,\alpha}(S_{\rm test}) \,|\, \mathcal{D}_{tr}, \widetilde{\mathcal{D}}_{cal},S_{\rm test}\right) \right] \right|
 %    \notag \\
%      &\le& \e \left|\Pr\left(\widetilde G_{\rm test} \in \widehat{C}_{N,\alpha}(S_{\rm test}) \,|\, \mathcal{D}_{tr}, \widetilde{\mathcal{D}}_{cal},S_{\rm test} \right)  -\Pr\left(G_{\rm test} \in \widehat{C}_{N,\alpha}(S_{\rm test}) \,|\, \mathcal{D}_{tr}, \widetilde{\mathcal{D}}_{cal},S_{\rm test}\right) \right|
%     \notag \\
%     &\le& \e\left[\sqrt{ 2L \gamma^k  \bar W_1(\widehat\eta^\pi,\eta^\pi)}\right]
%     \notag\\
%     &\le& \sqrt{ 2L \gamma^k  \e[\bar W_1(\widehat\eta^\pi,\eta^\pi)]}.
%     \label{Thm1_M3}
%   \ea }
    
    \noindent\textbf{(3)}  
    Let \(\mathcal{P}_t(s, g)\) denote the distribution of \((S_t, \widetilde{G}_t^{(k)})\) conditioned on \(\mathcal{D}_{{\rm tr}}\). While the marginal distribution of \(S_t\) may vary across time steps, the conditional distribution of \(\widetilde{G}_t^{(k)} \mid S_t\) remains time-homogeneous. Now we analyze $M_2$ and first define $M_{2}(\mathcal{D}, \widetilde{\mathcal{D}}_{\rm cal})$ as follows.    
    %{\blue Denote the probability distribution of $(S_{t},\widetilde G_{t}^{(k)})$ given $\mathcal{D}_{tr}$ as $\mathcal{P}_t(s,g)$. The marginal distribution of $S_t$ is different across time points, while the conditional distribution of $G_{t}^{(k)}|S_{t}$ is homogeneous across $t$.}
  \bas
    M_{2}(\mathcal{D}, \widetilde{\mathcal{D}}_{\rm cal})  &:=  & \left| 
     \Pr \left( \widehat G_{\rm test} \in \widehat{C}_{N,\alpha}^{\rm on}(\widehat S_{\rm test}) \mid \mathcal{D}, \widetilde{\mathcal{D}}_{\rm cal} \right) 
    - \Pr \left( \widetilde G_{\rm test} \in \widehat{C}_{N,\alpha}^{\rm on}(S_{\rm test})
     \mid \mathcal{D}, \widetilde{\mathcal{D}}_{\rm cal} \right) 
    \right| 
    \\
    &=& 
   \Big|
    \sum_{t=0}^{T-k} \sum_{i=1}^{n} 
    \frac{\widehat w_{\rm on}(S_{it})}{\sum_{t=0}^{T-k} \sum_{j=1}^{n} 
    	\widehat w_{\rm on}(S_{jt})}
    I\left\{ |\widetilde{G}^{(k)}_{it} - \widehat v^\pi(S_{it}) | \le \widehat q_{1-\alpha} \right\}\\ 
   &&\quad\quad - \Pr\left(|\widetilde G_{\rm test} - \widehat v^{\pi}(S_{\rm test})| \le \widehat q_{1-\alpha}  \mid \mathcal{D},\widetilde{\mathcal{D}}_{\rm cal} \right) 
   \Big|\le M_{21}+M_{22},
    \eas  
where
	\bas
	&&M_{21} := \sup_{x \in \mathbb{R}} \left|
	\sum_{t=0}^{T-k} \sum_{i=1}^{n} 
	\frac{\widehat w_{\rm on}(S_{it})}{\sum_{t=0}^{T-k} \sum_{j=1}^{n} \widehat w_{\rm on}(S_{jt})}
	I\left\{ |\widetilde{G}^{(k)}_{it} - \widehat v^\pi(S_{it}) | \le x \right\} - B(x\mid \mathcal{D},\widetilde{\mathcal{D}}_{\rm cal})
	\right|,
	\\
	&&M_{22} := \left|
	B(\widehat q_{1-\alpha}\mid \mathcal{D},\widetilde{\mathcal{D}}_{\rm cal}) - \Pr\left(|\widetilde G_{\rm test} - \widehat v^{\pi}(S_{\rm test})| \le \widehat q_{1-\alpha}  \mid \mathcal{D},\widetilde{\mathcal{D}}_{\rm cal} \right)
    \right|,\quad\text{where}\\
     &&B(x\mid \mathcal{D},\widetilde{\mathcal{D}}_{\rm cal}):= \frac{1}{T-k+1} \sum_{t=0}^{T-k} \int \widehat w_{\rm on}(s) I\{|g-\widehat v^\pi(s)| \le x \}\, d\mathcal{P}_t(s,g). 
	\eas
\noindent\textbf{(3.1)}  To analyze $M_{21}$, we first define the normalization constant for weights as $$W_{n}=\frac{1}{n(T-k+1)}\sum_{t=0}^{T-k}\sum_{i=1}^n \widehat w_{\rm on}(S_{it}).$$
Thus the first term in $M_{21}$ becomes 
\bas
\frac{1}{W_{n}}\frac{1}{n(T-k+1)}\sum_{t=0}^{T-k}\sum_{i=1}^{n}\widehat w_{\rm on}(S_{it})I\left\{ |\widetilde{G}^{(k)}_{it} - \widehat v^\pi(S_{it}) | \le x \right\}:=\frac{1}{W_{n}}B_{\rm emp}(x\mid \mathcal{D},\widetilde{\mathcal{D}}_{\rm cal}),
\eas
where $B_{\rm emp}(x\mid \mathcal{D},\widetilde{\mathcal{D}}_{\rm cal})$ is an empirical version of $B(x\mid \mathcal{D},\widetilde{\mathcal{D}}_{\rm cal})$. By a simple algebraic calculation, we have
\begin{align*}
M_{21} 
&\le \frac{1}{W_{n}} \sup_{x \in \mathbb{R}} 
\left| B_{\rm emp}(x \mid \mathcal{D}, \widetilde{\mathcal{D}}_{\rm cal}) 
- B(x \mid \mathcal{D}, \widetilde{\mathcal{D}}_{\rm cal}) \right|  + \left( \frac{1}{W_{n}} - 1 \right) 
\sup_{x \in \mathbb{R}} \left| B(x \mid \mathcal{D}, \widetilde{\mathcal{D}}_{\rm cal}) \right|.
\end{align*}
    Since $\frac{1}{T-k+1}\sum_{t=0}^{T-k}\e [\widehat w_{\rm on}(S_{t})\mid \mathcal{D}_{\rm tr}] = 1$, by law of large numbers,  % central limit theorem
    \ba
    \lim_{n\to\infty}W_{n}
    = \frac{1}{T-k+1}\sum_{t=0}^{T-k} \e [\widehat w_{\rm on}(S_{t})\mid \mathcal{D}_{\rm tr}] = 1.
    \label{equ:weights-limits-1}
    \ea  
Hence, for sufficiently large $n$, $W_{n}\ge 1/2$ and
\begin{align*}
M_{21} 
&\le 2\underbrace{ \sup_{x \in \mathbb{R}} 
\left| B_{\rm emp}(x \mid \mathcal{D}, \widetilde{\mathcal{D}}_{\rm cal}) 
- B(x \mid \mathcal{D}, \widetilde{\mathcal{D}}_{\rm cal}) \right|}_{E} + 
\underbrace{ { \left| \frac{1}{W_{n}} - 1 \right| } \sup_{x \in \mathbb{R}} \left| B(x \mid \mathcal{D}, \widetilde{\mathcal{D}}_{\rm cal}) \right|}_{F}.
\end{align*}
\textbf{(3.1.1)} For $E$, since $\e[\widehat w_{\rm on}(S_{it}) \mid \mathcal{D}_{\rm tr}] < \infty$ for $0 \le t \le T-k$, the function class $\{ \widehat w_{\rm on} (s)I\{|g - \widehat v^\pi(s)|\le x\}: x \in \mathbb{R}\}$ is $\{  \mathcal{P}_t(s,g) : 0 \le t \le T-k\}$-Glivenko-Cantelli. Therefore, for all $0 \le t \le T-k$,
    \bas
     \lim_{n\to\infty}\sup_{x \in \mathbb{R}}\bigg|
     \frac{1}{n} \sum_{i=1}^n \widehat w_{\rm on}(S_{it}) I\left\{ |\widetilde G_{it}^{(k)} - \widehat v^\pi(S_{it}) | \le x \right\}
     - \int \widehat w_{\rm on}(s) I\left\{ |g - \widehat v^\pi(s) | \le x \right\} \, d\mathcal{P}_t(s,g)
     \bigg|
     = 0.
    \eas
  Averaging over $t$ gives
    \ba
    \lim_{n\to\infty} \sup_{x \in \mathbb{R}}\left|
      B_{\rm emp}(x\mid \mathcal{D},\widetilde{\mathcal{D}}_{\rm cal}) -  B(x\mid \mathcal{D},\widetilde{\mathcal{D}}_{\rm cal})
     \right|
     = 0.
     \label{equ:average-over-time-1}
    \ea 
\textbf{(3.1.2)} For $F$, we have { $\lim_{n\to\infty} \left( {1}/{W_{n}} - 1 \right) = 0$ by  Eq.\eqref{equ:weights-limits-1} and }
\bas
\sup_{x \in \mathbb{R}} \left| B(x \mid \mathcal{D}, \widetilde{\mathcal{D}}_{\rm cal}) \right|\le  \frac{1}{T-k+1} \sum_{t=0}^{T-k} \int \widehat w_{\rm on}(s)\, d\mathcal{P}_t(s,g)=\frac{1}{T - k + 1} \sum_{t=0}^{T - k} 
\mathbb{E}\left[ \widehat{w}_{\rm on}(S_t) \mid \mathcal{D}_{\rm tr} \right]=1, 
\eas
 by Eq.\eqref{equ:weights-limits-1}.
Then combining \eqref{equ:average-over-time-1}, we conclude that
\begin{equation}
\lim_{n \to \infty} M_{21} = 0.
\label{Thm1_M21}
\end{equation}

 \noindent\textbf{(3.2) Bound on $M_{22}$.} Recall that
\begin{align*}
&M_{22} := \Bigg|
B(\widehat q_{1-\alpha}\mid \mathcal{D},\widetilde{\mathcal{D}}_{\rm cal}) 
- \Pr\Big(|\widetilde G_{\rm test} - \widehat v^{\pi}(S_{\rm test})| \le \widehat q_{1-\alpha} \mid \mathcal{D},\widetilde{\mathcal{D}}_{\rm cal} \Big)
\Bigg|,\quad \text{where} \\
&B(x\mid \mathcal{D},\widetilde{\mathcal{D}}_{\rm cal}) := \frac{1}{T-k+1} \sum_{t=0}^{T-k} \int \widehat w_{\rm on}(s) I\{|g-\widehat v^\pi(s)| \le x \}\, d\mathcal{P}_t(s,g),
\end{align*}
where $\mathcal{P}_t(s,g)$ denotes the conditional distribution of $(S_t, \widetilde{G}_t^{(k)})$ given the training data $\mathcal{D}_{\rm tr}$.  

Define a new probability measure
\[
\frac{1}{T-k+1} \sum_{t=0}^{T-k} \widehat{w}_{\rm on}(s) \, d\mathcal{P}_t(s,g),
\]
and let $(\widetilde{S},\widetilde{G})$ be drawn from this measure. Then $M_{22}$ can be equivalently written as
\begin{align*}
M_{22} = \Bigg| 
\Pr\Big(|\widetilde G - \widehat v^{\pi}(\widetilde{S})| \le \widehat q_{1-\alpha} \mid \mathcal{D},\widetilde{\mathcal{D}}_{\rm cal} \Big)
- \Pr\Big(|\widetilde G_{\rm test} - \widehat v^{\pi}(S_{\rm test})| \le \widehat q_{1-\alpha} \mid \mathcal{D},\widetilde{\mathcal{D}}_{\rm cal} \Big)
\Bigg|.
\end{align*}

Since the conditional distributions $\widetilde{G}\mid \widetilde{S}$ and $\widetilde{G}_{\rm test}\mid S_{\rm test}$ are identical, by Eq.~(A.9) in \cite{lei2021conformal}, we have
\[
M_{22} \le d_{TV}(\mathcal{P}_{\widetilde{S}}, \mathcal{P}_{S_{\rm test}}),
\]
where $d_{TV}$ denotes the total variation distance.  

Denote the marginal distribution of $\mathcal{P}_t(s,g)$ as $\mathcal{P}_t(s)$ and define the calibration marginal $\mathcal{P}_{\rm cal}(s) = \frac{1}{T-k+1} \sum_{t=0}^{T-k} \mathcal{P}_t(s)$. Then $\widetilde{S} \sim \widehat{w}_{\rm on}(s) \mathcal{P}_{\rm cal}(s)$ and $S_{\rm test} \sim w_{\rm on}(s) \mathcal{P}_{\rm cal}(s)$. It follows that
\begin{align}
M_{22} &\le \frac12 \int \big|\widehat w_{\rm on}(s) - w_{\rm on}(s) \big| \, d\mathcal{P}_{\rm cal}(s) \notag \\
&= \frac{1}{2(T-k+1)} \sum_{t=0}^{T-k} \mathbb{E}\Big[ \big| \widehat w_{\rm on}(S_t) - w_{\rm on}(S_t) \big| \mid \mathcal{D}_{\rm tr} \Big],
\label{Thm1_M22}
\end{align}
where the last equality follows directly from the definition of $\mathcal{P}_{\rm cal}(s)$.

The desired result in Theorem 1 follows from  \eqref{Thm1_M1} -  \eqref{Thm1_M22}.

    \paragraph{Extension.} We now extend the above arguments to the setting where data splitting is performed at the tuple level---that is, on tuples of the form \((S_{it}, A_{it}, R_{it}, \ldots, S_{i,t+k})\), for \(1 \le i \le N\) and \(0 \le t \le T - k\). Let $n$ denote the number of tuples in $\mathcal{D}_{\rm cal}$, and let $n_t$ be the number of $t$-stage tuples included. Then it holds that $\sum_{t=0}^{T-k}n_t = n$. We index the data points of the $t$-th stage separately as $\{1,2,\ldots,n_t\}$ for notational simplicity. Given all data $\mathcal{D}$, $\widetilde{\mathcal{D}}_{\rm cal}$ is a set of sample drawn from
    \bas
      \widehat F_n^{*}(s,g) := 
	\sum_{t=0}^{T-k} \sum_{i=1}^{n_t} 
	\frac{\widehat w_{\rm on}(S_{it})}{	\sum_{t=0}^{T-k} 
		\sum_{j=1}^{n_t} \widehat w_{\rm on}(S_{jt}) }
		 I\{S_{it} \le s, \widetilde G_{it}^{(k)} \le g \} .
    \eas
 Similarly we consider three new points
 \bas
	(\widehat S_{\rm test}^*,\widehat{G}_{\rm test}^*) 
	\sim  
	\widehat F_n^{*}(s,g),\quad (S_{\rm test},\widetilde G_{\rm test})\sim \mathcal{P}_{S_0}\times ((\mathcal{T}^\pi)^k\widehat{\eta}^\pi)(S_0),\quad (S_{\rm test},G_{\rm test})\sim \mathcal{P}_{S_0}\times \eta^{\pi}(S_0).
	\eas

   Then the coverage probability satisfies:
     \bas
     & & \Pr \left( G_{\rm test} \in \widehat{C}_{N,\alpha}^{\rm on}(S_{\rm test}) \right)\ge  \Pr \left( \widehat G_{\rm test}^* \in \widehat{C}_{N,\alpha}^{\rm on}(\widehat S_{\rm test}^*) \right)  \\
     & &\quad  - \left| 
     \Pr \left( \widehat G_{\rm test}^* \in \widehat{C}_{N,\alpha}^{\rm on}(\widehat S_{\rm test}^*) \right) 
     - \Pr \left( \widetilde G_{\rm test} \in \widehat{C}_{N,\alpha}^{\rm on}(S_{\rm test}) \right) 
     \right|
     \\
     & &\quad - \left| 
     \Pr \left( \widetilde G_{\rm test} \in \widehat{C}_{N,\alpha}^{\rm on}( S_{\rm test}) \right) 
     - \Pr \left( G_{\rm test} \in \widehat{C}_{N,\alpha}^{\rm on}(S_{\rm test}) \right) 
     \right|
     \\
     &&:= M_{1}^{*} - M_{2}^{*} - M_{3}.
     \eas
 The analysis of $M_1^*$ mirrors that of $M_1$, and the treatment of $M_3$ remains unchanged from the previous case. We now focus on the detailed analysis of $M_2^*$. Similarly we define  $M_2^*(\mathcal{D},\widetilde{\mathcal{D}}_{\rm cal})$ as follows:
  \begin{align*}
       M_2^*(\mathcal{D},\widetilde{\mathcal{D}}_{\rm cal}) 
     &:=\left| 
     \Pr \left( \widehat G_{\rm test}^* \in \widehat{C}_{N,\alpha}^{\rm on}(\widehat S_{\rm test}^*) \mid \mathcal{D},\widetilde{\mathcal{D}}_{\rm cal} \right) 
     - \Pr \left( \widetilde G_{\rm test} \in \widehat{C}_{N,\alpha}^{\rm on}(S_{\rm test})  \mid \mathcal{D},\widetilde{\mathcal{D}}_{\rm cal}  \right) 
     \right|  \\
     & =  \left|
	\sum_{t=0}^{T-k} \sum_{i=1}^{n_t} 
	\frac{\widehat w_{\rm on}(S_{it})}{\sum_{t=0}^{T-k} \sum_{j=1}^{n_t} \widehat w_{\rm on}(S_{jt})}
	I\left\{ |\widetilde{G}^{(k)}_{it} - \widehat v^\pi(S_{it}) | \le \hat q_{1-\alpha} \right\} \right. \\
    &  \quad\quad\quad  \left. - \Pr \left( \widetilde G_{\rm test} \in \widehat{C}_{N,\alpha}^{\rm on}(S_{\rm test})  \mid \mathcal{D},\widetilde{\mathcal{D}}_{\rm cal}  \right)  \right| 
     \le   M_{21}^*+M_{22} 
\end{align*} 
where
\begin{align*}
       M_{21}^{*} 
     := \sup_{x \in \mathbb{R}} \left|
	\sum_{t=0}^{T-k} \sum_{i=1}^{n_t} 
	\frac{\widehat w_{\rm on}(S_{it})}{\sum_{t=0}^{T-k} \sum_{j=1}^{n_t} \widehat w_{\rm on}(S_{jt})}
	I\left\{ |\widetilde{G}^{(k)}_{it} - \widehat v^\pi(S_{it}) | \le x \right\}-B(x\mid \mathcal{D},\widetilde{\mathcal{D}}_{\rm cal}) \right|.
     \end{align*}

Then, we introduce an intermediate value for each time point $t$:
\bas
B_{\rm emp}(x\mid t, \mathcal{D},\widetilde{\mathcal{D}}_{\rm cal}) := \frac{1}{n_t}\sum_{i=1}^{n_t}\widehat w_{\rm on}(S_{it})I\left\{ |\widetilde{G}^{(k)}_{it} - \widehat v^\pi(S_{it}) | \le x \right\}, 
\eas
which is an empirical version of $B(x\mid t,\mathcal{D},\widetilde{\mathcal{D}}_{\rm cal})$ defined similarly:
\bas
B(x\mid t, \mathcal{D},\widetilde{\mathcal{D}}_{\rm cal}):=  \int \widehat w_{\rm on}(s) I\{|g-\widehat v^\pi(s)| \le x \}\, d\mathcal{P}_t(s,g). 
\eas

Let \(n_t\) denote the number of tuples at time step \(t\), for \(0 \le t \le T - k\). The vector \((n_0, n_1, \ldots, n_{T-k})\) follows a multinomial distribution with total count \(n\) and uniform probabilities over the \(T - k + 1\) time steps:
\[
(n_0, n_1, \ldots, n_{T-k}) \sim \text{Multinomial}\left(n; \, \left\{\tfrac{1}{T - k + 1}, \ldots, \tfrac{1}{T - k + 1} \right\}\right).
\]

As $n\rightarrow\infty$, it follows that $n_t\rightarrow\infty$ for all $t$. Applying the same argument as in Equation~\eqref{equ:average-over-time-1}, we obtain
\ba
    \lim_{n\to\infty} \sup_{x \in \mathbb{R}}\left|
      \frac{1}{T-k+1}\left\{B_{\rm emp}(x\mid t, \mathcal{D},\widetilde{\mathcal{D}}_{\rm cal})-B(x\mid t, \mathcal{D},\widetilde{\mathcal{D}}_{\rm cal})\right\}
     \right|
     = 0.\label{equ:tupe-splitting1}
    \ea 

Define the new normalization constant for weights as
$$W_{n}^*=\frac{1}{n}\sum_{t=0}^{T-k}\sum_{i=1}^{n_t}\widehat{w}_{\rm on}(S_{it}).$$ Since $\lim_{n\to\infty} n_t/n = 1/(T-k+1)$, 
it follows from law of large numbers that 
\ba
    \lim_{n\to\infty}W_{n}^*
    = \lim_{n\to\infty}\sum_{t=0}^{T-k}\frac{n_t}{n}
    \cdot\frac{1}{n_t} \sum_{i=1}^{n_t} \widehat w_{\rm on}(S_{it})
    = \frac{1}{T-k+1}\sum_{t=0}^{T-k} \e [\widehat w_{\rm on}(S_{t})\mid \mathcal{D}_{\rm tr}] = 1.\label{equ:tupe-splitting2}
    \ea
    %So, for sufficiently large $n$, $W_{n,T}^*\ge 1/2$. 
    By simple algebra calculations and $\lim_{n\to\infty} n_t/n = 1/(T-k+1)$, we have
    \bas
     & &\lim_{n\to\infty} M_{21}^*\le   \lim_{n\to\infty} \frac{1}{W_{n}^*}\sup_{x\in\mathbb{R}}\left|  \sum_{t=0}^{T-k}\frac{n_t}{n}\left\{B_{\rm emp}(x\mid t, \mathcal{D},\widetilde{\mathcal{D}}_{\rm cal})-B(x\mid t, \mathcal{D},\widetilde{\mathcal{D}}_{\rm cal})\right\}  \right| 
      \\
      &&\quad\quad\quad + \lim_{n\to\infty} \sup_{x \in \mathbb{R}} \left| \frac{1}{W_{n}^*}\sum_{t=0}^{T-k}\frac{n_t}{n}B(x\mid t,\mathcal{D},\widetilde{\mathcal{D}}_{\rm cal}) -B(x\mid \mathcal{D},\widetilde{\mathcal{D}}_{\rm cal})\right| =0\quad\text{by  \eqref{equ:tupe-splitting1}} ~and~ \eqref{equ:tupe-splitting2}.
      \\
      %&&\quad\quad\quad + \lim_{n\to\infty} \left(\frac{1}{W_{n}^*}-1\right)\sup_{x \in \mathbb{R}} \left|  B(x\mid \mathcal{D},\widetilde{\mathcal{D}}_{cal})\right| =0\quad\text{by Equations \eqref{equ:tupe-splitting1}} ~and~ \eqref{equ:tupe-splitting2}.
    \eas
 The desired result in Theorem 1 follows immediately.

   \end{proof}

\section{Proof of Theorem 2}

This section proves Theorem 2, which analyzes the coverage probability of the proposed PIs in the context of off-policy evaluation.  
{ We focus on the case where data splitting is performed in a trajectory-wise manner, and let \(n\) denote the number of trajectories in \(\mathcal{D}_{\text{cal}}\). Please note that, with a slight abuse of notation, $n$ here denotes the number of trajectories, which differs from its definition in the main paper. In the main paper, $n$ refers to the cardinality of the calibration set $\mathcal{D}_{\mathrm{cal}}$, where data are stored as tuples rather than trajectories. The result can be readily extended to the tuple-data-splitting setting, as discussed in the proof of Theorem 1.}

\begin{proof}[Proof of Theorem 2]
		Since $\widehat{C}_{N,\alpha}^{\rm off}(S_{\rm test})$ combines $B$ intervals following \cite{zhang2023conformal,solari2022multi}, it suffices to prove the validity of each CP interval. 
	With some abuse of notation, we denote the single CP interval at target coverage level $1-\alpha$ as $\widehat C_{N,\alpha}^{\rm off}(S_{\rm test})$.
	
	{First, we index the data points in the calibration dataset $\mathcal{D}_{\rm cal}$ as $\{1,2,\ldots,n\}$.} Given $\mathcal{D}$, $\widetilde{\mathcal{D}}_{\rm cal}$ is a set of samples drawn from the distribution
	\bas
	  \widehat F_{n} (s,g)= 
	  \sum_{t=0}^{T-k} \sum_{i=1}^{n} 
	  \frac{
	  	\widehat w_{\rm off}(\mathcal{H}_{i,t:t+k})
	  }
	  {
	  	 \sum_{t=0}^{T-k} \sum_{j=1}^{n}
	  	\widehat w_{\rm off}(\mathcal{H}_{j,t:t+k})
	  } 
	  I(S_{it}\le s, \widetilde G_{it}^{(k)}\le g),
	\eas
	 where $\mathcal{H}_{i,t:t+k}=(S_{it},A_{it},\cdots,S_{i,t+k})$ denotes the local trajectory segment following the behavior policy. Following the main idea of the proof of Theorem 1, we consider two new test points:
	\bas
	 (\widehat S_{\rm test}, \widehat{G}_{\rm test})
	  \sim \widehat F_{n} (s,g)
	\eas
	and
	\bas
	(S_{\rm test}, \widetilde G_{\rm test}) 
	\sim \mathcal{P}_{S_0} \times \left( (\mathcal{T}^{\pi})^k \widehat\eta^{\pi} \right)(S_0)
	\eas
	which are drawn independently.
	Then for $G_{\rm test} := G^{\pi}(S_{\rm test})$, we have
	\bas
	& & {\rm Pr} \left( G_{\rm test} \in \widehat{C}_{N,\alpha}^{\rm off}(S_{\rm test}) \right)\ge  {\rm Pr} \left( \widehat G_{\rm test} \in \widehat{C}_{N,\alpha}^{\rm off}(\widehat S_{\rm test}) \right)\\
	& &\quad 
	- \left|
	{\rm Pr} \left( \widehat G_{\rm test} \in \widehat{C}_{N,\alpha}^{\rm off}(\widehat S_{\rm test}) \right)
	- {\rm Pr} \left( \widetilde G_{\rm test} \in \widehat{C}_{N,\alpha}^{\rm off}(S_{\rm test}) \right)
	\right|  \\
	& &\quad - \left|
	{\rm Pr} \left( \widetilde G_{\rm test} \in \widehat{C}_{N,\alpha}^{\rm off}(S_{\rm test}) \right)
	- {\rm Pr} \left(  G_{\rm test} \in \widehat{C}_{N,\alpha}^{\rm off}(S_{\rm test}) \right)
	\right|  \\
	& &:=  \widetilde{M}_1 - \widetilde{M}_2 -\widetilde{M}_3.
	\eas

    Note that the dataset $\mathcal{D}$ is sampled from the behavior policy $\pi_b$ while $(S_{\rm test}, G_{\rm test})$ is generated by the target policy $\pi$.
     We now analyze $\widetilde M_1$, $\widetilde M_2$ and $\widetilde M_3$ separately.
     
	 \noindent\textbf{(1)} Given $\mathcal{D}$,  $(\widehat S_{\rm test}, \widehat{G}_{\rm test})$ is exchangeable with $\widetilde{\mathcal{D}}_{\rm cal}$.
    Existing result on coverage rate of SCP interval \citep{lei2018distribution} gives
	\ba
	\widetilde M_1 
	=  \e\left[
	{\rm Pr} \left( \widehat G_{\rm test} \in \widehat{C}_{N,\alpha}^{\rm off}(\widehat S_{\rm test}) 
	\,|\, \mathcal{D}\right)  \right] \ge 1 - \alpha.
	\label{Thm2_M1}
	\ea
    
	 \noindent\textbf{(2)}  Similar to the treatment of $M_3$ in the proof of Theorem 1,  we have
     \ba
      \widetilde{M}_3 
    \le\e \left[ \sqrt{2L\bar W_1((\mathcal{T}^{\pi})^k\widehat\eta^{\pi}, (\mathcal{T}^{\pi})^k \eta^{\pi} ) }\right]\le \sqrt{2L \gamma^k \e [\bar W_1(\widehat\eta^{\pi},\eta^{\pi})]}.\label{Thm2_M3}
     \ea

	\noindent\textbf{(3)} 
    Let   { $\mathcal{P}_t(s_0,a_0,\ldots,s_{k},g)$} denote the joint probability distribution of  { $(\mathcal{H}_{t:t+k}, \widetilde G_t^{(k)})$} given $\mathcal{D}_{tr}$ with some abuse of notation. Note that here { $(\mathcal{H}_{t:t+k}, \widetilde G_t^{(k)})$} is generated by $\pi_b$, consistent with the data.  We further denote $h_{0:k}:=(s_0,a_0,\ldots,s_{k})$ for notational simplicity. Then
    
%    Denote the joint probability distribution of  $(\mathcal{H}_{t:t+k+1}, \widetilde G_t^{(k)})$ given $\mathcal{D}_{tr}$ as $\mathcal{P}_t(s_0,a_0,\ldots,s_{k+1},g)$, with some abuse of notation. Note that here $\mathcal{H}_{t:t+k+1}$ is generated by $\pi_b$, consistent with the data.  We further denote $h_{0:k}:=(s_0,a_0,\ldots,s_{k})$ for notational simplicity. Then
	\bas
	\widetilde{M}_2(\mathcal{D}, \widetilde{\mathcal{D}}_{\rm cal})
    &:=&  \left|
	 {\rm Pr} \left( \widehat G_{\rm test} \in \widehat{C}_{N,\alpha}^{\rm off}( \widehat S_{\rm test}) 
	\,|\, \mathcal{D}, \widehat{\mathcal{D}}_{\rm cal} \right)
	- {\rm Pr} \left( \widetilde G_{\rm test} \in \widehat{C}_{N,\alpha}^{\rm off}(S_{\rm test}) 
	\,|\, \mathcal{D}, \widetilde{\mathcal{D}}_{\rm cal} \right)
	\right| 
	\\
	&=& \left|
	 {\rm Pr}\left( 
	 |\widehat G_{\rm test} - \widehat v^{\pi}( \widehat S_{\rm test})| \le \widehat q_{1-\alpha} 
	 \,|\, \mathcal{D}, \widetilde{\mathcal{D}}_{\rm cal} \right) \right. \\
	 &&\quad\quad\quad\quad\quad - \left.{\rm Pr}\left( 
	 |\widehat G_{\rm test} - \widehat v^{\pi}(\widehat S_{\rm test})| \le \widehat q_{1-a}  
	 \,|\, \mathcal{D},\widetilde{\mathcal{D}}_{\rm cal} \right)
	\right| 
%    && = \Bigg|
%\sum_{t=0}^{T-k} \sum_{i=1}^{n} 
%\frac{\widehat w_{\rm off}(\mathcal{H}_{i,t:t+k})}
%     {\sum_{t=0}^{T-k} \sum_{j=1}^{n} \widehat w_{\rm off}(\mathcal{H}_{j,t:t+k})}
%I\left\{ |\widetilde G_{it}^{(k)} - \widehat v^\pi(S_{it}) | \le \hat q_{1-%\alpha} \right\} \\
%&&\quad\quad\quad\quad\quad\quad- {\rm Pr} \left( \widetilde G_{\rm test} \in %\widehat{C}_{N,\alpha}^{\rm off}(S_{\rm test}) 
%	\,|\, \mathcal{D}, \widetilde{\mathcal{D}}_{cal} \right)
%\Bigg|
    \le\widetilde M_{21} + \widetilde M_{22}, 
    \eas
  where
\begin{align*}
&\widetilde M_{21} 
:= \sup_{x \in \mathbb{R}} \Bigg|
\sum_{t=0}^{T-k} \sum_{i=1}^{n} 
\frac{\widehat w_{\rm off}(\mathcal{H}_{i,t:t+k})}
     {\sum_{t=0}^{T-k} \sum_{j=1}^{n} \widehat w_{\rm off}(\mathcal{H}_{j,t:t+k})}
I\left\{ |\widetilde G_{it}^{(k)} - \widehat v^\pi(S_{it}) | \le x \right\} 
- B^{\rm off}(x \mid \mathcal{D}, \widetilde{\mathcal{D}}_{\rm cal})
\Bigg|, \\
&\widetilde M_{22} 
:= \left|
B^{\rm off}(\hat q_{1-\alpha} \mid \mathcal{D}, \widetilde{\mathcal{D}}_{\rm cal}) 
- \Pr\left( 
|\widetilde G_{\rm test} - \widehat v^\pi(S_{\rm test})| 
\le \widehat q_{1-\alpha} \,\middle|\, 
\mathcal{D}, \widetilde{\mathcal{D}}_{\rm cal} 
\right)
\right|,  \\
&B^{\rm off}(x \mid \mathcal{D}, \widetilde{\mathcal{D}}_{\rm cal}) 
:= \frac{1}{T - k + 1} \sum_{t=0}^{T - k} 
\int \widehat w_{\rm off}(h_{0:k}) \cdot 
I\left\{ |g - \widehat v^\pi(s_0)| \le x \right\} \, 
d\mathcal{P}_t(h_{0:k}, g).
\end{align*}

    \noindent\textbf{(3.1)} To analyze $\widetilde{M}_{21}$, we first define the normalization constant for weights as
    $$W_{n}^{\rm off}=\frac{1}{n(T-k+1)} \sum_{t=0}^{T-k} \sum_{i=1}^{n} 
	\widehat w_{\rm off}(\mathcal{H}_{i,t:t+k}).$$
    Thus the first term in $\widetilde{M}_{21}$ becomes
    \bas
\frac{1}{W_{n}^{\rm off}}\frac{1}{n(T-k+1)}\sum_{t=0}^{T-k} \sum_{i=1}^{n} 
\widehat w_{\rm off}(\mathcal{H}_{i,t:t+k})I\left\{ |\widetilde G_{it}^{(k)} - \widehat v^\pi(S_{it}) | \le x \right\}:=\frac{1}{W_{n}^{\rm off}}B_{\rm emp}^{\rm off}(x\mid \mathcal{D},\widetilde{\mathcal{D}}_{\rm cal}),
    \eas
 where  $B_{\rm emp}^{\rm off}(x\mid \mathcal{D},\widetilde{\mathcal{D}}_{\rm cal})$ is the empirical version of $B^{\rm off}(x\mid \mathcal{D},\widetilde{\mathcal{D}}_{\rm cal})$. By a simple algebraic calculation, we have
 \begin{align*}
\widetilde{M}_{21} 
&\le \frac{1}{W_{n}^{\rm off}} \sup_{x \in \mathbb{R}} 
\left| B^{\rm off}_{\rm emp}(x \mid \mathcal{D}, \widetilde{\mathcal{D}}_{\rm cal}) 
- B^{\rm off}(x \mid \mathcal{D}, \widetilde{\mathcal{D}}_{\rm cal}) \right| \\
&\quad + \left( \frac{1}{W_{n}^{\rm off}} - 1 \right) 
\sup_{x \in \mathbb{R}} \left| B^{\rm off}(x \mid \mathcal{D}, \widetilde{\mathcal{D}}_{\rm cal}) \right|.
\end{align*}
    As $\frac{1}{T-k+1}\sum_{t=0}^{T-k}\e[\widehat w_{\rm off}(\mathcal{H}_{t:t+k})\mid\mathcal{D}_{\rm tr}]=1$, by law of large numbers, $ \lim_{n \to \infty}  W_{n}^{\rm off}=1$. Hence, for sufficiently large $n$, $W_{n}^{\rm off}\ge 1/2$ and
\begin{align*}
\widetilde{M}_{21} \le 2 \underbrace{\sup_{x \in \mathbb{R}} 
\left| B^{\rm off}_{\rm emp}(x \mid \mathcal{D}, \widetilde{\mathcal{D}}_{\rm cal}) 
- B^{\rm off}(x \mid \mathcal{D}, \widetilde{\mathcal{D}}_{\rm cal}) \right|}_{\widetilde{E}}  +  \underbrace{ { \left| \frac{1}{W_{n}^{\rm off}} - 1 \right| } \sup_{x \in \mathbb{R}}\left| B^{\rm off}(x \mid \mathcal{D}, \widetilde{\mathcal{D}}_{\rm cal}) \right|}_{\widetilde{F}}.
\end{align*}
\noindent\textbf{(3.1.1)} For $\widetilde{E}$, since $\e[\widehat w_{\rm off}(\mathcal{H}_{t:t+k})] < \infty$ for $0 \le t \le T-k$, the function class $\{\widehat w_{\rm off}(h_{0:k},g)I\{|g-\widehat{v}^\pi(s_0)|\le x\}: x \in \mathcal{R}\}$ is { $\{\mathcal{P}_t(h_{0:k},g): 0\le t \le T-k\}$}-Glivenko-Cantelli. Applying the same argument as in Equation~\eqref{equ:average-over-time-1}, we obtain

    \begin{equation} 
    \lim_{n \to \infty} \sup_{x \in \mathbb{R}} 
\Big| B_{\rm emp}^{\rm off}(x\mid \mathcal{D},\widetilde{\mathcal{D}}_{\rm cal})-B^{\rm off}(x\mid \mathcal{D},\widetilde{\mathcal{D}}_{\rm cal}) \Big| = 0.\label{equ:emp-off}
   \end{equation}
\noindent\textbf{(3.1.2)} For $\widetilde{F}$, we have { $\lim_{n\to\infty}\left( {1}/{W_{n}^{\rm off}} - 1 \right) = 0$, and}
\bas
\sup_{x \in \mathbb{R}} \left| B^{\rm off}(x \mid \mathcal{D}, \widetilde{\mathcal{D}}_{\rm cal}) \right|&\le& \frac{1}{T - k + 1} \sum_{t=0}^{T - k} 
\int \widehat w_{\rm off}(h_{0:k})  
d\mathcal{P}_t(h_{0:k+1}, g)\nonumber\\
&=&\frac{1}{T - k + 1} \sum_{t=0}^{T-k} 
\e\left[ \widehat{w}_{\rm off}(\mathcal{H}_{t:t+k}) \mid \mathcal{D}_{\rm tr} \right]=1. 
\eas
Combining these results with \eqref{equ:emp-off}, we obtain
\begin{equation}
\lim_{n \to \infty} \widetilde{M}_{21} = 0.
\label{Thm1_M21}
\end{equation}

\noindent\textbf{(3.2) Bound on $\widetilde M_{22}$.} Following the proof of Theorem 1, we  define a new probability measure
$$
\frac{1}{T-k+1}\sum_{t=0}^{T-k}\widehat{w}_{\rm off}(h_{0:k})d\mathcal{P}_{t}(h_{0:k},g),
$$
and let $(\widetilde{\mathcal{H}}_{0:k},\widetilde{G})$ be drawn from this measure with $\widetilde{\mathcal{H}}_{0:k}=(\widetilde{S}_{0},\widetilde{A}_{0},\cdots,\widetilde{S}_{k})$. Then $\widetilde M_{22}$ can be equivalently written as
\bas
\widetilde M_{22} 
:= \left|
\Pr\left( 
|\widetilde G - \widehat v^\pi(\widetilde{S}_0)| 
\le \widehat q_{1-\alpha} \,\middle|\, 
\mathcal{D}, \widetilde{\mathcal{D}}_{\rm cal} 
\right) 
- \Pr\left( 
|\widetilde G_{\rm test} - \widehat v^\pi(S_{\rm test})| 
\le \widehat q_{1-\alpha} \,\middle|\, 
\mathcal{D}, \widetilde{\mathcal{D}}_{\rm cal} 
\right)
\right|.
\eas
Denote the marginal distribution of $\mathcal{P}_{t}(h_{0:k},g)$ as $\mathcal{P}_{t}(h_{0:k})$ and define the calibration marginal distribution as $\mathcal{P}_{\rm cal}(h_{0:k})=\frac{1}{T-k+1}\sum_{t=0}^{T-k}\mathcal{P}_{t}(h_{0:k})$. Then $\widetilde{\mathcal{H}}_{0:k}\sim \widehat{w}_{\rm off}(h_{0:k})\mathcal{P}_{\rm cal}(h_{0:k})$, and the unobserved $\mathcal{H}_{\rm test, 0:k}=(S_{\rm test,0},A_{\rm test,0},\cdots,S_{\rm test,k})\sim w_{\rm off}(h_{0:k})\mathcal{P}_{\rm cal}(h_{0:k})$,  where $S_{\rm test,0}=S_{\rm test}$.

Since the conditional distributions $\widetilde{G}\mid \widetilde{\mathcal{H}}_{0:k}$ and $\widetilde{G}_{\rm test}\mid {\mathcal{H}}_{\rm test, 0:k}$ are the identical, by Eq. (A.9) in \cite{lei2021conformal}, we have
\bas
  \widetilde{M}_{22} 
     &\le& d_{TV}(\mathcal{P}_{\widetilde{\mathcal{H}}_{0:k}},\mathcal{P}_{\mathcal{H}_{\rm test,0:k}})
     \notag \\
&\le& \frac{1}{2(T-k+1)}\sum_{t=0}^k
     \e\left[ 
     \left|\widehat w_{\rm off}(\mathcal{H}_{t:t+k}) - w_{\rm off}(\mathcal{H}_{t:t+k})\right|  \mid\mathcal{D}_{\rm tr}
     \right].
     \label{Thm2_M22}
\eas
The desired result follows by combining (\ref{Thm2_M1}) - (\ref{Thm2_M22}).

\end{proof}

\section{Algorithm for Off-Policy Setting}

Algorithm  \ref{CP_off_policy}  presents the proposed algorithm for the off-policy setting, 
which closely parallels that in the on-policy case.

\begin{center}
	\begin{algorithm}[ht] \label{CP_off_policy} 
			\caption{ {\it  CP for Infinite Horizon Off-policy Evaluation   } }
			\KwData{  
				$\mathcal{D} = \{ (S_{it},A_{it},R_{it},S_{i,t+1}) : 1 \le i \le N, 1 \le t \le T \}$, a test initial state $S_{\rm test}$ and a target policy $\pi$.
	           }
			
			\KwIn{ $1-\alpha$, target coverage level; 
				$\widetilde{\mathcal{A}}$, an off-policy distributional RL algorithm; 
				$\mathcal{B}$, a propensity score training algorithm;
				${\mathcal{W}}$, a density ratio estimation algorithm; 
				$k$, step width;
				$B$, resampling number;
				$l$, subsample size;
				$\xi$, multiple subsampling parameter
			}

			\KwOut{Prediction interval for 
            $G^\pi(S_{\rm test})$}

			%\bigskip
			%\textbf{Prepare:}
			Split the data: 
                 $\mathcal{D} 
                  = \mathcal{D}_{\rm tr} \bigcup \mathcal{D}_{\rm cal}$
                 where 
                 $ \mathcal{D}_{\rm tr} = \left\{(S_{it},A_{it},R_{it},S_{i,t+1}) : (i,t) \in \mathcal{I}_{\rm tr} \right\}$
                  and
                 $\mathcal{D}_{\rm cal} = \left\{ (S_{it},A_{it},R_{it},\ldots,S_{i,t+k}) : (i,t) \in \mathcal{I}_{\rm cal} \right\}$. Here, $\mathcal{I}_{{\rm tr}}$ and $\mathcal{I}_{{\rm cal}}$ denote the indices of transitions  in the training and calibration datasets, respectively.

			Train a conditional return model 
			$\widehat \eta^{\pi}(s)$ 
			using $\widetilde{\mathcal{A}}$ based on $\mathcal{D}_{\rm tr}$.

                Obtain the value function estimator $\widehat v^{\pi}(s)$, the expectation of $\widehat\eta^{\pi}(s)$.

			Obtain $\widehat w_{\rm on}(s)$ as an estimator of the density ratio (2) in the main paper based on $\left\{ S_{i0}: (i,0) \in \mathcal{I}_{\rm tr} \right\}$ 
            and $\left\{ S_{it}: (i,t) \in \mathcal{I}_{\rm tr} \right\}$ using $\mathcal{W}$.
			
			Train $\widehat \pi^{b}(a \,|\, s)$ based on  
			$\{(S_{it},A_{it}): (i,t) \in \mathcal{I}_{\rm tr}\}$ using $\mathcal{B}$.
			
			Obtain $\widehat w_{\rm off}(\cdot)$ by plugging in $\widehat w_{\rm on}$ and $\widehat\pi_b$ in (3) of the main paper.

			\For{$b=1:B$} {
				\bit
				\item
		      Sample $l$ data tuples
                $\{(S_{it}, {A}_{i,t}, {R}_{i,t}, \ldots,S_{i,t+k}) : (i,t) \in \mathcal{I}_{\rm cal}^{(b)}\}$ from $\mathcal{D}_{\rm cal}$ accoring to the importance weight $\widehat w_{\rm off}(S_{it},A_{it},\ldots,S_{i,t+k})$.

                \item
                Calculate pseudo-return  (1) in the main paper and obtain $\widetilde{\mathcal{D}}_{\rm cal}^{(b)}
                 := \{(S_{it}, \widetilde{G}_{it}^{(k)}) : (i,t) \in \mathcal{I}_{\rm cal}^{(b)}\}$.

                 \item
		       Calculate the nonconformity scores: 
		       $\{
		          V_{it}
		          :=  | \widetilde{G}_{it}^{(k)} - \widehat v^{\pi}(S_{it}) |  : (i,t) \in \mathcal{I}_{\rm cal}^{(b)}\}
		          \}$.

				\item 
				Calculate $\widehat{q}_{1-\alpha\xi}^{(b)} $, the $\lceil l(1-\alpha\xi)\rceil$-th smallest value of $\{V_{it}: (i,t) \in \mathcal{I}_{\rm cal}^{(b)}\}$.

				\item Obtain 
				$\widehat C_{N,\alpha\xi}^{(b)}(S_{\rm test}) 
				= \widehat v^{\pi}(S_{\rm test}) \pm \hat q_{1-\alpha\xi}^{(b)} $.

				\eit
			}

			\KwResult{
				A conformal predictive region for $G^\pi(S_{\rm test})$ with a coverage rate of $1-\alpha$  is
				\ba
				\widehat{C}_{N,\alpha}^{\rm off} (S_{\rm test})
				&=&
				\left\{ G: \frac1B \sum_{b=1}^B 
				I\left\{ G \in \widehat{C}_{N,\alpha\xi}^{(b)}(S_{\rm test})  \right\} \ge 1-\xi 
				\right\}.
				\label{PI_off_policy}
				\ea
			}
	\end{algorithm}
\end{center}

\section{Implementation Details and Additional Results}

We provide additional implementation details for the numerical experiments. The code is available at:  \href{https://github.com/yyzhangecnu/CPbeyonghorizon}{https://github.com/yyzhangecnu/CPbeyonghorizon}.

\paragraph{Example 1.} We adopt the QTD algorithm (Algorithm 1 in \cite{rowland2024analysis}) to estimate the quantiles of the return distribution. The learning rate $\rho$ is set to 0.1, and the discount factor $\gamma$ is 0.8. We use 20 quantile levels in the estimation. The behavior policy is estimated based on the empirical frequency of \((s, a)\) pairs in the training set, and the importance weights are computed similarly using frequency-based estimates. The hyperparameter \(\xi\), which controls the aggregation of multiple prediction intervals, is selected via grid-based cross-fitting since simulations allow us to generate trajectories with sufficiently large $T$ to get accurate return. We set the number of aggregated intervals to \(B = 100\), with each interval constructed from a subsample of 400 tuples drawn from the calibration dataset. We repeat the experiment over 100 simulation runs and report the boxplots of the empirical coverage probabilities and the average lengths of PIs. 
The nominal coverage level is fixed at $90\%$.

\noindent\textbf{Influence of $k$.} 
Based on Example 1, we further investigate the effect of using larger $k$ values, specifically for $k = 6, 7, 8$. Each experiment is repeated 100 times, and we report the mean and standard deviation of the empirical coverage probability ({cov}) and prediction interval length ({len}) under the nominal $90\%$ coverage level. 

As shown in Table~\ref{tab:choice-of-k}, increasing $k$ consistently results in overcoverage and, consequently, wider prediction intervals. This observation aligns with our theoretical results in Section~4 (Theorems~1 and~2), which reveal an inherent trade-off. A larger $k$ reduces the approximation error in estimating $\widehat{\eta}^{\pi}$, but at the same time, it increases the difficulty of accurately estimating the off-policy weights and maintaining the approximate independence of calibration samples particularly under substantial distributional shifts. Empirically, we find that choosing $k=2$ or $3$ provides a good balance between these competing factors.

\begin{table}[htbp]
\centering
\caption{Coverage (cov) and average length (len) for different $k$ under on-policy and off-policy settings with $\xi=0.8$. Standard errors are shown in parentheses.}
\label{tab:onoffpolicy}
\begin{adjustbox}{width=\textwidth, center}
\begin{tabular}{lcccccccc}
\toprule
\textbf{on} & $k=1$ & $k=2$ & $k=3$ & $k=4$ & $k=5$ & $k=6$ & $k=7$ & $k=8$ \\
\midrule
cov  & 0.87(0.01) & 0.90(0.01) & 0.91(0.01) & 0.92(0.01) & 0.92(0.01) & 0.93(0.01) & 0.94(0.01) & 0.94(0.01) \\
len  & 7.78(0.10) & 8.24(0.10) & 8.56(0.13) & 8.78(0.14) & 9.00(0.15) & 9.15(0.19) & 9.31(0.23) & 9.50(0.22) \\
\midrule
\textbf{off} & $k=1$ & $k=2$ & $k=3$ & $k=4$ & $k=5$ & $k=6$ & $k=7$ & $k=8$ \\
\midrule
cov  & 0.87(0.01) & 0.91(0.01) & 0.92(0.01) & 0.92(0.01) & 0.93(0.01) & 0.93(0.01) & 0.93(0.01) & 0.94(0.02) \\
len & 7.57(0.10) & 8.13(0.11) & 8.47(0.14) & 8.67(0.14) & 8.90(0.17) & 9.02(0.18) & 9.20(0.18) & 9.26(0.20) \\
\bottomrule
\end{tabular}\label{tab:choice-of-k}
\end{adjustbox}
\end{table}

\noindent\textbf{Influence of $\xi$.} We conduct experiments for Example 1 with $\xi$ varying from $0.1$ to $0.9$ and $k = 2, 3, 4$. Each setting is repeated 100 times, and we report the mean and standard deviation of the coverage probability (cov) and interval length (len) at the nominal $90\%$ coverage level, as shown in Table \ref{tab:example1_results}. The results show that smaller $\xi$ and larger $k$ tend to cause overcoverage, whereas settings with $\xi \ge 0.5$ and $k = 2, 3$ generally achieve satisfactory performance.

\begin{table}[htbp]
\centering
\small 
\caption{Coverage probability (cov) and interval length (len) for different $\xi$ under on-policy and off-policy settings. Standard errors are shown in parentheses.}
\label{tab:example1_results}
\begin{tabular}{l ccc ccc}
\toprule
\textbf{on} & \multicolumn{3}{c}{{cov}} & \multicolumn{3}{c}{{len}} \\
$\xi$ & $k=2$ & $k=3$ & $k=4$ & $k=2$ & $k=3$ & $k=4$ \\
\midrule
0.1 & 0.95(0.01) & 0.96(0.01) & 0.96(0.01) & 10.21(0.20) & 10.71(0.21) & 11.04(0.26) \\
0.2 & 0.95(0.01) & 0.95(0.01) & 0.95(0.01) & 9.68(0.15) & 10.10(0.16) & 10.40(0.17) \\
0.3 & 0.94(0.01) & 0.95(0.01) & 0.95(0.01) & 9.30(0.12) & 9.67(0.14) & 9.95(0.16) \\
0.4 & 0.92(0.01) & 0.94(0.01) & 0.95(0.01) & 8.98(0.10) & 9.34(0.13) & 9.62(0.15) \\
0.5 & 0.92(0.01) & 0.93(0.01) & 0.94(0.01) & 8.73(0.08) & 9.07(0.13) & 9.33(0.15) \\
0.6 & 0.91(0.01) & 0.92(0.01) & 0.93(0.01) & 8.53(0.09) & 8.87(0.13) & 9.09(0.16) \\
0.7 & 0.91(0.01) & 0.92(0.01) & 0.92(0.01) & 8.37(0.09) & 8.69(0.12) & 8.92(0.14) \\
0.8 & 0.90(0.01) & 0.91(0.01) & 0.92(0.01) & 8.24(0.10) & 8.56(0.13) & 8.78(0.14) \\
0.9 & 0.90(0.01) & 0.91(0.01) & 0.92(0.01) & 8.20(0.12) & 8.51(0.14) & 8.72(0.16) \\
\midrule
\textbf{off} & \multicolumn{3}{c}{{cov}} & \multicolumn{3}{c}{{len}} \\
$\xi$ & $k=2$ & $k=3$ & $k=4$ & $k=2$ & $k=3$ & $k=4$ \\
\midrule
0.1 & 0.96(0.01) & 0.96(0.01) & 0.97(0.01) & 10.17(0.19) & 10.68(0.22) & 10.95(0.30) \\
0.2 & 0.95(0.01) & 0.95(0.01) & 0.96(0.01) & 9.62(0.15) & 10.08(0.17) & 10.33(0.20) \\
0.3 & 0.94(0.01) & 0.95(0.01) & 0.96(0.01) & 9.24(0.12) & 9.64(0.15) & 9.90(0.17) \\
0.4 & 0.93(0.01) & 0.94(0.01) & 0.95(0.02) & 8.93(0.10) & 9.30(0.13) & 9.57(0.15) \\
0.5 & 0.92(0.01) & 0.93(0.01) & 0.94(0.01) & 8.68(0.11) & 9.03(0.14) & 9.28(0.15) \\
0.6 & 0.92(0.01) & 0.93(0.01) & 0.93(0.01) & 8.43(0.11) & 8.78(0.14) & 8.99(0.14) \\
0.7 & 0.91(0.01) & 0.92(0.01) & 0.93(0.01) & 8.26(0.11) & 8.61(0.13) & 8.82(0.14) \\
0.8 & 0.91(0.01) & 0.92(0.01) & 0.92(0.01) & 8.13(0.11) & 8.47(0.14) & 8.67(0.14) \\
0.9 & 0.91(0.01) & 0.92(0.01) & 0.92(0.01) & 8.07(0.13) & 8.41(0.16) & 8.61(0.15) \\
\bottomrule
\end{tabular}
\end{table}

\noindent\textbf{Comparison with \cite{foffano2023conformal}.} We compare the performance of our method and that of \cite{foffano2023conformal} in the off-policy setting for Example 1 with a fixed horizon of 20. For Foffano’s method, we follow their gradient-based approach to train the likelihood ratio model $w(x, y)$ via linear regression and apply WCP to construct prediction intervals. For our method, we replace the nonconformity score with the double-quantile (DQ) score from \cite{foffano2023conformal}, setting $\xi$ to 0.5 and 0.6, and $k$ to 2 and 3. To better accommodate the DQ score, we employ the interval aggregation technique proposed by \cite{meinshausen2009p}. Each experiment is repeated 100 times, with the nominal coverage level fixed at $90\%$. The results, shown in Figure~\ref{fig:ex-com}, indicate that our method achieves superior performance in terms of both coverage probability and average interval length.

\begin{figure}[ht]
	\centering
	\subfigure[Coverage probability]{\includegraphics[width=.35\columnwidth]{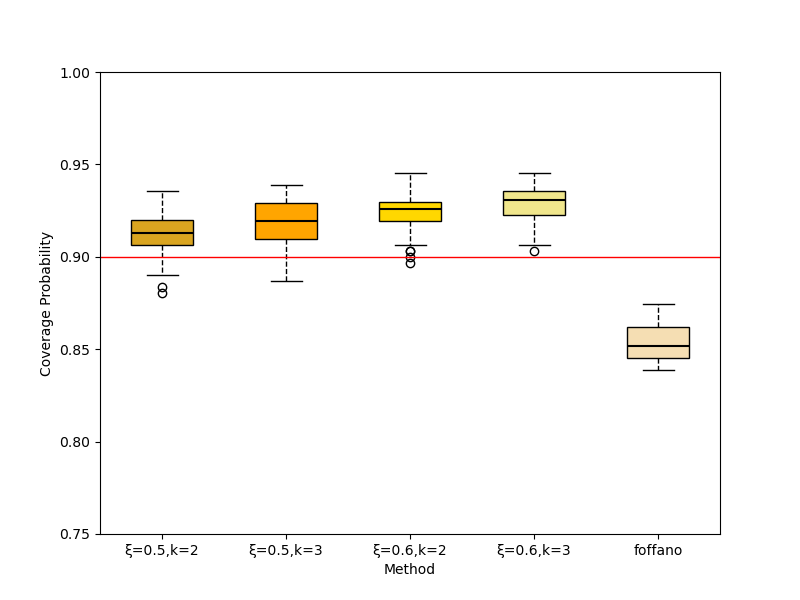}}
    \subfigure[Average length]{\includegraphics[width=.35\columnwidth]{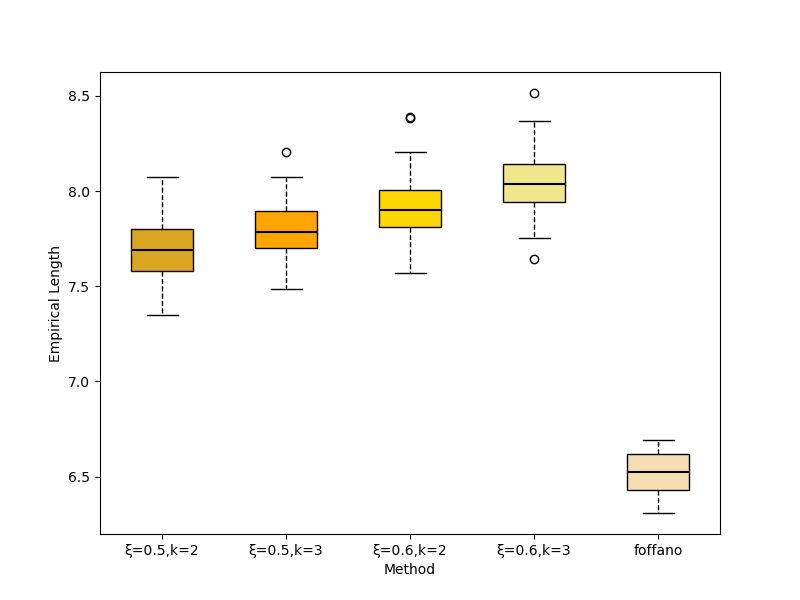}}
    \caption{{Coverage probability and average interval length at the 90\% level for the proposed method with $\xi=0.5,0.6$ and $k=2,3$ (from left to right) and Foffano's method (rightmost).}}\label{fig:ex-com}

\end{figure}

\paragraph{Example 2.} The state space is continuous in this setting. To apply the QTD algorithm, we train a quantile network with 20 quantile levels. The input to the network is the state, and the architecture consists of three layers with 32 hidden neurons and 40 output units, each corresponding to a specific quantile level for a given state-action pair. The behavior policy is estimated using a separate neural network with architecture $2 \rightarrow 32 \rightarrow 32 \rightarrow 2$, where the outputs represent the action probabilities. Following the QR-DQN algorithm in \cite{dabney2018distributional}, we replace the quantile regression loss with the Huber quantile loss to improve stability.

The importance weights are estimated using logistic regression. The hyperparameter \(\xi\), which governs the aggregation of multiple PIs, is selected via grid-based cross-fitting since simulations allow us to generate trajectories with sufficiently large $T$ to get accurate return. We set the number of aggregated intervals to \(B = 50\), with each interval constructed from a subsample of 200 tuples drawn from the calibration dataset. We repeat the experiment over 100 simulation runs and report boxplots of the empirical coverage probabilities and the average lengths of the resulting PIs. The nominal coverage level is fixed at 90\%.

%In experiment 2, the former approach fails since the state space is continuous. Instead, following the idea of QR-DQN(dabney2017), we train a quantile network for every state with output matrix in the form of action card*number(2*20) of quantiles to simulate $\theta(s, a, i)$ with structure $2\rightarrow 32\rightarrow 32\rightarrow 40$. We also estimate behavior policy via softmax(logistic function) layer with network structure $2\rightarrow 32\rightarrow 32\rightarrow 2(action\ card)$. In practice, we perform polyak averaging and warm-up+cosine annealing learning rate scheduler to boost training. As introduced in dabney2017, similar to quantile regression loss, we consider the pinball loss $L_{\theta}  =\mathbb{E}_X[|X-\theta||\tau-\delta_{X<\theta}|] =\mathbb{E}_X[\rho_\tau(X-\theta)]$ , where $\rho_{\tau}(u)=|u|\left|\tau-\delta_{u<0}\right|$ and we change $|u|$ into Huber loss for better optimization. $\theta$ can be any quantile of $G^\pi(s,a)$ and $X$ is $r(s,a)+\gamma \theta (s_next,a_next,j)$, a rough realization of $\hat{G}^\pi(s,a)$. We take average over $j=1,\ldots,m$ to approximate expectation. In the training process, after storing tuples in replay buffer, we sample a batch from replay buffer in each iteration step and calculate its total loss. In off-policy setting, we replace $a_{next}$ with pseudo-sample from target policy which is consistent to our approach in the experiment 1. The remaining procedure of the experiment is the same as before. 

\paragraph{Example 3.} Mountain car is a classic RL control problem.  We first use RBF-based feature engineering to search for a suboptimal policy denoted by $\pi_{Q}$ via Q-learning. To better illustrate that our proposal is a wrapper, we apply kernel density estimation to approximate the return distribution from Monte Carlo rollouts. The discount factor $\gamma$ is set to 0.99. The remaining procedure of the experiment is the same as Example 2. We set the number of aggregated intervals to \(B = 50\), with each interval constructed from a subsample of 200 tuples drawn from the calibration dataset. We repeat the experiment over 50 simulation runs and report boxplots of the empirical coverage probabilities and the average lengths of the resulting PIs. The nominal coverage level is fixed at 90\%.

Figure \ref{fig:ex3} presents the results for both on-policy and off-policy settings in Example 3. These experiments demonstrate that our proposed method consistently outperforms the kernel-density-based approach, even when the kernel density is estimated using Monte Carlo rollouts under the target policy. Notably, all intervals exhibit greater variance compared to those in Examples 1 and 2. This increased variance arises from the challenging nature of the environment, where the agent receives a constant reward of -1 until reaching the goal (the flag). As a result, the immediate reward provides limited information, making learning and accurate value estimation more difficult. 

\begin{figure}[ht]
	\centering
	\subfigure[Coverage probability]{
		\begin{minipage}[]{0.8\columnwidth}
			\centering
			\includegraphics[width=.3\columnwidth]{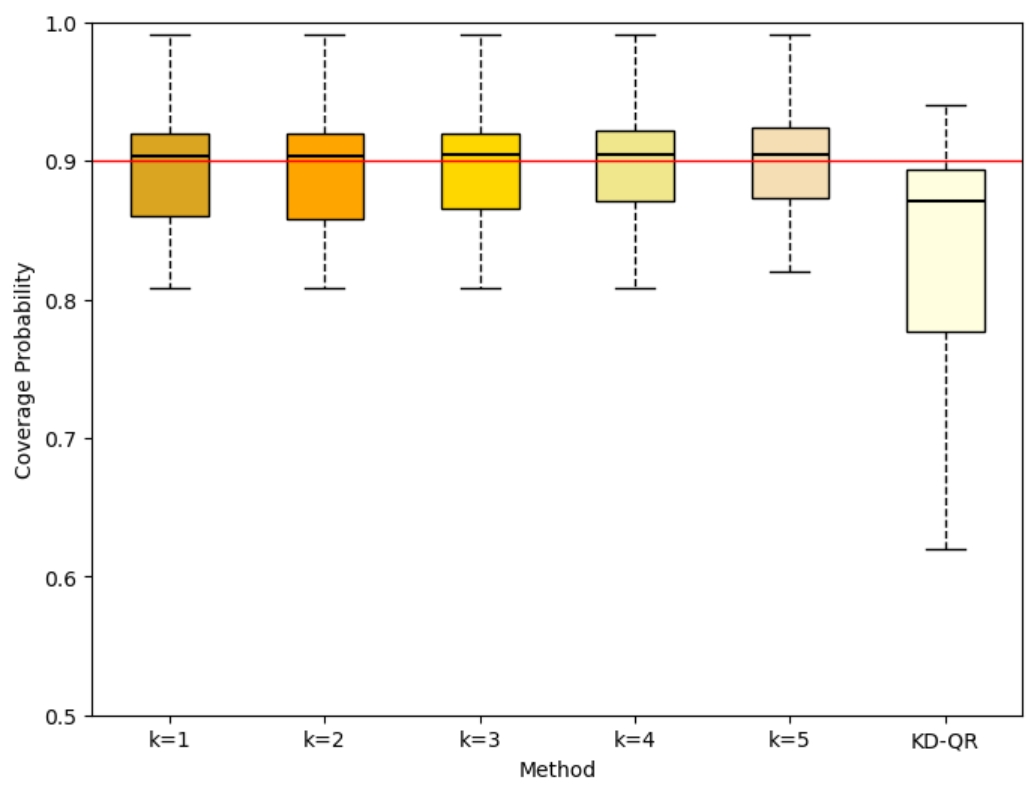}
			\includegraphics[width=.3\columnwidth]{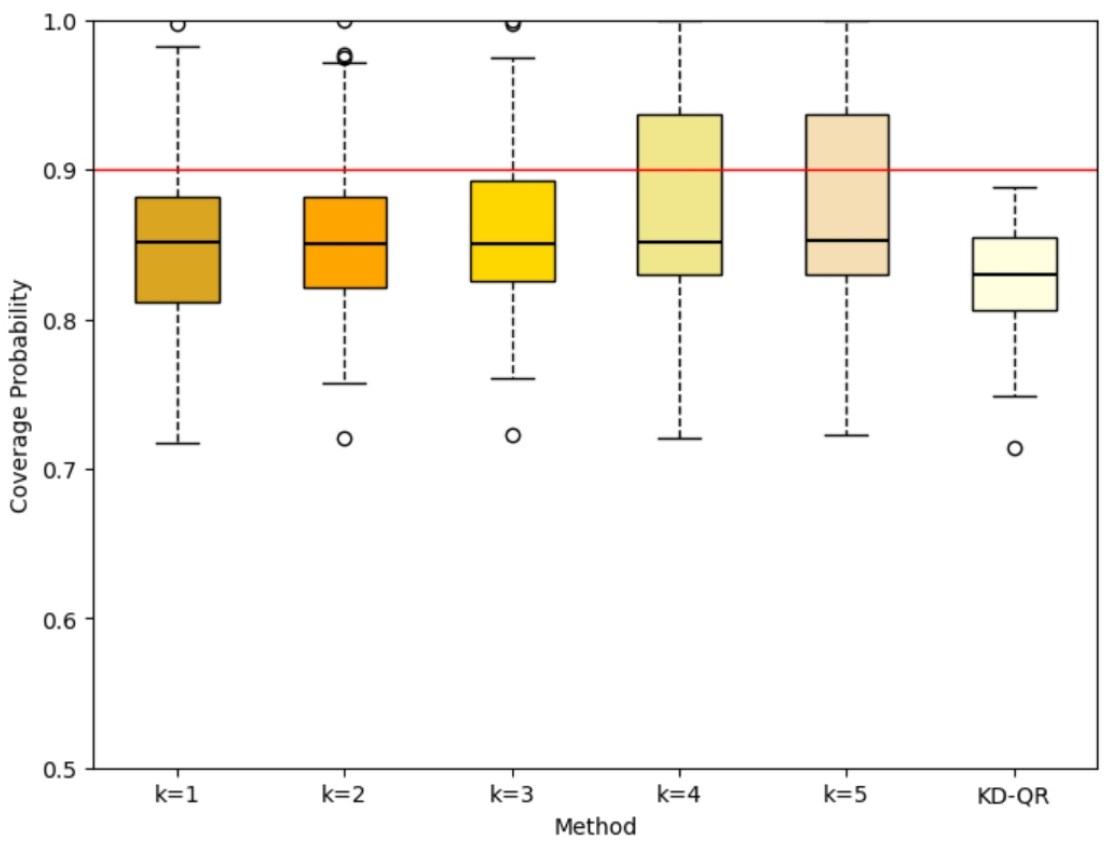}
		\end{minipage}
	}
	\subfigure[Average length]{
		\begin{minipage}[]{0.8\columnwidth}
			\centering
			\includegraphics[width=.3\columnwidth]{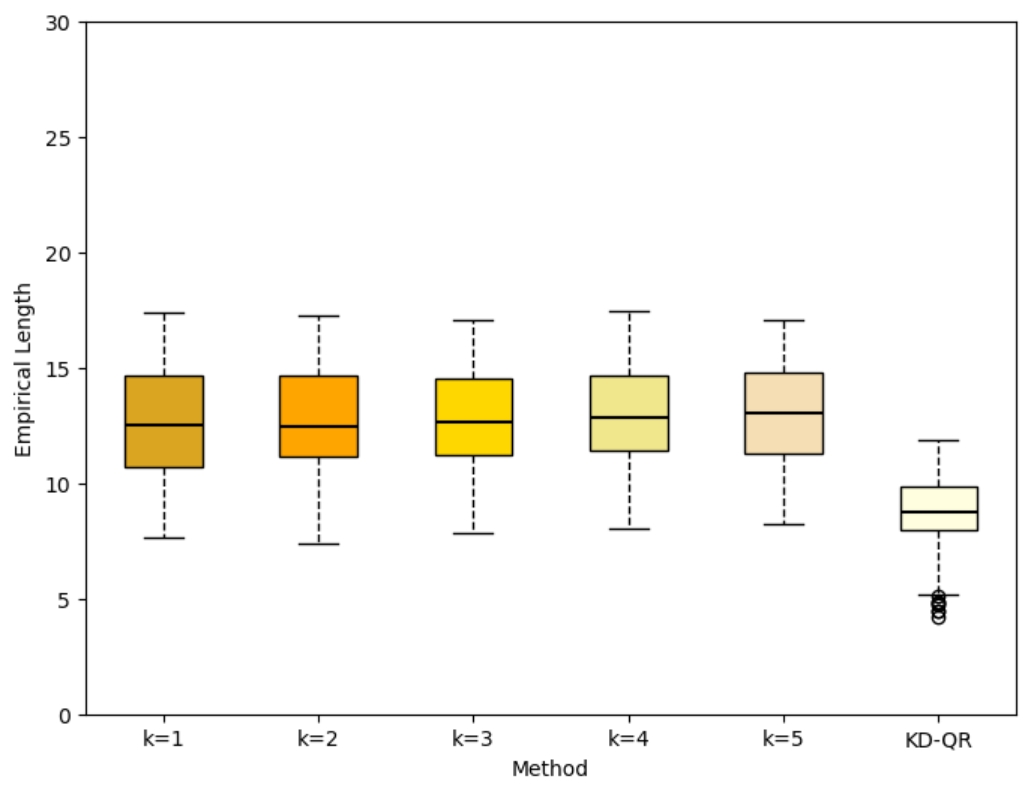}
			\includegraphics[width=.3\columnwidth]{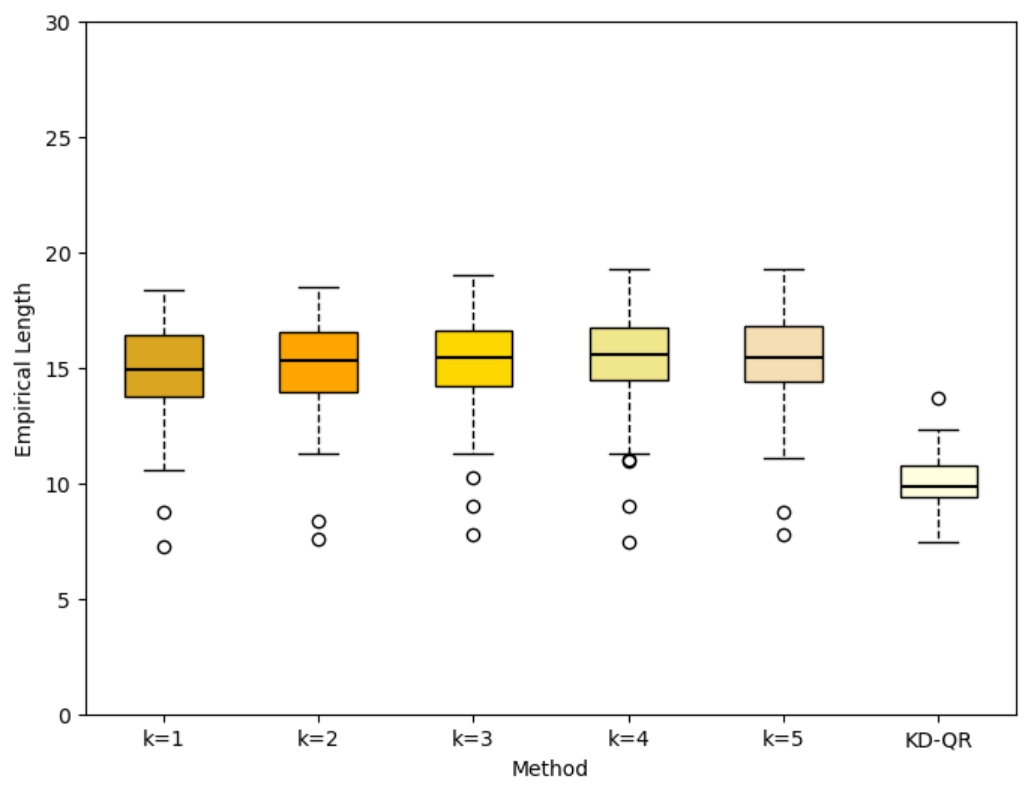}
		\end{minipage}
	}
    \caption{{Coverage probability and average interval length at the 90\% level for the proposed method with $k$-step pseudo-returns ($k = 1,\ldots,5$, from left to right) and KD-QR (rightmost), under on-policy (left) and off-policy (right) settings in Example 3.}}\label{fig:ex3}

\end{figure}

 \paragraph{Example 4.} We extend Example 1 to a high-dimensional setting with 50 states, denoted by $\mathbf{S}_{t}=(S_{1t},S_{2t},\cdots,S_{50t})^{\top}$, where each feature $S_{jt}$ for $1 \le j \le 50$ is binary, taking values $x_1$ or $x_2$. The action space is $\{0,1\}$ and only affects transitions of the first state $S_{1t}$. The remaining states independently take values $x_1$ or $x_2$ with equal probability at each time step, thereby serving as confounders. The agent, however, does not know which state is directly influenced by the action. The reward follows the same distribution as in Example 1. The behavior policy specifies transition probabilities of $0.4$ for $x_1 \rightarrow x_2$ and $0.8$ for $x_2 \rightarrow x_1$, while the target policy remains the same as in Example 1 for the off-policy setting.

 We employ quantile temporal difference (QTD) learning with linear regression and a ridge penalty to alleviate overfitting. The number of aggregated intervals is set to $B=50$ and the hyperparameter is fixed at $\xi=0.8$. Each interval is constructed from a subsample of 200 tuples drawn from 6000 calibration tuples. Experiments are conducted for $k = 1, \dots, 5$, each repeated 50 times. We report boxplots of the empirical coverage probabilities and average interval lengths in Figure~\ref{fig:ex4}, with the nominal coverage level fixed at $90\%$. The results show that our proposed method consistently outperforms the DRL-QR baseline in this high-dimensional setting.

 \begin{figure}[ht]
	\centering
	\subfigure[Coverage probability]{
		\begin{minipage}[]{0.8\columnwidth}
			\centering
			\includegraphics[width=.35\columnwidth]{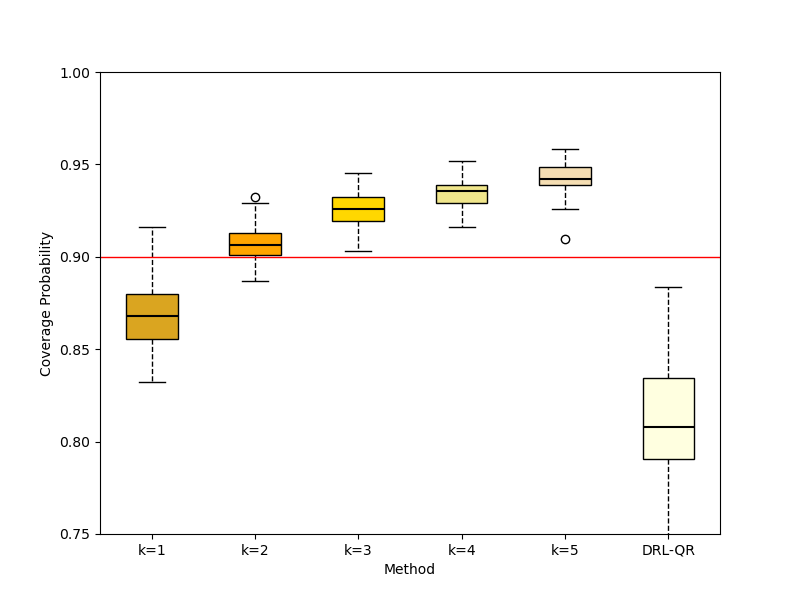}
			\includegraphics[width=.35\columnwidth]{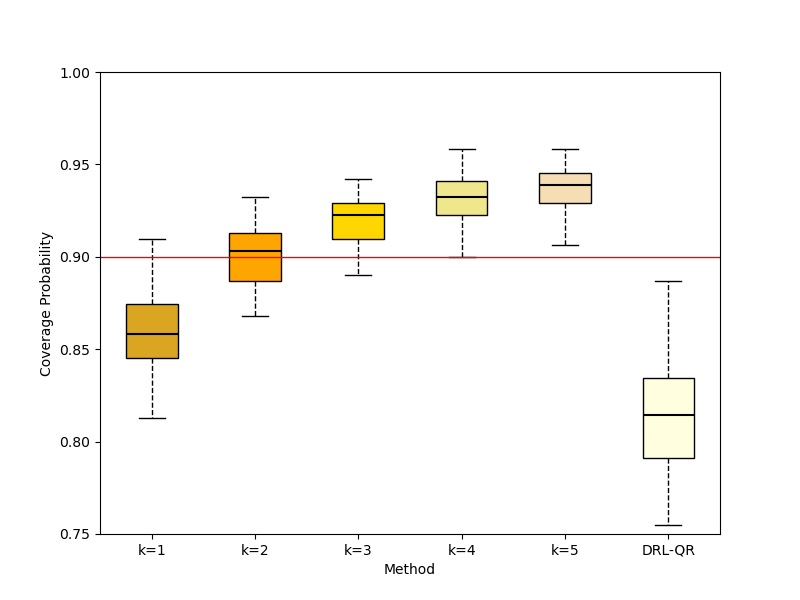}
		\end{minipage}
	}
	\subfigure[Average length]{
		\begin{minipage}[]{0.8\columnwidth}
			\centering
			\includegraphics[width=.35\columnwidth]{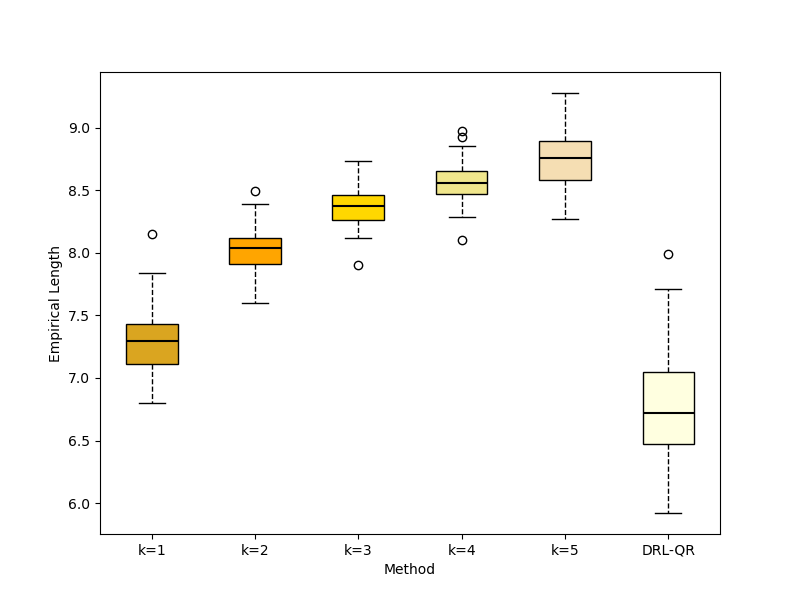}
			\includegraphics[width=.35\columnwidth]{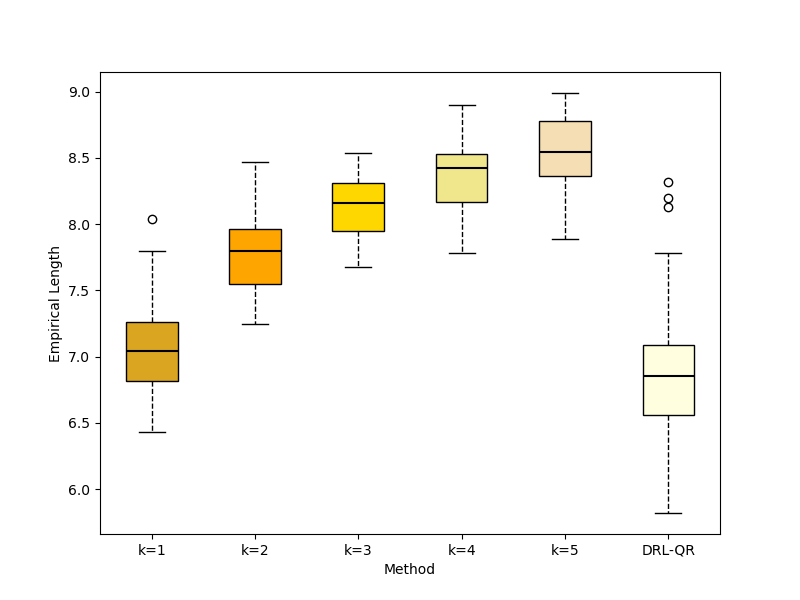}
		\end{minipage}
	}
    \caption{{Coverage probability and average interval length at the 90\% level for the proposed method with $k$-step pseudo-returns ($k = 1,\ldots,5$, from left to right) and DRL-QR (rightmost), under on-policy (left) and off-policy (right) settings in Example 4.}}\label{fig:ex4}

\end{figure}

\end{document}